\documentclass[10pt]{article}

\usepackage[utf8]{inputenc} 
\usepackage[T1]{fontenc}    
\usepackage[numbers, compress]{natbib}
\usepackage{amsmath,amssymb,amsthm,mathtools}
\usepackage{amsfonts}

\usepackage{amsmath,amsfonts,bm}

\def\eqref#1{equation~\ref{#1}}

\def\1{\bm{1}}

\DeclareMathAlphabet{\mathsfit}{\encodingdefault}{\sfdefault}{m}{sl}
\SetMathAlphabet{\mathsfit}{bold}{\encodingdefault}{\sfdefault}{bx}{n}

\newcommand{\E}{\mathbb{E}}

\newcommand{\R}{\mathbb{R}}

\usepackage{url}
\usepackage{hyperref}
\usepackage{cleveref}
\usepackage{bbm}
\usepackage{array}
\usepackage{booktabs}
\usepackage{enumitem}
\usepackage{algorithm}
\usepackage{algpseudocode}
\usepackage{todonotes}
\usepackage{subcaption}
\usepackage{graphicx}
\usepackage{nicefrac}
\usepackage{microtype}
\usepackage{xcolor}
\usepackage{tabularx}
\usepackage{pgfplots}
\pgfplotsset{compat=1.15}
\usepackage{setspace}
\usepackage{geometry}
\usepackage{appendix}

\newcommand{\blue}[1]{\textcolor{blue}{#1}}

\newcommand{\cS}{\mathcal{S}}
\newcommand{\cA}{\mathcal{A}}

\newcommand{\cB}{\mathcal{B}}
\newcommand{\cF}{\mathcal{F}}
\newcommand{\cH}{\mathcal{H}}

\newcommand{\cP}{\mathcal{P}}
\newcommand{\cR}{\mathcal{R}}

\newcommand{\V}{\mathbb{V}}

\newcommand{\set}[1]{\left\{{#1}\right\}}
\newcommand{\bracket}[1]{\left( #1 \right)}
\newcommand{\sqbk}[1]{\left[ #1 \right]}

\newtheorem{proposition}{Proposition}
\newtheorem{theorem}{Theorem}
\newtheorem{lemma}{Lemma}
\newtheorem{corollary}{Corollary}
\newtheorem{remark}{Remark}

\title{Optimal Multi-Reward Reinforcement Learning}

\author{%
  Zijun Chen\\
  Department of Computer Science and Engineering \\
  Hong Kong University of Science and Technology \\
  \texttt{zchendg@connect.ust.hk} \\
  \and
  Zihan Zhang\thanks{Corresponding author}\\
  Department of Computer Science and Engineering \\
  Hong Kong University of Science and Technology \\
  \texttt{zsubfunc@outlook.com} \\
}

\begin{document}

\maketitle

\begin{abstract}
We study an unknown-transition finite-horizon Markov decision process (MDP) with a finite collection of known reward functions $\{r^1, r^2, \ldots, r^M\}$. The goal is to output an $\epsilon$-optimal policy for every reward using online episodic interaction only. Performance is measured by the policy error $V_{0}^{*, m} - V_{0}^{\widehat\pi^{m}, m}$ where $m\in [M]$ represents the reward function and $V_{0}^{*, m}=\mathbb{E}_{s_1\sim \mu}[V_{1}^{*, m}(s_1)]$. Under this setting, we design a provably efficient algorithm to establish a minimax sample complexity bound of
$$
O\bracket{\frac{SAH^3}{\epsilon^2}\log M \mathrm{polylog}\left(\frac{SAH\log M}{\min\set{\epsilon, 1}\delta}\right)}$$
episodes, with no additional burn-in cost. This matches the information-theoretic lower bound up to a factor of $  \mathrm{polylog}(SAH\log M/(\min\set{\epsilon, 1}\delta))$.
Our method combines three technical ingredients. First, we adapt \textsc{MVP}~\citep{zhang2024settling} to reward-switching learning to construct optimistic value estimates. Second, we use fresh replay samples to conservatively evaluate the candidate policies. Third, gap-based multiplicative weights updates adjust the reward-sampling distribution using the differences between these estimates, converting weighted learning progress into simultaneous guarantees for all rewards.
\end{abstract}

\section{Introduction}\label{sec:intro}

Reinforcement learning (RL) studies sequential decision-making problems in which an agent interacts with an environment, observes the resulting states, and chooses actions to maximize a cumulative reward. In the standard finite-horizon formulation, the environment is modeled as a Markov decision process (MDP) with finite state-action space $\mathcal{S}\times \mathcal{A}$ and horizon $H$.
Under this setting, a substantial literature has developed optimistic and variance-sensitive algorithms with near-optimal regret and sample-complexity guarantees \citep{azar2017minimax,jin2018qlearningprovablyefficient,zhang2024settling}.

However, these results typically focus on a single reward function, an assumption that is restrictive in many applications. A fixed environment may be evaluated according to several objectives, such as safety, efficiency, fairness, or user preference, while the underlying transition dynamics remain unchanged. For example, a recommender system may be required to optimize different notions of user utility, and a control system may need to support multiple task specifications without relearning the environment from scratch. The same phenomenon appears in reward-mixing MDPs and contextual decision processes, where a common environment is used to study multiple objectives or task-dependent criteria \citep{hallak2015contextual}. Since all reward functions share the same transition kernel, trajectories collected for one objective may contain useful information for the others. Exploiting this shared structure is therefore both statistically natural and practically important.

At the other extreme, reward-free or reward-agnostic exploration aims to collect data that supports policy learning for rewards that are unknown during exploration or specified only afterward \citep{jin2020reward,li2026minimax}. Such formulations require broad coverage of the MDP and can be more demanding than the setting considered here. We study an intermediate but fundamental problem: the learner is given $M$ different reward functions in advance and may use them during exploration, but it must eventually return an approximately optimal policy for every reward. A naive approach that runs a separate RL algorithm for each reward incurs a factor proportional to $M$, whereas an overly reward-agnostic strategy may collect substantially more data than is necessary for the given reward class.

Recent work by \citet{zhang2024settling} established that a near-optimal policy for a single reward function can be learned with the minimax-optimal sample complexity
$\widetilde{O}(SAH^3/\epsilon^2)$, without  additional burn-in terms. This naturally raises the following fundamental question:
\begin{center}
\emph{Can we simultaneously learn near-optimal policies for $M$ different reward functions using
$\widetilde{O}\bigl(
\frac{SAH^3\log M}{\epsilon^2}
\bigr)$ episodes, while still avoiding any additional burn-in terms?}
\end{center}

\subsection{Our Contribution}
We answer this question affirmatively.  More precisely, we design an online algorithm such that: for any fixed failure probability $\delta\in (0,1)$ and error threshold $\epsilon\in (0,H]$, with probability at least $1-\delta$, the algorithm returns an $\epsilon$-optimal policy for every reward function using
\[
O\!\left(
\frac{SAH^3}{\epsilon^2}
\log M\,
\mathrm{polylog}\!\left(
\frac{SAH\log M}{\min\set{\epsilon, 1}\delta}
\right)
\right)
\]
episodes (see the formal statement in Theorem~\ref{thm:adv_sample_complexity}).
This sample-complexity bound shows that policies for multiple reward functions can be learned jointly and efficiently, with at most an additional $\log M$ factor over the minimax-optimal sample complexity for a single reward function.
On the other hand, we adapt the recent lower-bound construction for
reward-agnostic reinforcement learning \citet{ridel2026improvedboundsrewardagnosticrewardfree} to show that
any algorithm for simultaneously learning near-optimal policies for $M$ reward functions requires
$\Omega\!\left(SAH^3\log(M)/\epsilon^2\right)$ exploration episodes; see
Appendix~\ref{app:lb} for the detailed construction. Consequently, when $\log M \leq S$, our upper bound matches
the lower bound in Corollary~\ref{cor:finite-known-reward-lb-M}
up to a multiplicative factor of
$\mathrm{polylog}(SAH\log M/(\min\set{\epsilon,1}\delta))$.

\subsection{Related Works}

\paragraph{Tabular MDPs.}
A large body of theoretical work on online reinforcement learning considers tabular MDPs in which all episodes share the same transition kernel, reward function, and action space. Over the past several decades, substantial progress has been made toward improving the sample efficiency of both model-based and model-free algorithms. Representative results include regret and PAC guarantees based on optimism, posterior sampling, and variance-aware exploration
\citep{zhang2022near,agrawal2017posterior,
szita2010model,dann2017unifying,
domingues2021episodic,azar2017minimax,
efroni2019tight,neu2020unifyingviewoptimismepisodic},
as well as sample-efficient model-free methods
\citep{jin2018qlearningprovablyefficient,dong2019q,
menard2021ucb,
li2021breaking,
li2022settling,
wang2020longhorizonreinforcementlearning}.
Another line of work develops problem-dependent and gap-dependent guarantees
\citep{zanette2019tighter,
simchowitz2019nonasymptoticgapdependentregretbounds,
pmlr-v130-yang21b,xu2021finegrainedgapdependentboundstabular,
NEURIPS2025_a2fb8d64}.
Most closely related to our analysis, \citet{zhang2024settling} established the all-range minimax regret bound $
\widetilde{O}(
\min\{\sqrt{SAH^3K},\,HK\}
)$, 
which implies the minimax-optimal sample complexity
$\widetilde{O}(SAH^3/\epsilon^2)$ without an additional burn-in term.

\paragraph{Reward-agnostic reinforcement learning.} 
In reward-agnostic RL, the learner explores the environment without
access to the downstream reward functions, which belong to a fixed class of
cardinality $M$ chosen independently of the exploration data. This framework
therefore naturally extends to the simultaneous learning of policies for
multiple rewards. In this setting,
\citet{zhang2020taskagnostic} first established a sample-complexity bound of
\(\widetilde{O}\!\left(SAH^5\log(M)/\epsilon^2\right)\).
Subsequent work by
\citet{li2026minimax}
improved the dependence on the horizon, obtaining
\(\widetilde{O}\!\left(SAH^3\mathrm{polylog}(M)/\epsilon^2
+ S^4A^4H^6\mathrm{polylog}(M)/\epsilon\right)\)
for reward classes of polynomial cardinality. More recently,
\citet{ridel2026improvedboundsrewardagnosticrewardfree} further improved the burn-in dependence, yielding
a sample-complexity bound of
\(\widetilde{O}\!\left(SAH^3\mathrm{polylog}(M)/\epsilon^2
+ S^2AH^4\mathrm{polylog}(M)/\epsilon\right)\).\footnote{The guarantees of
\cite{li2026minimax,
ridel2026improvedboundsrewardagnosticrewardfree} cover fixed reward classes of cardinality polynomial in $S,A,H$.
We write $\mathrm{polylog}(M)$ conservatively: replacing $\delta$ by $\delta/M$
inherits the original powers of the confidence logarithms, which need not equal one.}
Thus, for fixed reward classes of polynomial cardinality, the improved
reward-agnostic bound has a leading term of
$\widetilde{O}\!\left(SAH^3\mathrm{polylog}(M)/\epsilon^2\right)$
and an additional burn-in term of order
$\widetilde{O}\!\left(S^2AH^4\mathrm{polylog}(M)/\epsilon\right)$. We refer the reader to Table~\ref{table1} for a detailed comparison.

\begin{table*}[t]
    \centering
    \caption{Comparison of online episodic sample-complexity
    bounds for learning near-optimal policies for $M$ reward functions.}
    \label{table1}
    \renewcommand{\arraystretch}{2}
    \begin{tabular}{lll}
        \hline
        Reference & Sample complexity & Comment \\
        \hline

        \cite{jin2020reward}
        &
        $\widetilde{O}\!\left(
            \frac{S^2AH^5}{\epsilon^2}
            + \frac{S^4AH^7}{\epsilon}
        \right)$
        &
        Reward-free exploration.
        \\

        \cite{zhang2020taskagnostic}
        &
        $\widetilde{O}\!\left(
            \frac{SAH^5\log M}{\epsilon^2}
        \right)$
        &
        --
        \\
        \cite{menard2021fast}
        &
        $\widetilde{O}\!\left(
            \frac{S^2AH^3}{\epsilon^2}

        \right)$
        &
        Reward-free exploration.
        \\
        \cite{li2026minimax}
        &
        $\widetilde{O}\!\left(
            \frac{SAH^3\log M}{\epsilon^2}
            + \frac{S^4A^4H^6\log M}{\epsilon}
        \right)$
        &
        --
        \\

        \cite{ridel2026improvedboundsrewardagnosticrewardfree}
        &
        $\widetilde{O}\!\left(
            \frac{SAH^3\log M}{\epsilon^2}
            + \frac{S^2AH^4\log M}{\epsilon}
        \right)$
        &
        --
        \\

        \textbf{This work}
        &
        \blue{$\widetilde{O}\!\left(
            \frac{SAH^3\log M}{\epsilon^2}
        \right)$}
        &
        No burn-in terms.
        \\

        \hline

        Lower bound
        &
        {$\tilde{\Omega}\!\left(
            \frac{SAH^3\min\set{\log M, S}}{\epsilon^2}
        \right)$}
        &
        See Appendix~\ref{app:lb}.
        \\

        \hline
    \end{tabular}

    \medskip
    The lower-bound construction adapts those of
    \citet{jin2020reward,ridel2026improvedboundsrewardagnosticrewardfree}.
\end{table*}

\section{Preliminaries}
\label{sec:preliminary}
In this section, we introduce the basics of finite-horizon MDPs, as well as the basic episode setting.

\paragraph{Basics of finite-horizon MDPs.} This paper focuses on time-inhomogeneous finite-horizon MDPs. Let $\cS$ be a finite state space with $|\cS| = S$, and let $\cA$ be a finite action space with $|\cA|= A$, and let $H$ be the horizon. The notation $P=\set{P_h(\cdot |s,a)}_{(h,s,a)\in [H]\times\cS\times \cA}$ denotes the transition kernel. If action $a$ is taken, then the state at the next step $h+1$ of the environment is randomly drawn from $P_{h,s,a}:=P_h(\cdot |s,a)\in \Delta(\cS)$ and receives the reward $r_h(s,a)\in [0,1]$
instantaneously. A policy $\pi = \{\pi_h\}_{h=1}^H$ consists of functions that map each state to a probability distribution over the action space. When $\pi$ is deterministic, we write $\pi_h(s)$ for the action taken at state $s$ in layer $h$. We also allow episode-level mixtures of such policies.

\paragraph{Multi-Reward Setting.} We consider the multi-reward setting with $M$ known reward functions. Specifically, the reward function is denoted by $r^{m}=\{r_{h}^{m}(s,a)\}_{(h,s,a)\in [H]\times \cS\times \cA}$ where $r_{h}^{m}(s,a)\in [0,1]$ for all $(h,s,a,m)\in [H]\times \cS\times \cA\times [M]$. All rewards share the same transition kernel of the underlying finite-horizon MDP. We state our results for $M\ge2$; the single-reward case is covered by \citet{zhang2024settling}.

\paragraph{Value function and Q-function.} Define the reward-specific value function $V^{\pi, m}$ and the Q-function $Q^{\pi, m}$ as
\begin{align*}
    V^{\pi, m}_{h}(s) :=& \E^{\pi, m}\sqbk{\sum_{h'=h}^{H} r_{h'}^{m}(s_{h'}, a_{h'}) | s_h = s}, &\forall& (h,s)\in [H]\times \cS\\
    Q^{\pi, m}_{h}(s,a) :=& \E^{\pi, m}\sqbk{\sum_{h'=h}^{H} r_{h'}^{m}(s_{h'}, a_{h'}) | s_h = s, a_h = a}, &\forall& (h,s,a)\in [H]\times \cS\times\cA
\end{align*}
where the expectation $\E^{\pi ,m}\sqbk{\cdot}$ is taken over the randomness of an episode $\set{(s_h, a_h)}_{h=1}^H$ generated under policy $\pi$ and reward function $r^{m}$. Denote $V_0^{\pi, m} =\mathbb{E}_{s_1\sim \mu}[V_1^{\pi,m}(s_1)] $ where $\mu$ is the initial distribution. Accordingly, we define the optimal reward-specific value function and Q-function as
\begin{align*}
    V_h^{*, m}(s):=& \max_{\pi}V_h^{\pi, m}(s), & \forall& (h,s)\in [H]\times\cS \\
    Q_h^{*, m}(s,a):=& \max_{\pi} Q_{h}^{\pi, m}(s,a), &\forall&(h,s,a)\in [H]\times \cS\times\cA
\end{align*}
and $V_0^{*,m} = \max_{\pi}V_0^{\pi,m}$.

\paragraph{The learning problem.} Let $V_{0}^{*, m}$ be the optimal expected value function subject to the reward function $r^{m}$ and initial distribution $\mu$, and $V_{0}^{\pi, m}$ be the value of $\pi$ in $r^{m}$. In the multi-reward setting, each reward function $r^m$ induces its own optimal policy. The PAC sample complexity quantifies the number of episodes required to learn an $\epsilon$-optimal policy collection $\{\widehat{\pi}^{m}\}_{m\in [M]}$ satisfying
$$
V_{0}^{*, m} - V_{0}^{\widehat{\pi}^{m}, m} \leq \epsilon, \quad \forall m\in [M]
$$
simultaneously with high probability.

\paragraph{Notations.} We denote $\V(P, X):=\langle P, X^2 \rangle -\langle P, X\rangle^2$ as the variance on random variable $X$ on distribution $P$, where $X^2$ is element-wise square of $X$. We use $[N]$ to denote the set $\{1,2,\ldots, N\}$.




\section{Algorithm}\label{sec:alg}
We introduce $\texttt{Online-Multi-Reward-Learning-with-Exponential-Weight}$ (Algorithm \ref{alg:main}), which adapts the \texttt{MVP} framework in~\citet{zhang2020reinforcement} to the multi-reward MDP setting by incorporating three novel techniques: \emph{shared-model multi-reward learning}, \emph{sealed replay}, and \emph{multiplicative weight update}. In each phase $\ell$, we generate candidate policies $\{(\pi^{\ell, k, m})_{k\in [K]}\}_{m\in [M]}$ and optimistic estimate $\hat{v}_{\ell, m}^{+}$ using $\texttt{Multi-Reward-Switching-MVP}$ (Algorithm~\ref{alg:mvp}). Then, $\texttt{Sealed-Replay}$ (Algorithm~\ref{alg:replay}) collects fresh samples to construct the retained empirical kernel $\widetilde{P}$. Finally, we evaluate the candidate policies under $\widetilde{P}$ using $\texttt{Conservative-Evaluation}$ (Algorithm~\ref{alg:pessimistic_evaluation}) to derive the conservative estimate $\hat{v}_{\ell, m}^{-}$, and use the estimate gap $\hat{v}_{\ell, m}^{+} - \hat{v}_{\ell, m}^{-}$ to update the reward sample distribution $p_{\ell}$ with multiplicative weight.

\paragraph{Shared-model multi-reward learning.} At the beginning of phase $\ell$, we fix the reward-sampling distribution $p_{\ell}$ and run $\texttt{Multi-Reward-Switching-MVP}$, the multi-reward MVP subroutine (Algorithm~\ref{alg:mvp}), for $K$ episodes. At the beginning of each episode $k$, following prior work~\citet{zhang2024settling}, the subroutine first computes, for every reward index $m\in[M]$, an optimistic value function $\overline{V}_{1}^{\ell,k,m}$ and the corresponding greedy counterfactual policy $\pi^{\ell,k,m}$, using the shared empirical transition kernel $\widehat{P}$ and the reward-specific exploration bonuses $b_{h}^{\ell,k,m}(s,a)$. The agent then samples a reward index $m^{\ell,k}\sim p_{\ell}$ and executes the roll-out policy $\pi^{\ell,k}:=\pi^{\ell,k,m^{\ell,k}}$.

The empirical transition model and the statistics maintained under the doubling schedule are carried over across phases, allowing learning for all rewards to benefit from previously collected transitions. The subroutine records all counterfactual policies $\{\pi^{\ell,k,m}\}_{k\in[K],\,m\in[M]}$, the roll-out policies $(\pi^{\ell,k})_{k\in[K]}$, and the initial states $(s_{1}^{\ell,k})_{k\in[K]}$. It returns the average optimistic estimates $\hat{v}_{\ell,m}^{+}$ for all $m\in[M]$ (line~\ref{line:alg:optimism}), together with the visit counts $N_{\ell,h}(s,a)$ accumulated only during the current phase.

\paragraph{Sealed replay.} In line~\ref{line:alg:main:sealed_replay}, we invoke $\texttt{Sealed-Replay}$ (Algorithm~\ref{alg:replay} in Appendix~\ref{sec:app:algcode}) to execute each of the $K$ roll-out policies $(\pi^{\ell,k})_{k\in[K]}$ collected during the learning stage for a constant $8$ times, yielding $8K$ replay episodes in total. Only tuples $(h,s,a)$ whose visit counts $N_{\ell,h}(s,a)$, accumulated during that learning stage (Algorithm~\ref{alg:mvp}), meet or exceed the threshold specified in line~\ref{line:alg:main:retained_set} are retained, denoted as $\mathcal{R}_{\ell} =\left\{ (h,s,a): N_{\ell,h}(s,a) \geq 8 \log \frac{1}{\delta''}\right\}$. For each retained tuple $(h,s,a)$, $\texttt{Sealed-Replay}$ stores the first $\widetilde N_{\ell,h}(s,a)$ replay samples, capped at $N_{\ell,h}(s,a)$. It then constructs a fresh empirical transition model $\widetilde{P}_{\ell}$ using only these replay samples. Replay samples are not used to update the learning model and are discarded at the end of phase $\ell$.

\paragraph{Conservative evaluation.} Using the sealed replay samples collected in phase $\ell$, we invoke $\texttt{Conservative-Evaluation}$ (Algorithm~\ref{alg:pessimistic_evaluation} in Appendix~\ref{sec:app:algcode}) to compute a conservative value estimate for each counterfactual policy $\pi^{\ell,k,m}$, $k\in[K]$, $m\in[M]$, under the corresponding reward function $r^m$.
For each retained row $(h,s,a)\in\cR_\ell$, starting from $\underline{V}_{H+1}^{\ell,k,m}\equiv 0$, we compute the evaluation
backup
\begin{equation}
\underline{Q}_{h}^{\ell,k,m}(s,a)
=
r_h^m(s,a)
+
\frac{\widetilde{N}_{\ell,h}(s,a)}{\widetilde{N}_{\ell,h}(s,a)+1}
\left\langle
\widetilde{P}_{\ell,h,s,a},
\underline{V}_{h+1}^{\ell,k,m}
\right\rangle.
\end{equation}
For all other triples not in $\mathcal{R}_{\ell}$, we set
$\underline{Q}_{h}^{\ell,k,m}(s,a)=r_h^m(s,a)$.
The value recursion follows the corresponding counterfactual
policy, with
$\underline{V}_{h}^{\ell,k,m}(s)
=\sum_{a\in\cA}\pi_h^{\ell,k,m}(a\mid s)
\underline{Q}_{h}^{\ell,k,m}(s,a)$.
Averaging the resulting values at the recorded initial states
yields the conservative estimate
$\hat{v}_{\ell,m}^{-}
=\frac{1}{K}\sum_{k=1}^{K}
\underline{V}_{1}^{\ell,k,m}(s_1^{\ell,k})$.

\begin{remark}
A key feature of this conservative evaluation is that it does not
subtract an explicit pessimistic bonus. Instead, it shrinks the
continuation value in each backward backup according to the visit
count of the corresponding triple $(h,s,a)$.
This design offers two advantages: (i) it keeps the mean
underestimation bias small; and
(ii) it controls upward fluctuations even when $N_{\ell,h}(s,a)$
is small, so that $\hat{v}_{\ell,m}^{-}$ cannot exceed the true value very often.
\end{remark}

\paragraph{Gap-based multiplicative weights update.}
Finally, we update the reward-sampling distribution using the gap between the optimistic and conservative estimates:
\begin{align}
    p_{\ell+1}(m) \propto p_\ell(m)\exp\!\left(\frac{\hat{v}_{\ell,m}^{+}-\hat{v}_{\ell,m}^{-}}{H}\right).\label{eq:updatep}
\end{align}
Thus, rewards with larger certificate gaps receive larger multiplicative weight increases relative to rewards with smaller gaps.

\paragraph{Output.} After $L$ phases, we return, for each $m\in[M]$, the uniform mixture of the counterfactual policies indexed by $[L]\times[K]$ (line~\ref{line:alg:mixture_policy}) $\overline{\pi}^{m} =\mathrm{Unif}((\pi^{\ell,k,m})_{\ell\in[L],\,k\in[K]})$. Each phase uses $K$ learning episodes and $8K$ replay episodes, for a total of $9KL$ episodes of environment interaction.

{\small
\begin{algorithm}[ht]
\caption{$\texttt{Online-Multi-Reward-Learning-with-Exponential-Weight}\\$$(\cS, \cA, H, K,L,\{r^{m}\}_{m\in [M]},\epsilon,\delta)$\label{alg:main}}
\begin{algorithmic}[1]
	\State {\textbf{input:} state space $\mathcal{S}$, action space $\mathcal{A}$, horizon $H$, learning episodes per phase $K$, phases $L$, reward functions $\{r^{m}\}_{m\in [M]}$, accuracy $\epsilon\in(0,H]$, confidence parameter $\delta\in(0,1)$}
    \State{Set $\delta'=\frac{\delta}{64SAHKLM}$, $\delta''=\frac{\delta}{64SAHL}$, and $p_1(m)\gets1/M$ for all $m\in[M]$.}
    \State{For all $(h,s,a,s')$, set $N^{\mathsf{all}}_h(s,a),N_h(s,a,s'),N_h(s,a)\gets0$; initialize $\widehat P_{h,s,a}\in \Delta(\cS)$.}
	\For{phase $\ell = 1, 2, \ldots, L$}
		\State {$N_\ell, \{\hat{v}_{\ell,m}^{+}\}_{m\in[M]}, (\pi^{\ell,k})_{k\in[K]}, \{(\pi^{\ell,k,m})_{k\in [K]}\}_{m\in[M]}, (s_{1}^{\ell,k})_{k\in[K]}\gets \texttt{Multi-Reward-Switching-MVP}(p_\ell)$; }\label{line:alg:main:rs_mvp}
		\State {$\cR_\ell \gets \{(h,s,a): N_{\ell,h}(s,a)\geq 8\log \frac{1}{\delta''}\}$ {\color{blue}\Comment{get the retained rows}}}\label{line:alg:main:retained_set}
		\State {$\widetilde{P}_\ell, \widetilde{N}_\ell\gets \texttt{Sealed-Replay}(\cR_\ell, N_\ell, (\pi^{\ell,k})_{k\in[K]})$;{\color{blue}\Comment{fresh replay model}}}\label{line:alg:main:sealed_replay}
        \For{$m\in[M]$}\label{line:alg:pessimistic_evaluation_starts}
		\For{$k = 1,2,\ldots,K$}
			\State{$\underline{V}_{1}^{\ell,k,m}\gets \texttt{Conservative-Evaluation}(\widetilde{P}_\ell, \widetilde{N}_\ell, \cR_\ell, \pi^{\ell,k,m}, r^{m})$;}\label{line:alg:main:pessimistic_evaluation}
		\EndFor
        \State{$\hat{v}_{\ell,m}^{-}\gets \frac{1}{K}\sum_{k=1}^{K} \underline{V}_{1}^{\ell,k,m}(s_{1}^{\ell,k})$}\label{line:alg:pessimism}
        \EndFor \label{line:alg:pessimistic_evaluation_ends}
        \State{For all $m\in[M]$, update {\color{blue}\Comment{multiplicative weight update}}
        \[
        p_{\ell+1}(m)\gets
        \frac{p_\ell(m)\exp\bigl((\hat v_{\ell,m}^{+}-\hat v_{\ell,m}^{-})/H\bigr)}
        {\sum_{j=1}^{M}p_\ell(j)\exp\bigl((\hat v_{\ell,j}^{+}-\hat v_{\ell,j}^{-})/H\bigr)}.
        \]}\label{line:alg:main:certificate}
	\EndFor
    \State{Set $\overline\pi^m\gets\mathrm{Unif}((\pi^{\ell,k,m})_{\ell\in[L],\,k\in[K]})$ for all $m\in[M]$.}\label{line:alg:mixture_policy}
    \State{\textbf{output:} $\{\overline\pi^m\}_{m\in[M]}$}
\end{algorithmic}
\end{algorithm}
}

{\small
\begin{algorithm}[ht]
\caption{$\texttt{Multi-Reward-Switching-MVP}(p)$}
\label{alg:mvp}
\small
\begin{algorithmic}[1]
    \State{\textbf{input:} reward-sampling distribution $p$ over $[M]$, $c_1=\frac{460}{9}$, $c_2=\frac{544}{9}$.
	}
	\State{\textbf{shared state:} $N^{\mathsf{all}}$, $N_h(s,a,s')$, $N_h(s,a)$, and $\widehat P$, initialized once in Algorithm~\ref{alg:main} and updated in place.}
    \State{Set all entries of $N_\ell=\{N_{\ell,h}(s,a)\}_{(h,s,a)\in[H]\times\cS\times\cA}$ to zero.}
	\For{$k=1,2,\cdots, K$}
		\State{Set $\overline{V}_{H+1}^{k, m}(s)\gets 0$ for all $s\in \cS, m \in [M]$;}
		\For{$h=H, H-1, \cdots, 1$}
			\For{$(s,a)\in \cS\times \cA$ and $m \in [M]$}
			\vspace{-0.5em}
				\small{\begin{align}
					\label{eq:update1}
				b_{h}^{k, m}(s,a) &\leftarrow c_1 \sqrt{\frac{\mathbb{V}(\widehat{P}_{h,s,a} ,\overline{V}_{h+1}^{k, m}) \log \frac{1}{\delta'} }{ \max\{N_h(s,a),1 \}}} +c_2\frac{H\log \frac{1}{\delta'}}{\max\{N_h(s,a) ,1\}  },\\
				\overline{Q}_h^{k, m}(s,a) &\leftarrow \min\big\{r_h^{m}(s,a)+\langle \widehat{P}_{h,s,a}, \overline{V}_{h+1}^{k, m} \rangle +b_{h}^{k, m}(s,a)    ,H\big\}.
			\end{align}}
			\vspace{-4ex}
            \EndFor
				\State{Set $\overline{V}_{h}^{k,m}(s)\gets\max\limits_{a\in\cA}\overline{Q}_{h}^{k,m}(s,a)$ and $\pi_h^{k,m}(s)\in\arg\max\limits_{a\in\cA}\overline{Q}_{h}^{k,m}(s,a)$ for all $(s,m)\in\cS\times [M]$.}
		\EndFor
		\State{Draw $m^k\sim p$, set $\pi^k \gets \pi^{k, m^k}$; start a fresh episode with $s_1^k\sim\mu$. {\color{blue}\Comment{roll-out policy}}}
		\For{$h=1,2,\cdots, H$}
			\State{Observe $s_{h}^{k}$, execute $a_{h}^{k} = \pi_{h}^{k}(s_{h}^{k})$, receive $r_{h}^{k}$, observe $s_{h+1}^{k}$. $(s,a,s')\gets (s_{h}^{k}, a_{h}^{k}, s_{h+1}^{k})$;}
			\State	{Update $N^{\mathsf{all}}_h(s,a) \leftarrow  N^{\mathsf{all}}_h( s,a )+1$, $N_h(s,a,s') \leftarrow N_h(s,a,s')+1$, $N_{\ell,h}(s,a)\leftarrow N_{\ell,h}(s,a)+1$.}
			\If{$N^{\mathsf{all}}_h(s,a)\in \{2^j:j=0,1,\ldots,\lfloor\log_2(KL)\rfloor\}$ \label{line:rp_update_start} }   \label{line:trigger-set}
			\State{ $N_h(s,a)\leftarrow \sum_{\widetilde{s}}N_h(s,a,\widetilde{s})$;  
			{\color{blue}\Comment{number of visits to $(h,s,a)$ in this epoch.} \label{line:Nh-update}}}
		\State{Set $\widehat{P}_{h,s,a,\tilde{s}} \leftarrow  \frac{N_h(s,a,\widetilde{s})}{N_h(s,a)}$ and $N_h(s,a,\widetilde{s})\leftarrow 0$ for all $\widetilde{s} \in \mathcal{S}$.  
  \label{line:P-hsa-update}}
		\EndIf
		\EndFor
	\EndFor
    \State {Set $\hat v_{m}^{+} \gets \frac{1}{K} \sum_{k=1}^{K} \overline{V}_{1}^{k, m}(s_1^{k})$ for all $m\in[M]$; {\color{blue}\Comment{optimistic estimate}}}\label{line:alg:optimism}
	\State {\textbf{return:} $N_\ell$, $\{\hat{v}_m^+\}_{m\in [M]}$, $(\pi^k)_{k\in [K]}$, $\{(\pi^{k, m})_{k\in [K]}\}_{m\in [M]}$, $(s_{1}^{k})_{k\in [K]}$}
\end{algorithmic}
\end{algorithm}
}

\begin{samepage}
\section{Sample Complexity Analysis}\label{sec:minimax}
In this section, we present the statement and proof sketch of the main theorem.

\begin{theorem}[Sample Complexity of Algorithm~\ref{alg:main}]\label{thm:adv_sample_complexity}
There exists a universal constant $C>0$ such that, for any $M\ge2$, $SAH\ge2$, $\delta\in(0,1)$, and $\epsilon\in(0,H]$, setting $\iota=\log^4\frac{SAH\log M}{\min\set{\epsilon,1}\delta}$, $K=\lceil C\tfrac{SAH^2\iota^{3/4}}{\epsilon}\rceil$ learning episodes per phase, and $L=\lceil C\tfrac{H\iota^{1/4}}{\epsilon}\log M\rceil$ phases in Algorithm~\ref{alg:main} ensures that, with probability at least $1-\delta$, the algorithm returns a policy collection $\{\overline{\pi}^{m}\}_{m\in[M]}$ such that $\overline{\pi}^{m}$ is $\epsilon$-optimal for every $m\in[M]$. The total number of episodes, including learning and replay, is at most
\[
9KL = O\bracket{\frac{SAH^3\iota\log M }{\epsilon^2}} = O\bracket{\frac{SAH^3\log M}{\epsilon^2} \log^4\bracket{\frac{SAH\log M}{\min\set{\epsilon, 1}\delta}}}.
\]
\end{theorem}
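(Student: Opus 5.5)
We argue on a single high-probability good event $\mathcal{E}$, assembled by union bounds over $(\ell,k,m,h,s,a)$ using $\delta'$ and $\delta''$. On $\mathcal{E}$ four facts hold. The optimistic values satisfy $\overline V_1^{\ell,k,m}\ge V_1^{*,m}$ for all indices. Two martingale concentrations hold. Sealed replay gives fresh independent samples. And every visit-count lower bound needed below holds.

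We then decompose the error of the output mixture. For each $m$,
\[
V_0^{*,m}-V_0^{\overline\pi^m,m}=\frac{1}{KL}\sum_{\ell,k}\bigl(V_0^{*,m}-V_0^{\pi^{\ell,k,m},m}\bigr)\le \frac1L\sum_\ell(\hat v^+_{\ell,m}-\hat v^-_{\ell,m}) + \Delta^{\mathrm{opt}}_m+\Delta^{\mathrm{cons}}_m .
\]
The term $\Delta^{\mathrm{opt}}_m$ controls $\frac{1}{KL}\sum_{\ell,k}\bigl(V_0^{*,m}-\overline V_1^{\ell,k,m}(s_1^{\ell,k})\bigr)\le$ a martingale term of order $H\sqrt{\log(M/\delta)/(KL)}$, using optimism and the fact that $s_1^{\ell,k}\sim\mu$ independently. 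The term $\Delta^{\mathrm{cons}}_m$ is the average of $\underline V_1^{\ell,k,m}(s_1^{\ell,k})-V_1^{\pi^{\ell,k,m},m}(s_1^{\ell,k})$. Because $\widetilde P_\ell$ is built from samples independent of the $\pi^{\ell,k,m}$ (those policies are fixed before the replay), the shrinkage factor $\widetilde N/(\widetilde N+1)$ should make $\mathbb{E}[\underline V]\le V^{\pi}$ row by row. I expect a Bernstein/Freedman argument over the phases to show the upward deviations average to $\widetilde O(H\sqrt{\log(M/\delta)/L})$, which is where the Remark's property (ii) is used. It therefore suffices to bound $\frac1L\sum_\ell g_{\ell,m}$, with $g_{\ell,m}:=\hat v^+_{\ell,m}-\hat v^-_{\ell,m}\in[0,H]$ (up to small negative parts), uniformly in $m$.

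Next comes the multiplicative-weights step. Run Hedge with gains $g_{\ell,m}/H\in[-1,1]$ and learning rate $1$. The standard potential argument, $\log\sum_j p_{\ell+1}$-type telescoping with $e^x\le 1+x+x^2$, gives for every $m$
\[
\frac1L\sum_\ell g_{\ell,m}\le \frac1L\sum_\ell \langle p_\ell,g_\ell\rangle+\frac1L\sum_\ell\langle p_\ell,g_\ell^2\rangle/H+\frac{H\log M}{L}.
\]
The last term is at most $\epsilon/C$ by the choice of $L$. The quadratic term is at most $\frac1L\sum_\ell\langle p_\ell,g_\ell\rangle$ times a constant, using $g\le H$. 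The problem thus reduces to bounding the weighted gap $\frac1L\sum_\ell\langle p_\ell,g_\ell\rangle$. Note that the learning rate $1$ being fine relies on the $\langle p,g^2\rangle\le H\langle p,g\rangle$ self-bounding, which holds only if $g\ge0$ approximately. Negative parts are absorbed into the conservative-error term.

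The main obstacle is bounding the weighted gap by the regret of the multi-reward MVP. Since $m^{\ell,k}\sim p_\ell$ independently of the history given $\overline V^{\ell,k,\cdot}$, the quantity $\langle p_\ell,g_\ell\rangle$ equals, up to a martingale, $\frac1K\sum_k\bigl(\overline V_1^{\ell,k,m^{\ell,k}}-\underline V_1^{\ell,k,m^{\ell,k}}\bigr)(s_1^{\ell,k})$. I would split this into two parts. The first is $\overline V-V^{\pi^{\ell,k}}$ on the executed reward, which is exactly the optimistic regret of a reward-switching MVP run. The second is $V^{\pi^{\ell,k}}-\underline V$ for the executed policy, a pessimism bias.

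For the first part, I would redo the \citet{zhang2024settling} analysis. Bonus sums are controlled via the doubling-epoch counts, and the total-variance lemma is applied per executed reward (each $r^m\in[0,1]$ and $\overline V\le H$, so the law of total variance still yields $O(H^2)$ per episode). Switching rewards does not break these arguments, since each episode's analysis only involves the reward actually executed. This gives a sum over all $KL$ episodes of $\widetilde O(\sqrt{SAH^3KL}+SAH^2)$ with no burn-in beyond $SAH^2\iota$.

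For the second part, the pessimism bias comes from the shrinkage factor on retained rows and from truncation on non-retained rows. Its total contribution is bounded by $H\sum_{h,s,a} d^{\pi^{\ell,k}}_h(s,a)\bigl(1/(N_{\ell,h}+1)+\mathbb{1}[(h,s,a)\notin\cR_\ell]\bigr)$. Here I use that the replay reproduces the visitation of the roll-out policies, so $\widetilde N\approx N_\ell$, together with the fact that non-retained rows are visited $O(\log(1/\delta''))$ times in total. This gives a bound of $O(SAH^2\log(1/\delta'')/K)$ per phase, plus a variance term handled as in MVP.

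Finally, we combine the bounds. The total error is $\widetilde O\bigl(\sqrt{SAH^3/(KL)}+SAH^2\iota^{3/4}/K+H\log M/L+H\sqrt{\log(M/\delta)/L}\bigr)$. With the stated $K$ and $L$ each term is at most $\epsilon/4$, for $C$ large. The $H\sqrt{\cdot/L}$ term needs care: $L\sim H\log M/\epsilon$ gives only $\sqrt{H\epsilon}$. I would therefore sharpen the $\Delta^{\mathrm{opt}}_m$ and $\Delta^{\mathrm{cons}}_m$ concentrations to be over all $KL$ episodes rather than over phases, since the $s_1$'s and the replay samples are per-episode independent. This makes them $\widetilde O(H\sqrt{\log M/(KL)})$, which is fine because $KL\ge H^3\log M/\epsilon^2$.

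The episode count $9KL$ then matches the stated bound by direct multiplication.
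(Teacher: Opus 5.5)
There is a genuine gap in how you control the conservative evaluator's \emph{upward} error, which you need twice: in $\Delta^{\mathrm{cons}}_m$ and in the negative parts of $g_{\ell,m}$ in the Hedge step.

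For $\Delta^{\mathrm{cons}}_m$, Freedman over phases with per-phase range $H$ gives only $H\sqrt{\log(M/\delta)/L}\approx\sqrt{H\epsilon}$, as you note. Your fix, concentrating over all $KL$ episodes, does not work. Conditional on $\mathcal F_\ell$, the policies $\pi^{\ell,k,m}$ and the states $s_1^{\ell,k}$ are already fixed, and $v_{\ell,m}$ uses the same $s_1^{\ell,k}$. Meanwhile all $K$ evaluations $\underline V_1^{\ell,k,m}(s_1^{\ell,k})$, for every $m$, are computed from the \emph{same} pooled kernel $\widetilde P_\ell$. So $\hat v^-_{\ell,m}-v_{\ell,m}$ is a single replay-driven fluctuation per phase, not an average of $K$ martingale differences. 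Independence of the replay episodes gives no $1/\sqrt K$ reduction without a per-phase variance analysis, which you have not supplied.

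In the Hedge step, $\langle p_\ell,g_\ell^2\rangle\le H\langle p_\ell,g_\ell\rangle$ needs $g\ge0$. Absorbing negative parts through $g^2\le H|g|$ leaves $\frac1L\sum_\ell\sum_m p_\ell(m)(\hat v^-_{\ell,m}-v_{\ell,m})_+$, which no signed cumulative bound controls. Even granting the exponential-moment bound below, $\mathbb E[(\hat v^-_{\ell,m}-v_{\ell,m})_+\mid\mathcal F_\ell]$ can be of order $\sqrt{Hb}$, with $b$ the mean bias. That is not $O(\epsilon)$ for the stated $K$. The linear term $\langle p_\ell,g_\ell\rangle$ also involves realized rather than conditional-mean $\hat v^-$, so it needs a concentration you do not have either.

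The missing idea is the one-sided exponential-moment bound $\mathbb E[\exp((\hat v^-_{\ell,m}-v_{\ell,m})/H)\mid\mathcal F_\ell]\le1$. This is exactly what the $n/(n+1)$ shrinkage buys. For i.i.d.\ $Y_i\in[0,H]$, $\mathbb E[\exp(\sum_{i\le n}Y_i/(H(n+1)))]\le\exp(\mathbb E Y_1/H)$. This propagates backward by H\"older over next states (using that $\underline V_{h+1}$ depends only on later-stage replay samples) and by Jensen over actions and over $k$. It gives two things:
\begin{itemize}
\item[(a)] The process $\exp(\sum_\ell(\hat v^-_{\ell,m}-v_{\ell,m})/H)$ is a supermartingale. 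Hence $\sum_\ell(\hat v^-_{\ell,m}-v_{\ell,m})\le H\log(64M/\delta)$ for all $m$, an additive $H\log(M/\delta)/L=O(\epsilon)$ with no square root.
\item[(b)] For $X=(\max\{\hat v^+_{\ell,m},v_{\ell,m}\}-\hat v^-_{\ell,m})/H\in[-1,1]$, one has $\mathbb E[e^{-X}\mid\mathcal F_\ell]\le1$. So $\mathbb E[X^2\mid\mathcal F_\ell]\le3\mathbb E[X\mid\mathcal F_\ell]$ and $\mathbb E[e^X\mid\mathcal F_\ell]\le1+4\mathbb E[X\mid\mathcal F_\ell]$.
\end{itemize}
The paper then treats the Hedge potential $W_\ell$ as a supermartingale in conditional expectation. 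Realized negative gaps never need pathwise control, and only $(\hat v^+_{\ell,m}-v_{\ell,m})_++v_{\ell,m}-\mathbb E[\hat v^-_{\ell,m}\mid\mathcal F_\ell]$ appears.

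Relatedly, your reduction of $\langle p_\ell,g_\ell\rangle$ to the executed reward ``up to a martingale'' is valid for the $\overline V$ part but not for the $\underline V$ part. $\underline V^{\ell,k,m}$ is computed after the replay, which depends on $m^{\ell,k}$ through $N_\ell$, $\mathcal R_\ell$ and the roll-out policies. The paper instead bounds the weighted mean bias directly through the counterfactual weighted occupancy, $D^{\mathrm{avg}}_{\ell,h,s,a}\le2N_{\ell,h}(s,a)+2\log(1/\delta'')$. Your decomposition also drops $V_1^{\pi^{\ell,k,m},m}(s_1^{\ell,k})-V_0^{\pi^{\ell,k,m},m}$, but that is a harmless Azuma term.
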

\end{samepage}
Theorem~\ref{thm:adv_sample_complexity} extends the burn-in-free sample-complexity guarantee of \textsc{MVP}~\citep{zhang2024settling} to the multi-reward setting, with optimal logarithmic dependence on $M$ up to additional logarithmic factors.\footnote{This dependence is optimal in the regime $M\lesssim \exp(S)$. For $M\gtrsim \exp(S)$, the reward-free algorithm of~\citet{menard2021fast} achieves an optimal  sample complexity of $\widetilde{O}(S^2AH^3/\epsilon^2)$ episodes.}

\paragraph{Additional notations.}
 For phase $\ell$, we write $\overline{V}^{\ell,k,m}$ and $\underline{V}^{\ell, k, m}$ for the optimistic value function $\overline{V}^{k,m}$ and conservative value function $\underline{V}^{\ell, k, m}$ when evaluating $\pi^{\ell, k, m}$ in phase $\ell$ under $r^{m}$ respectively. We use the same convention for other phase-dependent quantities, including $\overline{Q}^{\ell, k, m}$, $\underline{Q}^{\ell, k, m}$, $m^{\ell, k}$, $s^{\ell, k}_{1}$ and let $
v_{\ell,m}:=\frac1K\sum_{k=1}^K V_1^{\pi^{\ell,k,m},m}(s_1^{\ell,k}).$ 
Recall that $\hat{v}^+_{\ell,m}$ is the optimistic estimate computed in
line~\ref{line:alg:main:rs_mvp} of Algorithm~\ref{alg:main}, whereas
$\hat{v}^-_{\ell,m}$ is the conservative estimate computed in
line~\ref{line:alg:pessimism} of Algorithm~\ref{alg:main}.

\paragraph{High-level idea.}
To establish $\epsilon$-optimality for all $M$ rewards, it suffices,
up to an initial-state concentration term, to control the worst-reward
average optimistic surplus $
\max_{m\in[M]}\frac{1}{L}\sum_{\ell=1}^L
(\hat{v}^+_{\ell,m}-v_{\ell,m}).$

For this high-level discussion, we consider the high-probability
optimism event, on which $\hat{v}^+_{\ell,m}\ge v_{\ell,m}$ for all
$\ell,m$.
Our strategy is to estimate these surpluses using the gaps
$\hat{v}^+_{\ell,m}-\hat{v}^-_{\ell,m}$ and apply multiplicative weights
updates to convert gap-weighted progress into simultaneous
guarantees for every reward. Two difficulties arise along this approach.

First, accurate phase-wise pessimism is expensive.
If $\hat{v}^-_{\ell,m}\le v_{\ell,m}$ held in every phase, each
estimated gap would directly upper-bound the corresponding optimistic
surplus. A conventional approach would enforce this inequality through
pessimistic Bellman backups with confidence bonuses. However, keeping
the resulting confidence widths sufficiently small in every phase can
require substantially more replay samples than our intended per-phase
budget of $K=\widetilde{O}(SAH^2/\epsilon)$ episodes.

Second, a generic multiplicative-weights regret bound is too loose.
For arbitrary signed gaps in $[-H,H]$, the standard worst-case analysis,
with an appropriately tuned learning rate, gives
\begin{align*}
&\max_{m\in[M]}\frac{1}{L}\sum_{\ell=1}^L
\left(\hat{v}^+_{\ell,m}-\hat{v}^-_{\ell,m}\right)
-
\frac{1}{L}\sum_{\ell=1}^L\sum_{m=1}^M p_\ell(m)
\left(\hat{v}^+_{\ell,m}-\hat{v}^-_{\ell,m}\right)\lesssim H\sqrt{\frac{\log M}{L}}.
\end{align*}
This bound does not yield an $O(\epsilon)$ overhead
with $L=\widetilde{O}(H\log M/\epsilon)$ phases. 
Instead, we seek a constant-factor comparison with an additive
$O(H\log(M/\delta)/L)$ term:
\begin{align}
\max_{m\in [M]} \sum_{\ell=1}^L\frac{1}{L}\left(\hat{v}^+_{\ell,m} - \hat{v}^-_{\ell,m} \right)  \lesssim \frac{4}{L}\sum_{\ell=1}^{L}\sum_{m = 1}^M p_{\ell}(m) \left( \hat{v}^+_{\ell,m} - \hat{v}_{\ell,m}^{-} \right) + \frac{H}{L}\log\frac{64M}{\delta}.\label{eq:wa}
\end{align}
Such a comparison follows directly in the case $(\hat{v}_{\ell,m}^+ - \hat{v}_{\ell,m}^{-})\geq 0$ for all $(\ell,m)$, but our
evaluation gaps can be negative because phase-wise pessimism is not
guaranteed.

We address both difficulties by designing $\hat{v}^-_{\ell,m}$ as a
\emph{conservative evaluation} based on fresh replay samples, rather
than as a phase-wise lower confidence bound.
The evaluator is conservative in conditional mean
with a small bias.
It also satisfies a conditional exponential-moment guarantee for $\hat{v}^-_{\ell,m}-v_{\ell,m}$, which controls cumulative
overestimation.  These modifications allow us to establish an approximate version of~\eqref{eq:wa}.

The key is therefore not to eliminate individual negative gaps, but to control their effect through conditional moments and cumulative evaluation error, adding only an $O(H\log(M/\delta)/L)$ overhead.

Given~\cref{eq:wa}, it remains to control the weighted gap $\frac{1}{L}\sum_{\ell=1}^{L}\sum_{m\in[M]}p_\ell(m) \left(
\hat{v}^+_{\ell,m}-\hat{v}^-_{\ell,m}
\right).$  
We bound this quantity by combining the regret guarantee of
reward-switching \textsc{MVP}. Together with the cumulative
overestimation guarantee, we obtain that, with high probability, $
\max_{m\in[M]}\frac{1}{L}\sum_{\ell=1}^{L}
\left(\hat{v}^+_{\ell,m}-v_{\ell,m}\right)
=O(\epsilon).$

The full proof of Theorem~\ref{thm:adv_sample_complexity} is deferred to Appendix~\ref{app:minimax_additional_proof}. We sketch the key arguments below.

\subsection{Multi-Reward Learning Guarantee.}
 
Since the learning state persists across phases, optimism and the \textsc{MVP} analysis~\citep{zhang2024settling} apply over all $KL$ learning episodes.
\begin{lemma}[Optimism]
\label{lemma:optimism}
With probability at least $1-4SAHKLM\delta'$, $V_1^{*,m}(s)\le\overline V_1^{\ell,k,m}(s)$ for all $(s,\ell,k,m)\in\cS\times[L]\times[K]\times[M]$.
\end{lemma}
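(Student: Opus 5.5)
We prove the optimism lemma by backward induction on $h$, for each fixed reward index $m$, phase $\ell$ and episode $k$, using a concentration event for the empirical kernel $\widehat P$ that is shared across rewards. The proof follows the MVP optimism argument of \citet{zhang2024settling}. The only new features are the union bound over $m\in[M]$ and over all $KL$ learning episodes. The fact that the roll-out reward $m^{\ell,k}$ is random plays no role, because $\widehat P_{h,s,a}$ is built from transition samples only.

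\textbf{Concentration event.} Fix $(h,s,a)$ and one of the at most $\lfloor\log_2(KL)\rfloor+1$ doubling epochs. At the end of an epoch, $\widehat P_{h,s,a}$ is the empirical distribution of $N_h(s,a)$ next-state samples collected since the previous trigger. By the strong Markov property, these samples are i.i.d.\ draws from $P_{h,s,a}$ given the count. The optimistic value $\overline V^{k,m}_{h+1}$ depends on $\widehat P$ at layers $h+1,\dots,H$, and those layers may share data with layer $h$ only through the history. So, as in MVP, we do not apply concentration directly to the data-dependent vector. Instead, we use the discretization/monotonicity device of \citet{zhang2024settling}. Concretely, we define the event $\mathcal E$ on which, for every $(h,s,a)$, every epoch, every $m$ and every $(\ell,k)$,
\[
\langle \widehat P_{h,s,a}-P_{h,s,a}, V^{*,m}_{h+1}\rangle \ge -\sqrt{\frac{2\V(\widehat P_{h,s,a},V^{*,m}_{h+1})\log\frac1{\delta'}}{\max\{N_h(s,a),1\}}}-\frac{c H\log\frac1{\delta'}}{\max\{N_h(s,a),1\}}.
\]
This holds for the fixed, deterministic vectors $V^{*,m}_{h+1}$ by empirical Bernstein, with a union bound over $SAH$ triples, $M$ rewards and at most $KL$ time indices. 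This is where the count $4SAHKLM\delta'$ comes from: a factor of roughly two or four from the two-sided/empirical-variance steps, and indexing by episode covers the epoch structure.

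\textbf{Induction.} On $\mathcal E$, assume $\overline V^{\ell,k,m}_{h+1}\ge V^{*,m}_{h+1}$ pointwise (this is trivial at $h=H+1$). If the clip at $H$ is active, then $\overline Q^{\ell,k,m}_h(s,a)=H\ge Q^{*,m}_h(s,a)$. Otherwise we need
\[
\langle\widehat P,\overline V_{h+1}\rangle + b_h^{k,m}(s,a)\ge \langle P,V^*_{h+1}\rangle .
\]
We use the MVP monotonicity lemma: the map
\[
f(v)=\langle \widehat P,v\rangle + c_1\sqrt{\V(\widehat P,v)\iota'/n}+c_2H\iota'/n
\]
is nondecreasing in $v\in[0,H]^S$ whenever $c_1^2\le c_2$, which holds since $c_1^2\approx 2612\le c_2\cdot H$-scaled appropriately. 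This requires checking the constants $c_1=460/9$, $c_2=544/9$ against the lemma's exact form. Monotonicity gives $f(\overline V_{h+1})\ge f(V^*_{h+1})$. On $\mathcal E$, $f(V^*_{h+1})\ge\langle P,V^*_{h+1}\rangle$ because $c_1\ge\sqrt2$ and $c_2\ge c$. Adding $r_h^m(s,a)$ gives $\overline Q_h\ge Q^*_h$. Taking the max over actions gives $\overline V_h\ge V^*_h$.

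\textbf{Main obstacle.} The main obstacle is the monotonicity step. The bonus depends on $\V(\widehat P,\overline V_{h+1})$, which can shrink when $\overline V_{h+1}$ is replaced by a smaller function, so the naive comparison is not immediate. We would invoke the monotonicity lemma of \citet{zhang2024settling} verbatim. It is reward-independent, so the multi-reward extension reduces to enlarging the union bound by a factor of $M$ via $\delta'$. We would also verify that the epoch-reset estimator satisfies the i.i.d.\ conditions used there, and that the probability tallies to at most $4SAHKLM\delta'$.
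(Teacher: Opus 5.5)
Your architecture matches the paper's. You apply empirical Bernstein (Maurer--Pontil at level $4\delta'$) only to the deterministic vectors $V^{*,m}_{h+1}/H$, take a union bound over $SAH$ rows, $M$ rewards and at most $1+\lfloor\log_2(KL)\rfloor\le KL$ batches, and run a backward induction in which the data-dependent $\overline V^{\ell,k,m}_{h+1}$ is handled by monotonicity rather than concentration. For the concentration step, the clean justification is the paper's: couple each row to an i.i.d.\ stream, so that every potential batch is a fixed block of deterministic size $1,1,2,4,\dots$, completed or not. Then union bound over these blocks, not over episodes; no ``i.i.d.\ given the count'' argument is needed. Also, Maurer--Pontil requires $N\ge2$, and after replacing $N-1$ by $N$ its coefficients are $2$ and $14/3$, not $\sqrt2$. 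The case $N_h<2$ is harmless because the bonus $c_2H\log(1/\delta')\ge H$ forces the clip.

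The genuine gap is the monotonicity step. You apply it to the algorithm's own backup $g(v)=\langle\widehat P,v\rangle+c_1\sqrt{\V(\widehat P,v)\iota'/n}+c_2H\iota'/n$ and claim $g$ is nondecreasing when $c_1^2\le c_2$. This is false, because the last term does not depend on $v$. For example, take $\widehat P$ uniform on two states and $v=(x,H)$ there. Then $g(v)=\tfrac{x+H}{2}+\tfrac{c_1}{2}(H-x)\sqrt{\iota'/n}+c_2H\iota'/n$, which decreases in $x$ whenever $c_1\sqrt{\iota'/n}>1$, and such rows need not be clipped. The algorithm's constants also violate your condition anyway: $c_1^2=211600/81$ while $c_2=544/9$. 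Rescaling by $H$ cannot help, since after normalizing $v$ by $H$ the condition is scale-free.

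The missing idea is to compare with a monotone surrogate that has \emph{smaller} constants. The max-form $f(p,v,n,\iota')=pv+\max\{\bar c_1\sqrt{\V(p,v)\iota'/n},\bar c_2\iota'/n\}$ with $\bar c_1=20/3$ and $\bar c_2=400/9$ (so $\bar c_1^2=\bar c_2$) is nondecreasing on $[0,1]^S$ (Lemma~\ref{lemma:mono}). A sum dominates a max, and $c_1\ge\bar c_1$, $c_2\ge\bar c_2$, so for an unclipped row with $N\ge2$,
\[
\overline Q_h^{\ell,k,m}(s,a)\ge r_h^m(s,a)+Hf\Bigl(\widehat P_{h,s,a},\tfrac{\overline V_{h+1}^{\ell,k,m}}{H},N,\iota'\Bigr)\ge r_h^m(s,a)+Hf\Bigl(\widehat P_{h,s,a},\tfrac{V_{h+1}^{*,m}}{H},N,\iota'\Bigr)\ge Q_h^{*,m}(s,a).
\]
The middle inequality uses monotonicity of the surrogate plus the induction hypothesis. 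The last uses the second property of Lemma~\ref{lemma:mono} together with the Bernstein event. With this replacement your induction goes through and coincides with the paper's proof. The inequality $g(\overline V_{h+1})\ge g(V^{*,m}_{h+1})$ that you wrote is never needed, and it can fail.
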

The difference between the optimistic estimate and the policy value is then nonnegative and bounds suboptimality. Along the executed policies, the global learning analysis gives
\[
\sum_{\ell=1}^L\sum_{k=1}^K
\big(\overline V_1^{\ell,k,m^{\ell,k}}\!(s_1^{\ell,k})
-V_1^{\pi^{\ell,k,m^{\ell,k}},m^{\ell,k}}\!(s_1^{\ell,k})\big)\lesssim\min\!\bigg\{\!\sqrt{SAH^3KL\log_2^3(2KL)\log\frac M{\delta'}},HKL\bigg\}
\]
with probability at least $1-5\delta'$. Applying conditional-mean concentration to the surpluses and then intersecting with optimism transfers the selected surplus to its weighted average.
\begin{lemma}
\label{lemma:optimistic_average_counterfactual}
With probability at least $1-(4SAHKLM+5)\delta'-\delta/64$,
\begin{equation}
\label{equ:weighted_average_optimistic_error}
\frac1L\sum_{\ell=1}^L\sum_{m=1}^M p_\ell(m)
\bracket{\hat v_{\ell,m}^+-v_{\ell,m}}
\lesssim\sqrt{\frac{SAH^3}{KL}\log_2^3(2KL)\log\frac M{\delta'}}
+\frac H{KL}\log\frac{64}{\delta}.
\end{equation}
\end{lemma}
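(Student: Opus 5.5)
We need to bound the weighted average by passing from the realized executed surplus to its conditional mean over $m^{\ell,k}\sim p_\ell$. Define, for each episode $(\ell,k)$, the surplus vector
\[
X_{\ell,k}(m):=\overline V_1^{\ell,k,m}(s_1^{\ell,k})-V_1^{\pi^{\ell,k,m},m}(s_1^{\ell,k}),
\]
so that $\hat v^+_{\ell,m}-v_{\ell,m}=\frac1K\sum_k X_{\ell,k}(m)$.

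\textbf{Step 1: measurability.} Let $\cF_{\ell,k}$ be the $\sigma$-field generated by everything observed before the reward index $m^{\ell,k}$ is drawn, together with $s_1^{\ell,k}$. Then $\overline V^{\ell,k,m}$, $\pi^{\ell,k,m}$ and $p_\ell$ are all $\cF_{\ell,k}$-measurable. The vector $p_\ell$ depends only on phases $<\ell$, including the replay data, which is generated before phase $\ell$ begins. The initial state $s_1^{\ell,k}\sim\mu$ is drawn independently of $m^{\ell,k}$, so $X_{\ell,k}(\cdot)$ is $\cF_{\ell,k}$-measurable. Since $m^{\ell,k}\sim p_\ell$ independently given $\cF_{\ell,k}$,
\[
\E\bigl[X_{\ell,k}(m^{\ell,k})\mid\cF_{\ell,k}\bigr]=\sum_m p_\ell(m)X_{\ell,k}(m)=:Y_{\ell,k}.
\]
The left-hand side of \eqref{equ:weighted_average_optimistic_error} is therefore exactly $\frac1{KL}\sum_{\ell,k}Y_{\ell,k}$.

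\textbf{Step 2: one-sided concentration.} On the optimism event of Lemma~\ref{lemma:optimism}, we have $X_{\ell,k}(m)\ge V_1^{*,m}-V_1^{\pi,m}\ge0$ and $X\le H$. To keep the argument clean, I will work with the truncated variables $Z_{\ell,k}=\min\{\max\{X_{\ell,k}(m^{\ell,k}),0\},H\}$ and their conditional means $\bar Z_{\ell,k}=\sum_m p_\ell(m)\min\{\max\{X_{\ell,k}(m),0\},H\}$. These coincide with the untruncated quantities on the optimism event, and the truncation ensures that the concentration step holds unconditionally.

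For nonnegative variables bounded by $H$, a multiplicative Freedman bound, or the standard inequality $\E[e^{\lambda Z - (e^{\lambda H}-1)\bar Z/H}]\le1$ with $\lambda=1/(2H)$, gives
\[
\sum \bar Z_{\ell,k}\le 2\sum Z_{\ell,k}+O\bigl(H\log(64/\delta)\bigr)
\]
with probability $1-\delta/64$. This "mean at most a constant times the realization plus $H\log$" form is crucial. A plain Azuma bound would instead give a $\sqrt{KL}\,H$ term, and that would destroy the rate.

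\textbf{Step 3: combine.} Intersecting with the optimism event and with the executed-surplus regret bound displayed above (probability $1-5\delta'$), we get
\[
\sum Z_{\ell,k}\lesssim\sqrt{SAH^3KL\log_2^3(2KL)\log(M/\delta')}.
\]
Dividing by $KL$ yields \eqref{equ:weighted_average_optimistic_error}. A union bound gives the failure probability $(4SAHKLM+5)\delta'+\delta/64$.

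\textbf{Main obstacle.} The delicate point is the filtration bookkeeping in Step 1. One must verify that $p_\ell$, which depends on the replay samples and the conservative evaluations of earlier phases, and the counterfactual quantities are fixed before $m^{\ell,k}$ is drawn. One must also verify that the regret bound is stated for the realized executed indices without itself conditioning on them. If the regret bound's event depends on the future in a way that breaks this, I would instead apply the martingale inequality on the whole sequence and only intersect the events at the end, as above.
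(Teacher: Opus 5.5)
Your proposal follows the paper's proof essentially step for step. The paper conditions on $\cA_k^\ell=\cH_{k-1}^\ell\vee\sigma(s_1^{\ell,k})$, which is your $\cF_{\ell,k}$, and works with the positive parts $\Delta_+^{\ell,k,m}$. It builds the supermartingale with increments $\exp\bigl((1-e^{-1})\Delta_+^{\ell,k}/H-\Delta_+^{\ell,k,m^{\ell,k}}/H\bigr)$ and applies Ville (Lemma~\ref{lemma:ville}) at level $\delta/64$. It then intersects with optimism and Lemma~\ref{lemma:reward_switching}, exactly as you do.

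There is one sign slip in Step~2. Your displayed inequality holds for every real $\lambda$, but with $\lambda=1/(2H)>0$ it bounds the realized sum by the conditional means, which is the wrong direction. You need $\lambda<0$. Taking $\lambda=-1/H$ recovers the paper's supermartingale and its constant $1/(1-e^{-1})\le 2$. Alternatively, the second inequality of Lemma~\ref{lemma:con} gives the bound you need directly.
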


We postpone the proofs to Appendices~\ref{app:subsubsec:proof_of_optimism} and~\ref{app:subsubsec:proof_of_optimistic_average_counterfactual}.

\subsection{Conservative Evaluation with Sealed Replay}


In each phase, we replay each of the $K$ recorded roll-out policies eight times. With high probability, this yields enough fresh samples to match the learning-phase visit counts for every retained tuple $(h,s,a)$.
\begin{lemma}
\label{lemma:replay_count_bound}
With probability at least $1-2SAHL\delta''$, the collected replay counts satisfy $
\widetilde N_{\ell,h}(s,a)=N_{\ell,h}(s,a)$ for  any $(h,s,a)\in\cR_\ell$ and any $\ell \in [L]$.
\end{lemma}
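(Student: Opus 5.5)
Since Sealed-Replay keeps only the first $N_{\ell,h}(s,a)$ replay samples for each retained tuple, we have $\widetilde N_{\ell,h}(s,a)\le N_{\ell,h}(s,a)$ automatically. The claim therefore reduces to showing that, for each $(h,s,a)\in\cR_\ell$, the $8K$ replay episodes visit $(h,s,a)$ at least $N_{\ell,h}(s,a)$ times with probability at least $1-2\delta''$. A union bound over the at most $SAH$ tuples and $L$ phases then gives the total failure probability $2SAHL\delta''$.

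Fix a phase $\ell$ and a tuple $(h,s,a)$. Let $\mathcal{F}$ denote the history up to the end of the learning stage of phase $\ell$. The roll-out policies $(\pi^{\ell,k})_{k\in[K]}$ are $\mathcal{F}$-measurable, and each is deterministic and Markov given its sampled index. Define $q_k := \Pr^{\pi^{\ell,k}}((s_h,a_h)=(s,a))$ and $Q:=\sum_{k}q_k$. Then:
\begin{itemize}
\item The learning-stage count $N_{\ell,h}(s,a)=\sum_k X_k$, where $X_k$ is the indicator that episode $k$ of the learning stage visits $(h,s,a)$. Conditional on the past up to the start of episode $k$, $X_k$ is Bernoulli with mean $q_k$, since $\pi^{\ell,k}$ is fixed before the episode and $s_1\sim\mu$ is fresh.
\item Conditional on $\mathcal{F}$, the replay count $\widetilde N^{\mathrm{raw}}$ is a sum of $8K$ independent Bernoullis with total mean $8Q$.
\end{itemize}

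The argument splits into two concentration steps.
\begin{enumerate}
\item \textbf{Upper bound on the learning count.} Apply a Freedman/Bernstein-type martingale bound (a multiplicative Chernoff bound for adaptively chosen Bernoulli means) to $\sum_k (X_k-q_k)$. With probability at least $1-\delta''$,
\[
N_{\ell,h}(s,a)\le 2Q+c\log(1/\delta'').
\]
Concretely, use the exponential supermartingale $\exp(\lambda\sum_k X_k-(e^\lambda-1)\sum_k q_k)$ with $\lambda=\log 2$, which gives $N\le 2Q+\log(1/\delta'')/\log 2\le 2Q+2\log(1/\delta'')$.
\item \textbf{Lower bound on the replay count.} Conditional on $\mathcal{F}$, apply a multiplicative Chernoff lower-tail bound. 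With probability at least $1-\delta''$,
\[
\widetilde N^{\mathrm{raw}}\ge 8Q-\sqrt{16Q\log(1/\delta'')}\ge 4Q,
\]
provided $Q\ge \log(1/\delta'')$.
\end{enumerate}

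Combining the two steps for a retained tuple is the main obstacle, because the threshold $8\log(1/\delta'')$ must be matched against the constants. On the event of step 1, $N\ge 8\log(1/\delta'')$ forces $2Q\ge 6\log(1/\delta'')$, so $Q\ge 3\log(1/\delta'')$, and step 2 applies. It remains to check $\widetilde N^{\mathrm{raw}}\ge N$. Step 2 gives $\widetilde N^{\mathrm{raw}}\ge 8Q-4\sqrt{Q\log(1/\delta'')}$. Because $N\ge 8\log(1/\delta'')$ is retained, we have $\log(1/\delta'')\le 2Q/6=Q/3$, hence
\[
2Q+2\log(1/\delta'')\le \tfrac{8}{3}Q,
\qquad
8Q-4\sqrt{Q\cdot Q/3}\approx 5.69\,Q .
\]
So $\widetilde N^{\mathrm{raw}}\ge N$ with room to spare.

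One subtlety remains: retention is decided by the random $N$, which depends on the same randomness as step 1. This is handled because both steps hold simultaneously on an event of probability at least $1-2\delta''$ that does not depend on retention. Retention is only used deterministically to lower-bound $Q$. Finally, the cap yields $\widetilde N_{\ell,h}(s,a)=\min\{\widetilde N^{\mathrm{raw}},N_{\ell,h}(s,a)\}=N_{\ell,h}(s,a)$, and the union bound completes the proof.
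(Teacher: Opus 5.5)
Your proof is correct and follows essentially the same route as the paper. The paper also bounds the learning count by $N_{\ell,h}(s,a)\le 2D^{\mu}_{\ell,h,s,a}+2\log(1/\delta'')$ via an exponential supermartingale. It takes $\lambda=1$ and conditions on $\cB_k^{\ell}$, which already reveals $m^{\ell,k}$; your ``past up to the start of episode $k$'' must include that draw too. It then uses retention to get $D^{\mu}_{\ell,h,s,a}\ge\frac38 N_{\ell,h}(s,a)$ and applies a multiplicative Chernoff lower tail to the $8K$ conditionally independent replay episodes, followed by a union bound over tuples and phases. The only differences are constants and failure-level bookkeeping: the paper spends $(\delta'')^2$ and $(\delta'')^{16/3}$ per tuple and phase where you spend $\delta''$ twice, and both give the same $2SAHL\delta''$ total.
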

Let $\cF_\ell$ denote the history immediately before the replay stage of phase $\ell$.
For a fixed policy $\pi$, $\underline{V}^{\pi}$ provides a conservative estimate of its value function $V^{\pi}$.
Since the replay samples are independent of the candidate policies $\{\pi^{k,m}\}_{k\in[K],\,m\in[M]}$ conditional on $\cF_\ell$, we obtain the following guarantee:
\begin{lemma}
\label{lemma:pessimism}
For every $\ell\in[L]$, it holds that $
    \mathbb{E}\sqbk{\exp\bracket{\frac{\hat{v}_{\ell, m}^{-} - v_{\ell,m}}{H}}|\cF_{\ell}} \leq 1$.
Moreover, with probability at least $1-\delta/64$, it holds that $\sum_{\ell=1}^L(\hat v_{\ell,m}^--v_{\ell,m})
\le H\log\frac{64M}{\delta}$
 simultaneously for all $m\in[M]$,
\end{lemma}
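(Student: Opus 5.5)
The plan has two parts: first the conditional exponential-moment bound, then a supermartingale argument with a union bound over $m$ for the cumulative statement.

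\textbf{Reduction to one policy.} Conditional on $\cF_\ell$, the candidate policies $\pi^{\ell,k,m}$, the initial states $s_1^{\ell,k}$, the retained set $\cR_\ell$ and the counts $N_\ell$ are all fixed. Since $\hat v^-_{\ell,m}-v_{\ell,m}=\frac1K\sum_k(\underline V_1^{\ell,k,m}-V_1^{\pi^{\ell,k,m},m})(s_1^{\ell,k})$, convexity of $\exp$ (Jensen over the uniform average in $k$) gives
\[
\E\sqbk{\exp\bracket{\frac{\hat v^-_{\ell,m}-v_{\ell,m}}{H}}\Big|\cF_\ell}\le \frac1K\sum_k \E\sqbk{\exp\bracket{\frac{(\underline V_1^{\pi}-V_1^{\pi})(s)}{H}}\Big|\cF_\ell}.
\]
So it suffices to show $\E[\exp((\underline V_1^\pi-V_1^\pi)(s)/H)\mid\cF_\ell]\le1$ for a fixed deterministic policy $\pi$ and state $s$.

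\textbf{Fixed-policy bound.} I would use backward induction on $h$, aiming for $\E[\exp(\lambda(\underline V_h^\pi(s)-V_h^\pi(s)))]\le1$ for all $\lambda\in(0,1/H]$ and all $s$. For $(h,s,a)\notin\cR_\ell$ the error is $-\langle P,V_{h+1}^\pi\rangle\le0$, so the claim is trivial there. For retained rows, I would take the replay samples to be, conditionally, $n=\widetilde N_{\ell,h}(s,a)$ i.i.d. draws from $P_{h,s,a}$, independent across rows (via a stack-of-samples / skeleton argument). Writing $X_i=\underline V_{h+1}(s'_i)$, the error is
\[
\frac1{n+1}\sum_{i=1}^n X_i-\langle P,V_{h+1}\rangle .
\]
Because $X_i\in[0,H]$ and the layer-$(h+1)$ estimates use only later-layer samples, the sample for row $h$ is independent of the functions $\underline V_{h+1}$ evaluated at fresh next states.

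The key inequality comes from the shrinkage. For $Y\in[0,H]$ and $\lambda\le1/H$, convexity gives $\E e^{\lambda Y/(n+1)}\le 1+\frac{\E[e^{\lambda Y}]-1}{n+1}$. Combining this with the inductive bound $\E e^{\lambda X}\le e^{\lambda\langle P,V_{h+1}\rangle}$, the factor $\frac{n}{n+1}$ should make the product at most $e^{\lambda\langle P,V_{h+1}\rangle}$, using $(1+x/(n+1))^n\le e^{nx/(n+1)}$ and then bounding $e^{n\mu'/(n+1)}$ against $e^{\mu}$.

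\textbf{Main obstacle.} The hard step is making this induction close. The errors at different next states are not independent across $i$ when the same row is revisited, since $\underline V_{h+1}$ is a single random function. I would handle it by conditioning layer by layer: fix all samples from layers $>h$, so that $\underline V_{h+1}$ is a fixed function. Then average the resulting mgf over the randomness of deeper layers, using Jensen and the fact that $t\mapsto t^{n/(n+1)}$ is concave. The plan is to show
\[
\E\sqbk{\Big(\E_{s'\sim P}e^{\lambda \underline V_{h+1}(s')}\Big)^{n/(n+1)}}\le e^{\lambda\frac{n}{n+1}\langle P,V_{h+1}\rangle}\le e^{\lambda\langle P,V_{h+1}\rangle}.
\]
If this exact form fails, I would fall back on the multiplicative slack $1/(n+1)$, which offsets the Hoeffding-type variance term $\lambda^2H^2/(8n)$.

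\textbf{Cumulative bound.} Set $Z_\ell=\hat v^-_{\ell,m}-v_{\ell,m}$. By the first part, $\exp(\sum_{j\le\ell}Z_j/H)$ is a supermartingale with respect to $\cF_{\ell+1}$; here I use that $\hat v^-_{\ell,m}$ and $v_{\ell,m}$ are $\cF_{\ell+1}$-measurable. By Ville's or Markov's inequality,
\[
\Pr\Big[\sum_{\ell=1}^L Z_\ell>H\log\tfrac{64M}{\delta}\Big]\le\frac{\delta}{64M}.
\]
A union bound over $m\in[M]$ then gives the claim simultaneously for all $m$ with probability at least $1-\delta/64$.
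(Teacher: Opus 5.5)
\textbf{There is a genuine gap.} Your outer structure matches the paper's: Jensen over $k$, backward induction for a fixed policy, trivial unretained rows, and the tower/Markov argument across phases with a union bound over $m$ at level $\delta/(64M)$. The retained-row inductive step, which is the real content of the lemma, does not close as you propose.

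Your bound $\E e^{\lambda X}\le e^{\lambda\langle P,V_{h+1}\rangle}$, with $X=\underline V_{h+1}(s')$ and $s'\sim P_{h,s,a}$, does not follow from the per-state hypothesis. Averaging that hypothesis over $s'$ only gives $\E_{s'\sim P}e^{\lambda V^{\pi}_{h+1}(s')}$, which by Jensen is at least $e^{\lambda\langle P,V^{\pi}_{h+1}\rangle}$. Your target display is in fact false already at $h=H-1$. There $\underline V_H=V_H^{\pi}$ is deterministic, so its left side equals $(\E_{s'\sim P}e^{\lambda V_H^{\pi}(s')})^{n/(n+1)}$. This is strictly larger than $e^{\lambda\frac{n}{n+1}\langle P,V_H^{\pi}\rangle}$ whenever $V_H^{\pi}$ is non-constant under $P_{h,s,a}$.

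The cause is that both of your reductions pass to the full-scale mgf $\E e^{\lambda Y}$. These are $\E e^{\lambda Y/(n+1)}\le1+(\E e^{\lambda Y}-1)/(n+1)$ and the power-mean version $\le(\E e^{\lambda Y})^{1/(n+1)}$. Passing to $\E e^{\lambda Y}$ keeps the fluctuation cost at scale $\lambda$ and discards the averaging over the $n$ samples. With $\lambda=1/H$, $\E e^{\lambda Y}-1$ can be as large as $(e-1)\lambda\E Y$, which the factor $n/(n+1)$ cannot absorb once $n\ge2$.

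The Hoeffding fallback also fails. The shrinkage slack in the exponent is $\lambda\mu/(n+1)$ with $\mu=\langle P,\underline V_{h+1}\rangle$, while the Hoeffding penalty is $n\lambda^2H^2/(8(n+1)^2)$. So it only works when $\mu\gtrsim\lambda H^2/8$, not for small continuation values.

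\textbf{What is needed} is a mean-based (Bernoulli/Bennett-type) bound that makes the exponent linear in $\underline V_{h+1}$ before averaging over the deeper layers. Condition on $\cF_\ell$ and $\underline V_{h+1}$, and let $Y_i=\underline V_{h+1}(Z^{(i)})\in[0,H]$ be i.i.d. Convexity gives $e^{x/(n+1)}\le1+x(e^{1/(n+1)}-1)$ for $x=Y_i/H\in[0,1]$, hence, conditionally,
\[
\E\sqbk{\exp\bracket{\frac{\sum_{i=1}^nY_i}{H(n+1)}}}\le\exp\bracket{n\bracket{e^{1/(n+1)}-1}\frac{\langle P,\underline V_{h+1}\rangle}{H}}\le\exp\bracket{\frac{\langle P,\underline V_{h+1}\rangle}{H}},
\]
since $n(e^{1/(n+1)}-1)\le1$. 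Then write
\[
\exp\bracket{\frac{\langle P,\underline V_{h+1}-V^{\pi}_{h+1}\rangle}{H}}=\prod_{s'}\exp\bracket{\frac{\underline V_{h+1}(s')-V^{\pi}_{h+1}(s')}{H}}^{P(s')},
\]
and apply the generalized H\"older inequality $\E\prod_iX_i^{a_i}\le\prod_i(\E X_i)^{a_i}$ with weights $P(s')$. This reduces to the per-state hypothesis. It is exactly what handles the dependence across next states that you flagged, with no independence needed.

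\textbf{A separate problem.} Your claim that $\underline V_{h+1}$ depends only on later-layer samples is false when $n=\widetilde N_{\ell,h}(s,a)$ is the random realized count. The replay counts at stages $h'>h$, and which stack entries fill those quotas, depend on the stage-$h$ transitions. So conditioning on layers $>h$ does not freeze $\underline V_{h+1}$. The paper sidesteps this with a fixed-prefix model built from exactly the first $N_{\ell,h}(s,a)$ entries of each row's stack, a count that is $\cF_\ell$-measurable. It proves the bound for that model and couples it with the actual run on the success event of Lemma~\ref{lemma:replay_count_bound}.
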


We next bound the mean underestimation error.
\begin{lemma}[Conservative weighted bound]
\label{lemma:pessimistic_average_counterfactual}
With probability at least $1-SAHL\delta''$, simultaneously for all $\ell\in[L]$, $
    0\le\sum_{m=1}^M p_\ell(m)\bracket{v_{\ell,m}-\mathbb E[\hat v_{\ell,m}^-\mid\cF_\ell]}
\lesssim\frac{SAH^2}{K}\log\frac1{\delta''}.$
\end{lemma}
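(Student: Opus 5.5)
We prove Lemma~\ref{lemma:pessimistic_average_counterfactual} on the event of Lemma~\ref{lemma:replay_count_bound}, where $\widetilde N_{\ell,h}(s,a)=N_{\ell,h}(s,a)=:n$ for every retained row. Fix $\ell$ and condition on $\cF_\ell$. Then the policies $\pi^{\ell,k,m}$, the retained set $\cR_\ell$, and the initial states are deterministic. The replay rows $\widetilde P_{\ell,h,s,a}$ are empirical distributions of $n$ fresh samples from $P_{h,s,a}$, independent across layers once counts are fixed. The cap $\widetilde N\le N$ and the fixed count $n$ let us treat each retained row as $n$ i.i.d.\ draws. There is a subtlety: the replay trajectories are adaptive, but with a fixed count we can use the standard ``stack of samples'' coupling, in which each $(h,s,a)$ has a pre-drawn i.i.d.\ sequence and we use its first $n$ entries.

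\textbf{Nonnegativity.} We show by backward induction on $h$ that $\E[\underline V_h^{\ell,k,m}(s)\mid\cF_\ell]\le V_h^{\pi^{\ell,k,m},m}(s)$. Since $\underline V_{h+1}$ is built only from layers $>h$, it is independent of $\widetilde P_{\ell,h,s,a}$. Hence
\[
\E\bigl[\langle\widetilde P_{\ell,h,s,a},\underline V_{h+1}\rangle\mid\cF_\ell\bigr]=\langle P_{h,s,a},\E[\underline V_{h+1}\mid\cF_\ell]\rangle\le\langle P_{h,s,a},V_{h+1}^{\pi}\rangle .
\]
The shrink factor $n/(n+1)\le1$ and the convention $\underline Q=r$ off $\cR_\ell$ only lower the value further, because all values are nonnegative. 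Averaging over $k$ and weighting by $p_\ell(m)$ gives the lower bound $0$.

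\textbf{Upper bound via a performance-difference decomposition.} Unrolling the same induction with equality tracking, the conditional bias per reward and episode is
\[
V_1^{\pi}(s_1)-\E[\underline V_1(s_1)\mid\cF_\ell]=\sum_{h}\E^{\pi}\Bigl[\mathbb 1\{(h,s_h,a_h)\notin\cR_\ell\}\langle P_{h,s_h,a_h},V^\pi_{h+1}\rangle+\mathbb 1\{(h,s_h,a_h)\in\cR_\ell\}\tfrac{1}{n+1}\langle P_{h,s_h,a_h},\bar V_{h+1}\rangle\Bigr].
\]
Here $\bar V=\E[\underline V\mid\cF_\ell]\le V^\pi$. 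The expectation is over trajectories of $\pi=\pi^{\ell,k,m}$ from $s_1^{\ell,k}$, not from $\mu$. To handle this, we first replace $s_1^{\ell,k}$ by $\mu$, noting that $s_1^{\ell,k}$ are i.i.d.\ from $\mu$ and independent of the policies' realized trajectories. Alternatively, we keep $s_1^{\ell,k}$ fixed and define occupancies $d_h^{\ell,k,m}$ from it. Each summand is at most $H\,d_h^{\ell,k,m}(s,a)\min\{1,\,\mathbb 1_{\notin\cR}+1/(n+1)\}$. Weighting by $p_\ell(m)$ and averaging over $k$ gives
\[
H\sum_{h,s,a}\bar d_h(s,a)\min\Bigl\{1,\ \mathbb 1\{N_{\ell,h}(s,a)<8\log\tfrac1{\delta''}\}+\tfrac{1}{N_{\ell,h}(s,a)+1}\Bigr\},\qquad \bar d_h=\tfrac1K\sum_k\sum_m p_\ell(m)d_h^{\ell,k,m}.
\]
The key observation is that $K\bar d_h(s,a)$ is exactly the conditional expected visit count of $(h,s,a)$ during the learning stage of phase $\ell$. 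Indeed, episode $k$ executes $\pi^{\ell,k,m}$ with probability $p_\ell(m)$, with $m$ drawn independently of the past, from a fresh $s_1\sim\mu$. So it suffices to show that with probability $1-\delta''$ per $(h,s,a)$,
\[
K\bar d_h(s,a)\min\{1,\,\cdot\,\}\lesssim\log\frac1{\delta''}.
\]

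\textbf{The main obstacle.} The hard step is the concentration between the predictable sum $\sum_k \Pr(\text{visit at }k\mid\text{past})=K\bar d_h(s,a)$ and the realized count $N_{\ell,h}(s,a)$. The issue is that $s_1^{\ell,k}$ in $\bar d$ is the realized state, while the policy $\pi^{\ell,k,m}$ depends on data only up to episode $k-1$. I would therefore define $\bar d$ from $\mu$ and absorb the difference using the fact that $V_1^\pi(s_1^{\ell,k})$ terms appear linearly. I would then apply a Freedman-type multiplicative bound for Bernoulli martingales: with probability $1-\delta''$, $\sum_k q_k\le 2N+O(\log(1/\delta''))$.

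With this bound, rows with $N<8\log(1/\delta'')$ contribute $O(\log(1/\delta''))$, and retained rows contribute $\sum_k q_k/(N+1)\le 2+O(\log(1/\delta'')/N)=O(1)$. Summing over $SAH$ triples, multiplying by $H/K$, and taking a union bound over $(h,s,a,\ell)$ gives the claimed $\frac{SAH^2}{K}\log\frac1{\delta''}$ with failure probability $SAHL\delta''$.
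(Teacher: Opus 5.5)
Your overall architecture is the paper's. It has the same parts: nonnegativity by backward induction in a fixed-prefix (stack-of-samples) replay model; a per-step bias of at most $H/(N_{\ell,h}(s,a)+1)$ on retained rows and $H$ off them, unrolled into $p_\ell$-weighted occupancies started at the recorded initial states; a multiplicative Bernoulli supermartingale comparing $D^{\mathrm{avg}}_{\ell,h,s,a}$ with $N_{\ell,h}(s,a)$; and per-row bounds of $O(1)$ on $\cR_\ell$ and $O(\log(1/\delta''))$ off it.

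Two small points. First, your ``exact'' unrolled identity over-counts unretained rows. The true contribution there is $\langle P_{h,s,a},\bar V_{h+1}\rangle$ with continued propagation, not $\langle P_{h,s,a},V^{\pi}_{h+1}\rangle$. This is harmless, since you only use it as an upper bound. Second, you should not say you work ``on the event'' of Lemma~\ref{lemma:replay_count_bound}. $\E[\cdot\mid\cF_\ell]$ cannot be restricted to that event: the event depends on the very stack samples that form $\widetilde P_{\ell,h,s,a}$, so conditioning on it breaks their i.i.d.\ structure. As in the paper, compute the conditional mean in the auxiliary run that always uses the first $N_{\ell,h}(s,a)$ stack samples, and intersect with the success event only afterwards.

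The genuine problem is the step you call the main obstacle. Your committed fix is to define $\bar d$ from $\mu$ and ``absorb the difference because the $V_1^{\pi}(s_1^{\ell,k})$ terms appear linearly''. That is not valid. Conditional on $\cF_\ell$ the states $s_1^{\ell,k}$ are fixed. The function $V_1^{\pi^{\ell,k,m},m}-\E[\underline V_1^{\ell,k,m}\mid\cF_\ell]$ depends, through $\cR_\ell$ and the factors $1/(N_{\ell,h}(s,a)+1)$, on all $K$ learning episodes of the phase, including episode $k$ itself. So $s_1^{\ell,k}$ cannot be integrated out against $\mu$ by linearity. As written, your chain bounds the $\mu$-occupancy sum by $2N_{\ell,h}(s,a)+O(\log(1/\delta''))$, while the bias is controlled by the realized-state occupancy sum, and nothing connects the two.

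You could connect them with an extra per-row martingale bound. The realized-state weighted occupancy lies in $[0,1]$ and has conditional mean equal to the $\mu$-occupancy given $\cH_{k-1}^{\ell}$; this costs a constant and another union bound.

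But the obstacle is not real, and the paper sidesteps it. Take $\cA_k^{\ell}=\cH_{k-1}^{\ell}\vee\sigma(s_1^{\ell,k})$. Given it, the candidate policies and $p_\ell$ are fixed, and $m^{\ell,k}\sim p_\ell$ is still independent of $s_1^{\ell,k}$. Hence $\E[X^k_{\ell,h,s,a}\mid\cA_k^{\ell}]=\sum_m p_\ell(m)d^{k,m}_{\ell,h}(s,a)$ exactly, with occupancies started at the realized state. Since $\cH_{k-1}^{\ell}\subseteq\cA_k^{\ell}\subseteq\cH_k^{\ell}$, the process $\exp\bigl(\tfrac34\sum_{k'\le k}\bar d^{k'}_{\ell,h}(s,a)-(\log 4)\sum_{k'\le k}X^{k'}_{\ell,h,s,a}\bigr)$ is a nonnegative supermartingale for $(\cH_k^{\ell})$. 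Ville's inequality then gives $D^{\mathrm{avg}}_{\ell,h,s,a}\le 2N_{\ell,h}(s,a)+2\log(1/\delta'')$ directly, with failure probability $(\delta'')^{3/2}$ per row and phase. With this in place, your final paragraph goes through as written.
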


We defer the proofs to Appendices~\ref{app:subsubsec:replay_samples},~\ref{app:subsub:proof_of_pessimism}, and~\ref{app:subsec:proof_of_pessimistic_average_counterfactual}.


 \subsection{Gap-based Multiplicative Weights Update}

Define $W_{\ell} = \frac{1}{M}\sum_{m=1}^M \exp \left(\frac{1}{H}\sum_{\ell'=1}^{\ell-1}\left(\hat{v}^+_{\ell',m} - \hat{v}^{-}_{\ell', m}  \right) \right)$ for each $1\leq \ell \leq L+1$. By the multiplicative weights update rule \cref{eq:updatep}, we have that  $
W_{\ell+1}/W_{\ell} = \sum_{m=1}^M p_{\ell}(m)\exp \left( (\hat{v}^+_{\ell,m}  - \hat{v}^{-}_{\ell,m})/H \right).$

Define $(x)_+ = \max\{x,0\}$. 
With Lemma~\ref{lemma:pessimism}, we can show that 
 \begin{align}
\mathbb{E}\left[ (W_{\ell+1}/W_{\ell}) |\mathcal{F}_{\ell} \right] \leq 1+ \frac{4}{H}\cdot \sum_{m=1}^M p_{\ell}(m)\mathbb{E}\left[ (\hat{v}_{\ell,m}^{+} - v_{\ell,m} )_+ + v_{\ell,m} - \hat{v}_{\ell,m}^- |\mathcal{F}_{\ell} \right],
 \end{align}
which further implies
\begin{align}
\mathbb{E}\left[W_{L+1} \exp \left(-\frac{4}{H}\sum_{\ell=1}^L \sum_{m=1}^M p_{\ell}(m) \left[ (\hat{v}_{\ell,m}^+ - v_{\ell,m})_+ + v_{\ell,m} - \mathbb{E}[\hat{v}_{\ell,m}^- |\mathcal{F}_{\ell}] \right] \right) \right] \leq 1.
\end{align}
As a result, with high probability, 
we have that 
\begin{align}
\max_{m\in [M]}\frac{1}{L}\sum_{\ell=1}^L \left( \hat{v}^+_{\ell,m} - \hat{v}^-_{\ell,m}\right)\leq \frac{4}{L}\sum_{\ell=1}^L \sum_{m=1}^M p_{\ell}(m) \left[   (\hat{v}_{\ell,m}^+ - v_{\ell,m})_+ + v_{\ell,m} - \mathbb{E}[\hat{v}_{\ell,m}^- |\mathcal{F}_{\ell}]  \right] + O(\epsilon).\nonumber
\end{align}
Finally, on the intersection of the optimism and concentration events,
Lemmas~\ref{lemma:optimistic_average_counterfactual}
and~\ref{lemma:pessimistic_average_counterfactual} imply, respectively,
\begin{align*}
\frac{1}{L}\sum_{\ell=1}^L\sum_{m=1}^M
p_\ell(m)\left(\hat{v}^+_{\ell,m}-v_{\ell,m}\right)_+= O(\epsilon),\quad 
\frac{1}{L}\sum_{\ell=1}^L\sum_{m=1}^M
p_\ell(m)\left(
v_{\ell,m}
-\mathbb{E}\!\left[
\hat{v}^-_{\ell,m}\mid\mathcal{F}_\ell
\right]\right)
= O(\epsilon),
\end{align*}
under the choices of $K$ and $L$.
Combining these bounds with the preceding inequalities finishes
the proof.

\begin{remark}
We retain the term $(\hat{v}_{\ell,m}^+-v_{\ell,m})_+$
rather than $\hat{v}_{\ell,m}^+-v_{\ell,m}$
to ensure that the exponential-moment bound remains valid even when
optimism fails. Although $\hat{v}_{\ell,m}^+-v_{\ell,m}\ge 0$ holds
with high probability, we must still account for the failure event
when taking expectations.
\end{remark}

\section{Numerical Experiments}\label{sec:numerical_experiment}

We conduct two experiments to evaluate the episode efficiency of Algorithm~\ref{alg:main} and four reward-agnostic baseline algorithms in~\citet{jin2020reward, zhang2020taskagnostic, ridel2026improvedboundsrewardagnosticrewardfree, li2026minimax} on random nonstationary finite-horizon MDPs with $S=10$, $A=5$, $H=6$; the complete transition kernel, reward functions, and experimental settings are provided in Appendix~\ref{app:sec:numerical_experiment}. For each run $t$, total episode budget $B$, and $M$ reward models, we measure the worst-reward policy error and report its average over $T$ independent runs:
\[
\mathrm{Error}_t(B,M)
:= \max_{m\in[M]}
\bracket{V_{0,t}^{\star,m}
-
V_{0,t}^{\overline{\pi}^{m}_{B,t}, m}}, \quad 
\overline{\mathrm{Error}}_T(B,M)
:= \frac{1}{T}\sum_{t=1}^{T}\mathrm{Error}_t(B,M),
\]
where $\overline{\pi}^{m}_{B,t}$ is the policy obtained in the $t$-th run with an episode budget of $B$. The first experiment fixes $M = 16$ and examines the mean worst-reward error as the episode budget increases. The second varies $M$ and estimates the budget required to achieve $\overline {\mathrm{Error}}_8(B,M)\le0.1$.

\begin{figure}[ht]
    \centering
    \begin{subfigure}[t]{0.48\textwidth}
        \includegraphics[width=\linewidth]{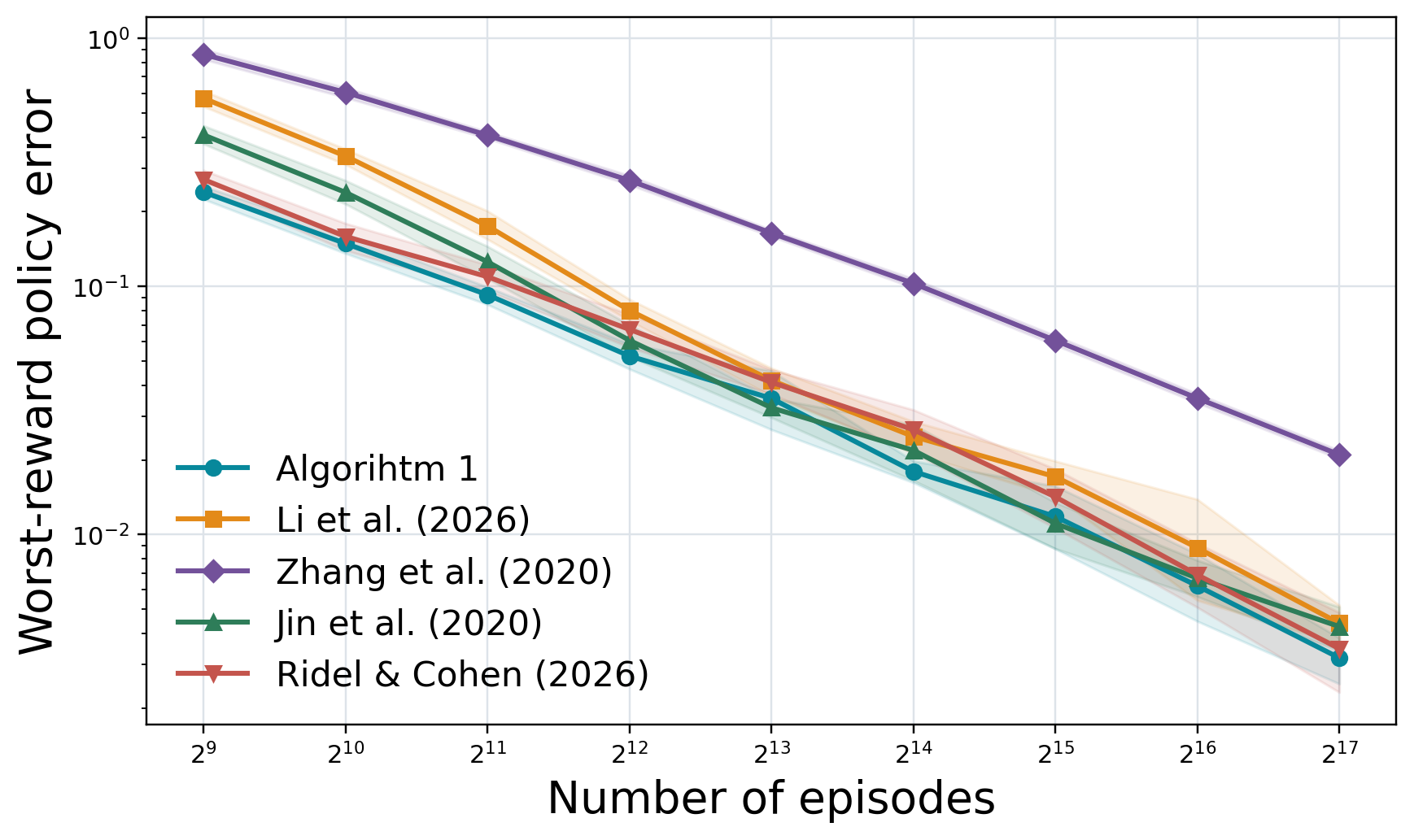}
        \caption{Convergence at $M=16$}
        \label{fig:episode_error}
    \end{subfigure}
    \hfill
    \begin{subfigure}[t]{0.48\textwidth}
        \centering
        \includegraphics[width=\linewidth]{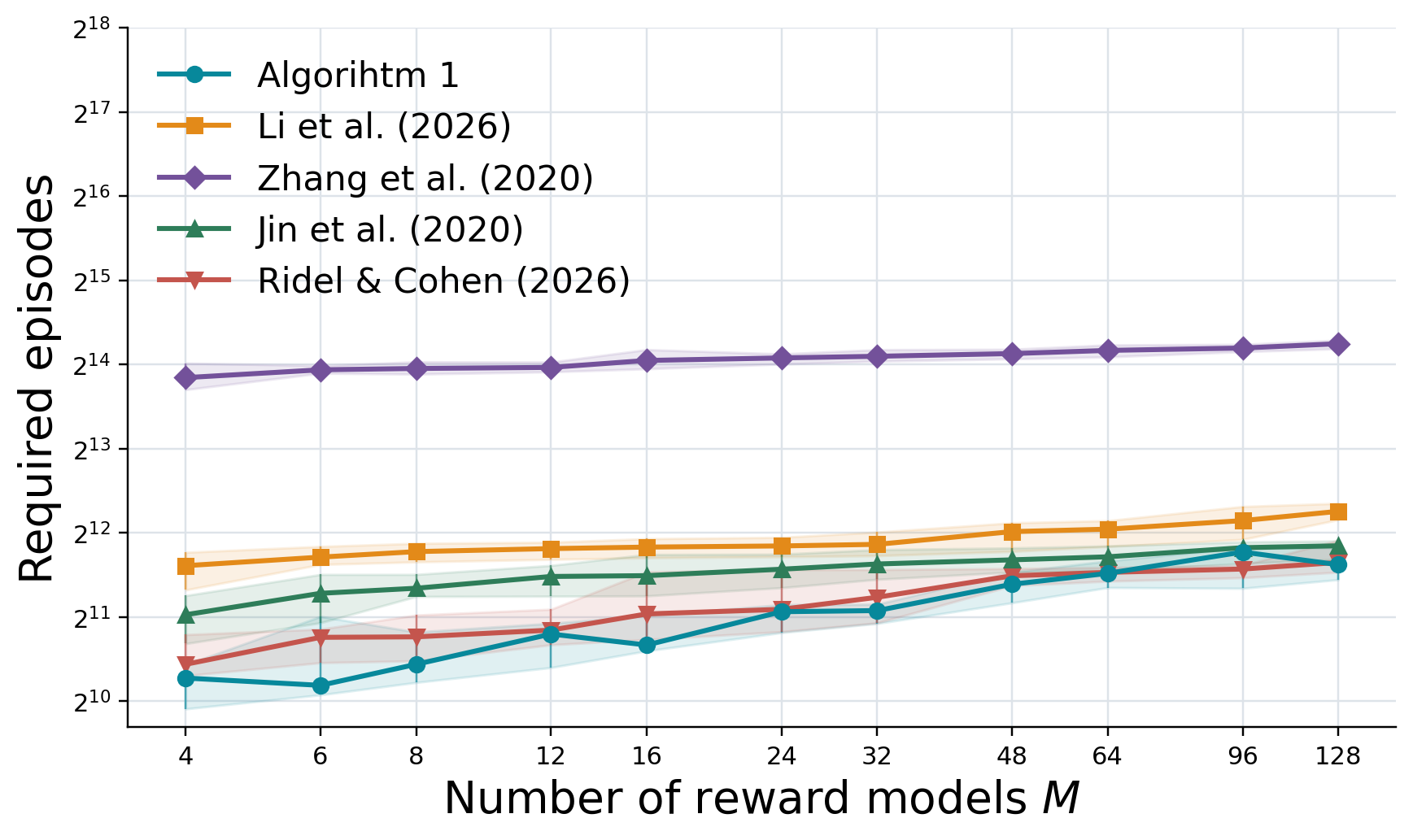}
        \caption{Reward-count scaling at $\epsilon=0.1$}
        \label{fig:reward_count}
    \end{subfigure}
    \caption{Empirical episode efficiency in multi-reward reinforcement learning. (a) Mean worst-reward policy error versus total episode budget at \(M=16\), averaged over 12 independent runs. (b) Estimated episode budget required to reach a mean worst-reward error of \(0.1\) as \(M\) varies, using 8 independent runs. Shaded regions indicate pointwise 95\% confidence intervals.}
    \label{fig:experiment}
\end{figure}

Figure~\ref{fig:episode_error} shows that our multi-reward algorithm achieves the lowest mean worst-reward error at $7$ of the $9$ tested budgets and first reaches the target error of \(0.1\) at \(2{,}048\) episodes, earlier than the four baseline implementations on the tested grid. All environment interactions are included in the reported budgets. \citet{jin2020reward} attains lower point estimates at \(8{,}192\) and \(32{,}768\) episodes.

Figure~\ref{fig:reward_count} shows that our method requires the lowest estimated budget at $10$ of the $11$ tested reward counts, with \citet{ridel2026improvedboundsrewardagnosticrewardfree} performing better at $M=96$. Increasing $M$ from $4$ to $128$ ($32\times$) raises our estimated budget from approximately $1{,}240$ to $3{,}153$ episodes ($2.54\times$). These results indicate favorable empirical episode efficiency and modest budget growth under the tested configurations.

\section{Discussion}
\label{sec:discussion}
In this paper, we study reinforcement learning with $M$ different rewards and propose an algorithm with optimal sample complexity of $\widetilde{O}(SAH^3\log (M)/\epsilon^2)$. This sample complexity bound is minimax optimal and contains no burn-in term. We leave the following interesting directions for future work: 
(i) whether the minimax-optimal sample complexity for reward-agnostic
reinforcement learning can be achieved without any burn-in terms; (ii) whether the shared structure among reward functions can be
exploited to obtain sharper sample-complexity bounds that adapt to
the intrinsic complexity of the reward class, rather than its
cardinality alone.

\newpage






\bibliographystyle{plainnat}
\bibliography{reference}


 \newpage
\appendixpage
\appendix

\section{Notations and Technical Lemmas}

\subsection{Notations}
\begin{table}[ht]
    \centering
    \caption{Parameters of MDP}
    \begin{tabular}{|>{\centering\arraybackslash}m{0.23\textwidth}|>{\centering\arraybackslash}m{0.67\textwidth}|}
    \hline
    \(\mathcal S,\; S=|\mathcal S|\) & State space and its size \\[2pt]
    \(\mathcal A,\; A=|\mathcal A|\) & Action space and its size \\[2pt]
    \(H\) & Horizon length \\[2pt]
    \(K\) & Number of learning episodes per phase \\[2pt]
    $L$ & Number of phases \\[2pt]
    $M$ & Number of reward functions \\ [2pt]
    \(s,s'\in\mathcal S\) & States \\[2pt]
    \(a,a'\in\mathcal A\) & Actions \\[2pt]
    \(h\in[H],\; h'\in[H]\) & Stage indices \\[2pt]
    \(k\in[K],\; k'\in[K]\) & Episode indices \\[2pt]
    \(P_{h,s,a}\) & Transition kernel at stage \(h\) \\[2pt]
    $\widehat P_{h,s,a}^{k}$ & Empirical transition kernel at $k$-th episode \\[2pt]
    $\widetilde{P}_{h,s,a}$ & Sealed replay transition kernel \\[2pt]
    \(r_h^m(s,a)\) & Reward at triple $(h,s,a)$ for $m\in [M]$ \\[2pt]
    \(\mu\in\Delta(\mathcal S)\) & Initial-state distribution \\[2pt]
    $\V(P, X)$ & Variance on random variable $X$ on $P$  \\ [2pt]
    $\langle f, g \rangle$ & inner product of $f, g$ \\ [2pt]
    $x\wedge y$ & $x\wedge y = \min \set{x, y}$ \\ [2pt]
    $x\vee y $ & $x\vee y = \max\set{x, y}$ \\ [2pt]
    $b_h^{k, m}(s,a)$ & Optimistic bonus at $k$-th episode for reward $r^{m}$. \\[2pt]
    $V_h^{\pi,m}(s),\;Q_h^{\pi,m}(s,a)$ & Value and action-value under policy $\pi$ for reward $r^m$ \\[2pt]
    $V_h^{*,m}(s),\;Q_h^{*,m}(s,a)$ & Optimal value and action-value for reward $r^m$ \\[2pt]
    $\overline{V}_{h}^{\ell, k,m}$ & $\overline{V}_{h}^{\ell,k,m}(s):= \max_{a\in \cA}\overline{Q}_h^{\ell, k, m}(s,a)$ \\[2pt]
    $\underline{V}_{h}^{\ell, k,m}$ & $\underline{V}_{h}^{\ell, k,m}(s):= \sum_{a\in \cA}\pi_h^{\ell, k, m}(a|s)\cdot \underline{Q}_h^{\ell, k, m}(s,a)$ \\[2pt]
    $\pi_{h}^{\ell, k, m}$ & $\pi_h^{\ell, k, m}(s)\in \arg\max_{a\in \cA}\overline{Q}_{h}^{\ell, k, m}(s,a)$, counterfactual policy at $\ell$-th phase, $k$-th episode \\ [2pt]
    $\pi^{\ell, k}$ & $\pi^{\ell, k} = \pi^{\ell, k, m^{\ell, k}}$, roll-out policy at $\ell$-th phase, $k$-th episode \\ [2pt]
    $N_{h}^{\ell, k}(s,a)$ & Number of samples in the most recently completed learning stage before episode \(k\) of phase \(\ell\), used to construct $\widehat{P}_{h,s,a}^{\ell, k}$. \\ [2pt]
    $N_{\ell, h}(s,a)$ & Cumulative number of samples collected in learning stage for $(h,s,a)$ within phase $\ell$. \\ [2pt]
    $\widetilde{N}_{\ell, h}(s,a)$ & Number of samples collected for $(h,s,a)$ during the phase $\ell$. \\ [2pt]
    $N_{h}^{k}(s,a)$ & Number of samples used to construct $\widehat{P}_{h,s,a}^{k}$ before $k$-th episode\\[2pt]
    $\cH_{k-1}^{\ell}$ & The $\sigma$-field generated by the history prior to sampling $m^{\ell,k}$ and $s_1^{\ell,k}$ in episode $k$ of phase $\ell$. \\[2pt]
   $\cA_{k}^{\ell}$ & $\cH_{k-1}^{\ell}\vee \sigma(s_{1}^{\ell, k})$ \\[2pt]
   $\cB_{k}^{\ell}$ & $\cH_{k-1}^{\ell}\vee \sigma(m^{\ell, k})$ \\[2pt]
   $\cF_{\ell}$& The filtration before the sealed replay stage starts at phase $\ell$ \\[2pt]
\hline
    \end{tabular}
    \label{tab:parameters_ofMDP}
\end{table}


\clearpage

\subsection{Technical Lemmas}

%


\begin{lemma}[Bennett's inequality]\label{bennet}
Let $Z,Z_1,...,Z_n$  be i.i.d.~random variables with values in $[0,1]$ and let $0<\delta<1$. Define $\mathbb{V}Z = \mathbb{E}\left[(Z-\mathbb{E}Z)^2 \right]$. Then one has
\begin{align}
\mathbb{P}\left[ \left|\mathbb{E}\left[Z\right]-\frac{1}{n}\sum_{i=1}^n Z_i  \right| > \sqrt{\frac{  2\mathbb{V}Z \log(2/\delta)}{n}} +\frac{\log(2/\delta)}{n} \right]\leq \delta.\nonumber
\end{align}
\end{lemma}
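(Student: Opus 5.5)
The plan is the standard Chernoff--Bernstein argument applied to the centered variables, once for each tail, with the final failure probability split evenly between the two tails. Write $Y_i = Z_i - \mathbb{E}Z$. The $Y_i$ are i.i.d., mean zero, satisfy $|Y_i|\le 1$ (because $Z_i\in[0,1]$), and have variance $\sigma^2:=\mathbb{V}Z$. Set $u:=\log(2/\delta)/n$ and $t:=\sqrt{2\sigma^2 u}+u$. The claim is then that $\mathbb{P}\bigl[\bigl|\frac1n\sum_i Y_i\bigr|>t\bigr]\le\delta$, and it suffices to show that each one-sided event has probability at most $\delta/2$.

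\textbf{Moment generating function.} For $\lambda\in(0,3)$, expand $e^{\lambda y}=1+\lambda y+\sum_{j\ge2}\lambda^j y^j/j!$. Using $|y|\le1$, we have $\mathbb{E}[Y^j]\le\sigma^2$ for $j\ge 2$. Combining this with $j!\ge 2\cdot 3^{j-2}$ gives
\[
\mathbb{E}e^{\lambda Y}\le 1+\frac{\lambda^2\sigma^2}{2(1-\lambda/3)}\le\exp\!\Big(\frac{\lambda^2\sigma^2}{2(1-\lambda/3)}\Big).
\]
By independence and Markov's inequality,
\[
\mathbb{P}\Big[\sum_i Y_i\ge nt\Big]\le\exp\!\Big(-\lambda nt+\frac{n\lambda^2\sigma^2}{2(1-\lambda/3)}\Big).
\]
Choosing $\lambda=t/(\sigma^2+t/3)$, which lies in $(0,3)$, yields the Bernstein tail
\[
\mathbb{P}\Big[\sum_i Y_i\ge nt\Big]\le\exp\!\Big(-\frac{nt^2}{2(\sigma^2+t/3)}\Big).
\]
The same bound applies to $-Y_i$, which controls the other tail.

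\textbf{Matching the threshold.} Each one-sided probability is at most $\delta/2$ provided
\[
t^2\ge 2u\sigma^2+\tfrac{2}{3}ut .
\]
Substituting $t=\sqrt{2\sigma^2u}+u$, the left side is
\[
t^2=2\sigma^2u+2u\sqrt{2\sigma^2u}+u^2 ,
\]
and the right side is
\[
2u\sigma^2+\tfrac23u\sqrt{2\sigma^2u}+\tfrac23u^2 .
\]
The inequality therefore holds term by term. A union bound over the two tails then gives the total failure probability $\delta$.

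\textbf{Expected difficulty.} There is no real obstacle here. The only step needing care is the moment bound $\mathbb{E}[Y^j]\le\sigma^2$ for $j\ge2$, which uses $|Y|\le 1$ together with the factorial estimate, and the algebraic check that the stated threshold (with constant $1$ rather than $1/3$ on the linear term) is dominated. Since the stated bound is weaker than the sharp Bernstein form, this check has slack.
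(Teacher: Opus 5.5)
Your proof is correct. The paper states this lemma without proof, as a standard fact. It is the two-sided version, at level $\delta/2$ per tail, of the Bennett-type bound $\mathbb{E}Z-\frac1n\sum_i Z_i\le\sqrt{2\mathbb{V}Z\log(1/\delta)/n}+\log(1/\delta)/(3n)$ (as in Maurer and Pontil), with the linear coefficient loosened from $\frac13$ to $1$.

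You instead give a self-contained Chernoff argument through the Bernstein moment bound $\mathbb{E}e^{\lambda Y}\le\exp\bigl(\lambda^2\sigma^2/(2(1-\lambda/3))\bigr)$. This is more elementary than working with Bennett's function $h(x)=(1+x)\log(1+x)-x$, and it suffices precisely because of that loosened coefficient.

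Two small remarks. First, with your specific choice $\lambda=t/(\sigma^2+t/3)$, the tail $\exp\bigl(-nt^2/(2(\sigma^2+t/3))\bigr)$ inverts to $t=\frac u3+\sqrt{u^2/9+2u\sigma^2}\le\sqrt{2\sigma^2u}+\frac{2u}{3}$. So your route yields coefficient $\frac23$, not the sharp $\frac13$, which requires optimizing $\lambda$ exactly. This is still within the slack the statement allows, as your term-by-term check confirms. Second, when $\sigma^2=0$ your $\lambda$ equals $3\notin(0,3)$, so dispose of that case separately: then $Y_i=0$ almost surely and the probability is zero.
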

%
%
\begin{lemma}[Theorem 4 in  \cite{maurer2009empirical}]\label{empirical bernstein}
Consider any $0<\delta<1$ and any integer $n\geq 2$. 
Let $Z,Z_1,...,Z_n$  be a collection of i.i.d.~random variables falling within $[0,1]$. 
Define the empirical mean $\overline{Z} \coloneqq \frac{1}{n}\sum_{i=1}^n Z_{i}$ and empirical variance $\widehat{V}_n  \coloneqq \frac{1}{n}\sum_{i=1}^n (Z_i- \overline{Z})^2$. Then we have
\begin{align}
\mathbb{P}\left[ \left|\mathbb{E}\left[Z\right]-\frac{1}{n}\sum_{i=1}^n Z_i  \right| > \sqrt{\frac{  2\widehat{V}_n \log(4/\delta)}{n-1}} +\frac{7\log(4/\delta)}{3(n-1)} \right] \leq \delta.\nonumber
\end{align}
The claim follows by applying the one-sided bound to $Z$ and $1-Z$, each with failure probability $\delta/2$, and expressing the unbiased sample variance as $n\widehat V_n/(n-1)$.
\end{lemma}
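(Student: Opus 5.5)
The plan is to derive the two-sided statement from the one-sided empirical Bernstein bound of \citet{maurer2009empirical}, as the remark after the statement indicates. I will state that one-sided bound with the unbiased sample variance $V_n := \frac{1}{n-1}\sum_{i=1}^n (Z_i-\overline{Z})^2 = \frac{n}{n-1}\widehat V_n$. It says that, for any $\delta_0\in(0,1)$, with probability at least $1-\delta_0$,
\[
\mathbb{E}[Z]-\overline{Z}\le \sqrt{\frac{2V_n\log(2/\delta_0)}{n}}+\frac{7\log(2/\delta_0)}{3(n-1)}.
\]
To keep the argument self-contained I would recall its proof in two pieces.

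\textbf{First piece (Bernstein with the true variance).} The one-sided form of Bennett's inequality (the one-sided version of Lemma~\ref{bennet}) gives, with probability $1-\delta_0/2$,
\[
\mathbb{E}Z-\overline Z\le\sqrt{2\mathbb{V}Z\log(2/\delta_0)/n}+\log(2/\delta_0)/(3n).
\]

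\textbf{Second piece (concentration of the sample standard deviation).} With probability $1-\delta_0/2$,
\[
\sqrt{\mathbb{V}Z}\le\sqrt{V_n}+\sqrt{2\log(2/\delta_0)/(n-1)}.
\]
This follows because $(n-1)V_n$, suitably normalised, is a self-bounding function of the sample; the entropy-method lower-tail inequality for self-bounding functions then gives the deviation of $\sqrt{V_n}$ below $\sqrt{\mathbb{V}Z}$.

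\textbf{Combining the pieces.} Plugging the second bound into the first and taking a union bound gives
\[
\sqrt{2\mathbb{V}Z\log(2/\delta_0)/n}\le\sqrt{2V_n\log(2/\delta_0)/n}+2\log(2/\delta_0)/\sqrt{n(n-1)}.
\]
Using $\sqrt{n(n-1)}\ge n-1$ and adding the $\log(2/\delta_0)/(3n)$ term gives at most the constant $7/3$ over $n-1$.

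\textbf{Obtaining the two-sided statement.} Apply the one-sided bound with $\delta_0=\delta/2$ to $Z_i$, and separately to $1-Z_i$. The variables $1-Z_i$ are also i.i.d.\ in $[0,1]$ and have the same sample variance $V_n$, so the second application controls $\overline Z-\mathbb{E}Z$. A union bound over the two events gives total failure probability at most $\delta$, and each bound now carries $\log(4/\delta)$. Finally, substitute $V_n=n\widehat V_n/(n-1)$, which gives
\[
\frac{2V_n\log(4/\delta)}{n}=\frac{2\widehat V_n\log(4/\delta)}{n-1},
\]
exactly the form in the statement.

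\textbf{Main obstacle.} The only genuinely nontrivial step is the second piece, the lower-tail concentration of $\sqrt{V_n}$. I would first try to establish the self-bounding property of $\frac{n}{n-1}\cdot\frac{1}{2n}\sum_{i,j}(Z_i-Z_j)^2$ directly and then invoke the standard self-bounding tail bound. If that becomes delicate, I would simply cite Theorem~10 of \citet{maurer2009empirical} for this piece. All remaining steps are elementary algebra and union bounds.
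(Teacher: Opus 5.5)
Your proposal is correct and takes the same route as the paper. You cite the one-sided Maurer--Pontil bound, apply it to $Z$ and $1-Z$ at level $\delta/2$ each, and rewrite the unbiased sample variance as $n\widehat V_n/(n-1)$. Your optional re-derivation of the one-sided bound (Bennett plus Maurer--Pontil's Theorem~10) is also sound, with one caveat. The $\log(2/\delta_0)/(3n)$ term comes from the sharp one-sided Bernstein bound (Maurer--Pontil's Theorem~3), not from Lemma~\ref{bennet} as stated. That lemma's linear coefficient $1$ would give $3$ in place of $7/3$.
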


\begin{lemma}\label{lemma:con}
Let $X_1,X_2,\ldots$ be a sequence of random variables taking values in $[0,l]$. 
For any $k\geq 1$, let $\mathcal{F}_k$ be the $\sigma$-algebra generated by $(X_1,X_2,\ldots,X_k)$, and define 
	$Y_k \coloneqq \mathbb{E}[X_k \mid \mathcal{F}_{k-1}]$. Then for any $\delta>0$, we have 
\begin{align}
& \mathbb{P}\left[ \exists n, \sum_{k=1}^n X_k \geq  2\sum_{k=1}^n Y_k+ 2l\log\frac{1}{\delta}\right]\leq \delta\nonumber
\\  & \mathbb{P}\left[  \exists n,  \sum_{k=1}^n Y_k \geq 2\sum_{k=1}^n X_k + 2l\log\frac{1}{\delta}  \right]    \leq \delta .\nonumber 
\end{align}
\end{lemma}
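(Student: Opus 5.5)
We prove both inequalities with an exponential supermartingale and Ville's maximal inequality; the maximal inequality is what makes the bounds hold uniformly in $n$. First rescale: replacing $X_k$ by $X_k/l$ and $Y_k$ by $Y_k/l$ reduces to the case $l=1$, since both claimed inequalities are homogeneous in $(X,Y,l)$. So assume $X_k\in[0,1]$. Note that $Y_k=\mathbb{E}[X_k\mid\mathcal{F}_{k-1}]\in[0,1]$ is $\mathcal{F}_{k-1}$-measurable.

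\emph{First inequality.} By convexity, $e^{\lambda x}\le 1+(e^{\lambda}-1)x$ for $x\in[0,1]$. Using $1+u\le e^{u}$, this gives
\[
\mathbb{E}[e^{\lambda X_k}\mid\mathcal{F}_{k-1}]\le 1+(e^\lambda-1)Y_k\le \exp\bigl((e^\lambda-1)Y_k\bigr).
\]
Hence $Z_n:=\exp\bigl(\sum_{k\le n}(\lambda X_k-(e^\lambda-1)Y_k)\bigr)$ is a nonnegative supermartingale with $Z_0=1$. Ville's inequality gives $\mathbb{P}[\exists n: Z_n\ge 1/\delta]\le\delta$. Take $\lambda=1$. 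Off this event, for all $n$,
\[
\sum_{k\le n}X_k\le (e-1)\sum_{k\le n}Y_k+\log\frac1\delta .
\]
Since $e-1\le 2$, $Y_k\ge0$, and $\log(1/\delta)\le 2\log(1/\delta)$ when $\delta<1$, the first claim follows. If $\delta\ge1$ the claim is trivial.

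\emph{Second inequality.} Again by convexity, $e^{-\lambda x}\le 1-(1-e^{-\lambda})x$ on $[0,1]$. Therefore
\[
\mathbb{E}[e^{-\lambda X_k}\mid\mathcal{F}_{k-1}]\le \exp\bigl(-(1-e^{-\lambda})Y_k\bigr),
\]
and $Z'_n:=\exp\bigl(\sum_{k\le n}((1-e^{-\lambda})Y_k-\lambda X_k)\bigr)$ is a nonnegative supermartingale with $Z'_0=1$. With $\lambda=1$, Ville's inequality yields, with probability at least $1-\delta$ and for all $n$ simultaneously,
\[
\sum_{k\le n} Y_k\le \frac{1}{1-e^{-1}}\sum_{k\le n}X_k+\frac{1}{1-e^{-1}}\log\frac1\delta .
\]
Since $1/(1-e^{-1})\approx1.582\le2$ and $X_k\ge0$, we obtain $\sum Y_k\le 2\sum X_k+2\log(1/\delta)$. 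Undoing the rescaling gives both stated bounds with the factor $l$.

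The only delicate point is the uniformity over $n$. A fixed-$n$ Markov bound is not enough, so we use Ville's maximal inequality for nonnegative supermartingales. Equivalently, one can apply optional stopping at the first time the threshold is crossed, together with Fatou's lemma. The remaining steps are elementary convexity estimates plus checking that the constants $e-1$ and $1/(1-e^{-1})$ are at most $2$.
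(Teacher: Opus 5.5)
Your proof is correct. The paper lists this lemma among its technical tools without proof, but your argument is the same technique the paper uses for the analogous bounds in Lemma~\ref{lemma:pessimistic_occupancy_bounds} and in the proof of Lemma~\ref{lemma:optimistic_average_counterfactual}: the convexity bounds $e^{x}\le 1+(e-1)x$ and $e^{-x}\le 1-(1-e^{-1})x$ on $[0,1]$, the resulting nonnegative exponential supermartingales, Ville's inequality for uniformity in $n$, and the same constants $e-1\le 2$ and $(1-e^{-1})^{-1}\le 2$. Your rescaling tacitly needs $l>0$, but the statement needs it too, since for $l=0$ the event holds trivially at $n=1$.
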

\begin{lemma}[Freedman's inequality]\label{lemma:self-norm}
	Let $(M_n)_{n\geq 0}$ be a martingale with respect to a filtration $(\mathcal F_n)_{n\geq0}$ such that $M_0=0$ and $|M_n-M_{n-1}|\leq c$  $(\forall n\geq 1)$ 
	hold for some quantity $c>0$. 
	Define $\mathsf{Var}_{n} \coloneqq \sum_{k=1}^n \mathbb{E}\left[  (M_{k}-M_{k-1})^2 \mid \mathcal{F}_{k-1}\right]$ for every $n\geq 0$. Then for any integer $n\geq 1$, $0<\epsilon\leq nc^2$, and $0<\delta<1$, one has 
\begin{align}
	\mathbb{P} \left[       |M_n|\geq 2\sqrt{2}\sqrt{\mathsf{Var}_n \log\frac{1}{\delta} } +2\sqrt{\epsilon \log\frac{1}{\delta} } +2c\log\frac{1}{\delta} \right]\leq 2\left(\log_2\left(\frac{nc^2}{\epsilon}\right) +1 \right)\delta.\nonumber
\end{align}
\end{lemma}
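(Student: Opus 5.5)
The plan is the standard exponential-supermartingale argument for Freedman's inequality, followed by a peeling (stratification) over the scale of the predictable variance $\mathsf{Var}_n$. Write $D_k = M_k - M_{k-1}$, so $|D_k|\le c$ and $\mathbb{E}[D_k\mid\mathcal F_{k-1}]=0$. The inequality is symmetric in $M$ and $-M$, so it suffices to prove the one-sided bound with failure probability $(\log_2(nc^2/\epsilon)+1)\delta$ and then apply it to $-M_n$.

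\textbf{Step 1 (supermartingale).} For any fixed $\lambda\in(0,1/c]$ we have $\lambda D_k\le 1$. Using $e^x\le 1+x+x^2$ for $x\le 1$ and $\mathbb{E}[D_k\mid\mathcal F_{k-1}]=0$, we get
\[
\mathbb{E}\bigl[e^{\lambda D_k}\mid\mathcal F_{k-1}\bigr]\le 1+\lambda^2\mathbb{E}[D_k^2\mid\mathcal F_{k-1}]\le \exp\bigl(\lambda^2\mathbb{E}[D_k^2\mid\mathcal F_{k-1}]\bigr).
\]
Hence $Z_t=\exp(\lambda M_t-\lambda^2\mathsf{Var}_t)$ is a nonnegative supermartingale with $Z_0=1$. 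By Markov's inequality, $\mathbb{P}[\lambda M_n-\lambda^2\mathsf{Var}_n\ge \log(1/\delta)]\le\delta$.

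\textbf{Step 2 (peeling).} Since $\mathsf{Var}_n\in[0,nc^2]$, partition this range into the bucket $B_0=\{\mathsf{Var}_n\le\epsilon\}$ and the dyadic buckets $B_j=\{2^{j-1}\epsilon<\mathsf{Var}_n\le 2^j\epsilon\}$ for $j=1,\dots,J$, where $J$ is the smallest integer with $2^J\epsilon\ge nc^2$. Set $v_j=2^j\epsilon$ and tune $\lambda_j=\min\{\sqrt{\log(1/\delta)/v_j},\,1/c\}$. The choice $\lambda_j$ is deterministic, which is what makes Step 1 applicable. On $B_j$, outside the failure event of Step 1 with $\lambda=\lambda_j$,
\[
M_n\le \lambda_j v_j+\frac{\log(1/\delta)}{\lambda_j}\le 2\sqrt{v_j\log(1/\delta)}+c\log(1/\delta).
\]
To verify the last inequality, split on which term attains the minimum in $\lambda_j$:
\begin{itemize}[leftmargin=*]
\item If $\lambda_j=\sqrt{\log(1/\delta)/v_j}$, both terms equal $\sqrt{v_j\log(1/\delta)}$.
\item If $\lambda_j=1/c$, then $v_j/c\le\sqrt{v_j\log(1/\delta)}$, and the second term is $c\log(1/\delta)$.
\end{itemize}
For $j\ge1$ we have $v_j\le 2\mathsf{Var}_n$ on $B_j$, so the bound becomes $2\sqrt2\sqrt{\mathsf{Var}_n\log(1/\delta)}+c\log(1/\delta)$. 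For $j=0$ we have $v_0=\epsilon$, giving $2\sqrt{\epsilon\log(1/\delta)}+c\log(1/\delta)$. In every case the bound lies below the threshold in the lemma, which has $2c\log(1/\delta)$ and leaves slack.

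\textbf{Step 3 (union bound).} A union bound over the $J+1$ buckets and the two signs gives the stated failure probability. The only delicate point, and the main obstacle, is the bookkeeping that makes the bucket count at most $\log_2(nc^2/\epsilon)+1$ rather than $\lceil\log_2(nc^2/\epsilon)\rceil+1$. I would handle it by anchoring the dyadic grid at the top, taking $v_j=nc^2 2^{-(J-j)}$ with $J=\lfloor\log_2(nc^2/\epsilon)\rfloor$ and letting the lowest bucket absorb $[0,v_0]$ with $v_0\le 2\epsilon$. The resulting constant loss can be absorbed by the unused slack: the lemma allows $2c\log(1/\delta)$ where we only used $c\log(1/\delta)$, and the $\sqrt{\epsilon}$ term leaves a constant-factor margin.
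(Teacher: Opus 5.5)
You follow the standard route (the paper quotes this lemma from prior work without proof), and Steps 1 and 2 are correct. For deterministic $\lambda\le 1/c$, the process $\exp(\lambda M_t-\lambda^2\mathsf{Var}_t)$ is a supermartingale. On each bucket $B_j$ with $j\ge1$ you correctly get $M_n<2\sqrt2\sqrt{\mathsf{Var}_n\iota}+c\iota$, where $\iota:=\log(1/\delta)$.

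The gap is in Step 3. Your bottom-anchored grid has $\lceil\log_2(nc^2/\epsilon)\rceil+1$ buckets, which exceeds the allowed $\log_2(nc^2/\epsilon)+1$. So the repair is genuinely needed, but the reason you give for it is wrong. After anchoring at the top, $v_0$ can be arbitrarily close to $2\epsilon$. Your bucket estimate $\lambda_0 v_0+\iota/\lambda_0=2\sqrt{v_0\iota}$ (the case $\lambda_0=\sqrt{\iota/v_0}$) then approaches $2\sqrt2\sqrt{\epsilon\iota}$. On $\{\mathsf{Var}_n=0\}$ the lemma's threshold is only $2\sqrt{\epsilon\iota}+2c\iota$. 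Your estimate therefore places $M_n$ below the threshold only if $2(\sqrt{v_0}-\sqrt{\epsilon})\sqrt{\iota}\le c\iota$. For $v_0$ near $2\epsilon$ this forces $\epsilon$ to be at most about $1.46\,c^2\iota$, which need not hold, since $\epsilon$ may be as large as $nc^2$. The spare $c\iota$ does not grow with $\epsilon$. There is also no margin in the $\sqrt{\epsilon}$ term: your own bottom-anchored $B_0$ already used its full coefficient $2$.

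The fix is to keep $\mathsf{Var}_n$, rather than $v_0$, in the lowest bucket. Outside the failure event, $M_n<\lambda_0\mathsf{Var}_n+\iota/\lambda_0$.
\begin{itemize}
\item If $\lambda_0=\sqrt{\iota/v_0}$, then $\mathsf{Var}_n\le v_0$ gives $\lambda_0\mathsf{Var}_n\le\sqrt{\iota\,\mathsf{Var}_n}$. Hence $M_n<\sqrt{\mathsf{Var}_n\iota}+\sqrt{v_0\iota}<\sqrt{\mathsf{Var}_n\iota}+\sqrt2\sqrt{\epsilon\iota}$, which is below the threshold.
\item If $\lambda_0=1/c$, then $v_0\le c^2\iota$, so $M_n<v_0/c+c\iota\le 2c\iota$.
\end{itemize}
The buckets $j\ge1$ are unchanged, since there $v_j=2v_{j-1}<2\mathsf{Var}_n$. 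The top-anchored grid, with $\lfloor\log_2(nc^2/\epsilon)\rfloor+1$ buckets per sign, then gives exactly the stated failure probability.
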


\begin{lemma}[Lemma 14 in~\cite{zhang2020reinforcement}]\label{lemma:mono}
Let $f: \Delta^{S} \times \mathbb{R}^S \times \mathbb{R} \times \mathbb{R} \rightarrow \mathbb{R}$ with $f(p,v,n,\iota) =pv+ \max\left\{\bar{c}_1\sqrt{\frac{ \mathbb{ V}(p,v) \iota }{n }} ,\bar{c}_2\frac{\iota}{n} \right\}$ with $\bar{c}_1= \frac{20}{3}$ and $\bar{c}_2 = \frac{400}{9}$.
Then  $f$ satisfies
\begin{enumerate}
\item $f(p,v,n,\iota)$ is non-decreasing in $v(s)$  for all $p\in \Delta^{S}$,$v\in[0,1]^S$  and $n,\iota>0$;
\item $f(p,v,n,\iota)\geq pv +  2\sqrt{\frac{ \mathbb{ V}(p,v) \iota }{n }} +\frac{14\iota}{3n}$ for all $p,v$ and $n,\iota>0$. 
\end{enumerate}
\end{lemma}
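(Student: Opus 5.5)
The plan is to prove the two items separately. The second is a one-line convexity bound. The first reduces to a one-variable derivative computation, and it works because the constants satisfy $\bar c_1^2=\bar c_2$.

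\emph{Item 2.} For any $x,y\ge0$ and $\theta\in[0,1]$ we have $\max\{x,y\}\ge \theta x+(1-\theta)y$. I will take $\theta=3/10$, $x=\bar c_1\sqrt{\mathbb V(p,v)\iota/n}$ and $y=\bar c_2\iota/n$. Then
\[
\theta\bar c_1=\tfrac{3}{10}\cdot\tfrac{20}{3}=2,
\qquad
(1-\theta)\bar c_2=\tfrac{7}{10}\cdot\tfrac{400}{9}=\tfrac{280}{9}\ge\tfrac{14}{3}.
\]
Adding $pv$ to both sides gives the claimed lower bound for all $p,v$ and $n,\iota>0$.

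\emph{Item 1.} Fix $p,n,\iota$ and a coordinate $s$. Regard $t\mapsto f(p,v+te_s,n,\iota)$ as a function of one real variable on the interval where $v+te_s\in[0,1]^S$. Write $f=\max\{g_1,g_2\}$ with
\[
g_1=pv+\bar c_1\sqrt{\mathbb V(p,v)\iota/n},\qquad g_2=pv+\bar c_2\iota/n .
\]
The function $g_2$ has derivative $p(s)\ge0$ in $v(s)$. Since $f$ is continuous in $t$, it suffices to show that its one-sided derivatives are nonnegative wherever $f=g_1>g_2$; on the complementary set $f=g_2$, which is nondecreasing. Note that $g_1>g_2$ forces $\mathbb V(p,v)>0$, so $g_1$ is differentiable there. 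The only points where the square root is nondifferentiable have $\mathbb V=0$, and at those points $g_2$ is the active branch.

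On the region $g_1\ge g_2$, I compute
\[
\partial_{v(s)}\mathbb V(p,v)=2p(s)\bigl(v(s)-pv\bigr),
\]
and hence
\[
\partial_{v(s)}g_1=p(s)\Bigl(1+\bar c_1\sqrt{\iota/n}\,\tfrac{v(s)-pv}{\sqrt{\mathbb V(p,v)}}\Bigr).
\]
Set $x=\sqrt{\iota/n}$. The condition $g_1\ge g_2$ reads $\bar c_1\sqrt{\mathbb V}\,x\ge \bar c_2x^2$, that is, $x\le \bar c_1\sqrt{\mathbb V}/\bar c_2$. This gives
\[
\bar c_1\,x/\sqrt{\mathbb V}\;\le\;\bar c_1^2/\bar c_2=1 .
\]
Since $v\in[0,1]^S$, we have $|v(s)-pv|\le1$. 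Therefore $\partial_{v(s)}g_1\ge p(s)(1-|v(s)-pv|)\ge0$.

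The only delicate step is this case split at the switching points between the two branches of the max. It is handled by the continuity and piecewise-monotonicity argument above: a continuous function whose one-sided derivatives are nonnegative everywhere, except at finitely many switch points, is nondecreasing.
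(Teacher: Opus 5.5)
Your proof is correct and takes essentially the standard route for this lemma, which the paper imports without proof from Lemma~14 of \cite{zhang2020reinforcement}: item~2 follows by lower-bounding the max by a convex combination of its two branches, and item~1 by differentiating the square-root branch and using $\bar c_1^2=\bar c_2$ together with $v(s)-pv\ge -1$ on $[0,1]^S$. Your treatment of the kinks is more careful than the original, and the finiteness of the switch set is not actually needed: your derivative bound already holds on the closed region $g_1\ge g_2$ (where $\mathbb V>0$), so both one-sided derivatives of $t\mapsto f(p,v+te_s,n,\iota)$ are nonnegative at every point.
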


\begin{lemma}\label{lemma:doubling}
Let $N_h^k(s,a)$ denote the size of the most recently completed batch in Algorithm~\ref{alg:mvp} before episode $k$, with value zero if no batch has been completed. For $K\geq 1$, it holds that
\begin{align}
\sum_{k=1}^K \sum_{h=1}^H \frac{1}{\max\{ N_h^k(s_h^k,a_h^k),1\}}\leq 2SAH\log_2(2K)
\end{align}
\end{lemma}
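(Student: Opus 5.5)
The plan is to decouple the double sum across the triples $(h,s,a)$ and bound each triple's contribution by a dyadic counting argument that uses only the doubling trigger rule in line~\ref{line:trigger-set} of Algorithm~\ref{alg:mvp}.

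\textbf{Step 1 (regrouping by triple).} Rewrite the left-hand side as
\[
\sum_{h=1}^H\sum_{(s,a)\in\cS\times\cA}\ \sum_{k:\,(s_h^k,a_h^k)=(s,a)} \frac{1}{\max\{N_h^k(s,a),1\}}.
\]
Within a single episode, layer $h$ is visited exactly once, so for fixed $(h,s,a)$ the inner sum runs over the successive visits $i=1,\dots,n$. Here $n=n_h(s,a)\le K$ is the total number of visits to $(h,s,a)$. It then suffices to show that each inner sum is at most $2\log_2(2n)\le 2\log_2(2K)$; summing over the $SAH$ triples gives the claim.

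\textbf{Step 2 (batch sizes along the visit sequence).} Consider the $i$-th visit to $(h,s,a)$. The value $N_h^k(s,a)$ used at that visit is the size of the last completed batch among the previous $i-1$ visits. Triggers fire when the cumulative count reaches $1,2,4,\dots,2^j$, and each trigger records the samples collected since the previous one. Hence:
\begin{itemize}
\item after the trigger at $1$ the batch size is $1$;
\item after the trigger at $2^j$ with $j\ge1$ the batch size is $2^{j-1}$.
\end{itemize}
So the first visit sees $N=0$ and contributes $1$, and the second visit contributes $1$. For $j\ge1$, each of the $2^j$ visits $i\in\{2^j+1,\dots,2^{j+1}\}$ sees $N=2^{j-1}$, so together they contribute exactly $2^j/2^{j-1}=2$. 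The two visits $i=3,4$ see $N=1$ and contribute $2$, consistent with this count.

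\textbf{Step 3 (summing the blocks).} Inductively, the cumulative sum after $n=2^j$ visits equals $2j$ for $j\ge1$, and partial blocks only contribute less. Hence the cumulative sum after $n$ visits is at most $2\lceil\log_2 n\rceil\le 2\log_2(2n)$, with the case $n=1$ trivial.

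Since $n\le K$, we have $\lfloor\log_2 n\rfloor\le\lfloor\log_2(KL)\rfloor$, so the cap on the trigger set in Algorithm~\ref{alg:mvp} never binds and the dyadic structure holds throughout.

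\textbf{Main obstacle.} The only delicate point is the bookkeeping in Step 2. The count update happens inside the episode, while $N_h^k$ is the value frozen before episode $k$; one must check that these align so that the off-by-one visits (the first two) cost at most $2$ in total and do not break the $2$-per-block count.
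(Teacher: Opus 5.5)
Your proof is correct. The paper states this lemma without proof, importing it from the \textsc{MVP} doubling analysis of \citet{zhang2024settling}, and your per-triple dyadic count is the standard argument behind it: the first two visits contribute $2$, each block of visits $2^j+1,\dots,2^{j+1}$ sees a batch of size $2^{j-1}$ and contributes exactly $2$, and since at most $KL$ visits occur, the cap $\lfloor\log_2(KL)\rfloor$ on the trigger set never removes a trigger that would actually be reached. The alignment issue you flag is already settled by your Step 1: $N_h(s,a)$ changes only on visits to $(h,s,a)$ itself, at most once per episode, so the value frozen at the start of the episode containing the $i$-th visit is exactly the value after the first $i-1$ visits.
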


\begin{lemma}[Ville's inequality]\label{lemma:ville}
	Let $X_0, X_1, \cdots$ be a non-negative supermartingale. Then for any real number $a > 0$,
	\begin{equation}
		\mathbb{P}\sqbk{\sup_{n\geq 0} X_n\geq a} \leq \frac{\mathbb{E}[X_0]}{a}
	\end{equation}
\end{lemma}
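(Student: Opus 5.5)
The plan is the standard stopping-time argument: first reduce to a finite horizon, then apply optional stopping at a bounded stopping time, and finally pass to the limit. Fix $a>0$ and let $(\mathcal G_n)_{n\ge0}$ be the filtration for which $(X_n)$ is a non-negative supermartingale. Define the hitting time
\[
\tau:=\inf\{n\ge 0: X_n\ge a\},
\]
with $\inf\emptyset=\infty$. Since $\{\tau\le n\}=\bigcup_{j\le n}\{X_j\ge a\}\in\mathcal G_n$, $\tau$ is a stopping time. The event $\{\sup_{n\ge0}X_n\ge a\}$ differs from $\{\tau<\infty\}$ only if the supremum equals $a$ without being attained. To avoid this, I will bound $\mathbb P[\sup_n X_n\ge a']$ for $a'<a$ arbitrary and let $a'\uparrow a$ at the end, or equivalently first prove $\mathbb P[\sup_n X_n> b]\le \mathbb E[X_0]/b$ and then use $\{\sup\ge a\}\subseteq\{\sup> a'\}$ for all $a'<a$. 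So it suffices to show $a\,\mathbb P[\tau<\infty]\le\mathbb E[X_0]$.

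For a fixed $N\in\mathbb N$, the truncated time $\tau\wedge N$ is a bounded stopping time. The stopped process $X_{n\wedge\tau}$ is again a supermartingale, since its increments are $\mathbf 1\{\tau>n-1\}(X_n-X_{n-1})$ with a predictable indicator. Hence optional stopping for bounded times gives $\mathbb E[X_{\tau\wedge N}]\le \mathbb E[X_0]$. On $\{\tau\le N\}$ we have $X_{\tau\wedge N}=X_\tau\ge a$, and on the complement $X_{\tau\wedge N}\ge 0$ by non-negativity. Therefore
\[
a\,\mathbb P[\tau\le N]\le \mathbb E\bigl[X_{\tau\wedge N}\mathbf 1\{\tau\le N\}\bigr]\le \mathbb E[X_{\tau\wedge N}]\le \mathbb E[X_0].
\]

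Letting $N\to\infty$, the events $\{\tau\le N\}$ increase to $\{\tau<\infty\}$, so continuity from below yields $a\,\mathbb P[\tau<\infty]\le\mathbb E[X_0]$. Combined with the approximation $a'\uparrow a$ described above, and noting that the right-hand side $\mathbb E[X_0]/a'$ is continuous in $a'$, this gives the claim.

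The only step needing care is the boundary issue (supremum equal to $a$ but not attained) together with the limit $N\to\infty$. Both are handled by monotone continuity of probability, so I expect no genuine obstacle. Integrability is automatic because $X_0\ge0$, with the bound trivial if $\mathbb E[X_0]=\infty$.
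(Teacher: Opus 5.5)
Your proof is correct. The paper gives no proof of this lemma; it lists it among the technical lemmas as a standard fact. Your argument is the standard textbook one, optional stopping at $\tau\wedge N$, which is Doob's maximal inequality for nonnegative supermartingales, so there is nothing to reconcile.

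Two small remarks.

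The boundary issue is stated most cleanly as $\{\sup_n X_n\ge a\}\subseteq\{\tau_{a'}<\infty\}$ for every $a'<a$, where $\tau_{a'}$ is the hitting time of $[a',\infty)$. This gives $\mathbb{P}[\sup_n X_n\ge a]\le\mathbb{E}[X_0]/a'$, and then you let $a'\uparrow a$. Your first phrasing, bounding $\mathbb{P}[\sup_n X_n\ge a']$, would just move the same issue to level $a'$. Your ``equivalently'' version, via the strict inequality $\sup_n X_n>b$, is the right one.

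Every application in the paper involves only finitely many indices: $\max_{\ell\in[L],k\in[K]}M_k^{\ell}$, $Y_K$, and $Z_K$. So your fixed-$N$ inequality $a\,\mathbb{P}[\tau\le N]\le\mathbb{E}[X_0]$ already yields $\mathbb{P}[\max_{n\le N}X_n\ge a]\le\mathbb{E}[X_0]/a$, which is all that is actually used. In that finite setting the maximum is attained, so the boundary issue does not arise.
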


\clearpage
\section{Auxiliary Algorithms}
\label{sec:app:algcode}
In this section, we include Algorithm~\ref{alg:replay}: \texttt{Sealed-Replay} and~\ref{alg:pessimistic_evaluation}: \texttt{Conservative-Evaluation} for completeness.
{\small
\begin{algorithm}[htbp]
\caption{$\texttt{Sealed-Replay}(\cR, N, \cP)$}
\label{alg:replay}
\begin{algorithmic}[1]
{\small
\State {\textbf{input:} tuple set $\cR$, fixed sample size $N=\{N_{h}(s,a)\}$, indexed policy sequence $\cP=(\pi^1, \ldots, \pi^{K})$}
\State {\textbf{initialization:} set $\widetilde{N}_{h}(s,a), \widetilde{N}_{h}(s,a,\cdot)\gets 0$, initialize $\widetilde P_{h,s,a}$ to the fixed uniform distribution $\mathrm{Unif}(\cS)$ for all $(h,s,a)\in\cR$; and set $L_{\mathrm{rep}}=8$.}
\For{$l= 1,2,\cdots, L_{\mathrm{rep}}$}
\For{$k=1,2,\cdots,K$}
\State{Start a fresh episode with $\widetilde{s}_1\sim\mu$.}
\For{$h=1,2,\cdots, H$}
\State{Observe $\widetilde{s}_{h}$, execute $\widetilde{a}_{h} = \pi^{k}_{h}(\widetilde{s}_{h})$, observe $\widetilde{s}_{h+1}$.}
\If{$(h,\widetilde{s}_{h}, \widetilde{a}_{h})\in \cR$ and $\widetilde{N}_{h}(\widetilde{s}_{h}, \widetilde{a}_{h})< N_{h}(\widetilde{s}_{h}, \widetilde{a}_{h})$}
\State{Update $\widetilde{N}_{h}(\widetilde{s}_{h}, \widetilde{a}_{h}, \widetilde{s}_{h+1})\gets \widetilde{N}_{h}(\widetilde{s}_{h}, \widetilde{a}_{h}, \widetilde{s}_{h+1}) + 1$, $\widetilde{N}_{h}(\widetilde{s}_{h}, \widetilde{a}_{h})\gets \widetilde{N}_{h}(\widetilde{s}_{h}, \widetilde{a}_{h}) + 1$}
\State{Update retained empirical kernel with $\widetilde{P}_{h,\widetilde{s}_{h}, \widetilde{a}_{h},s'} = \frac{\widetilde{N}_{h}(\widetilde{s}_{h}, \widetilde{a}_{h}, s')}{\widetilde{N}_{h}(\widetilde{s}_{h}, \widetilde{a}_{h})}$ for all $s'\in \cS$}
\EndIf
\EndFor
\EndFor
\EndFor
}
\State{\textbf{return: }$\widetilde{P}$, $\widetilde{N}$}
\end{algorithmic}
\end{algorithm}
}

{\small
\begin{algorithm}[ht]
\caption{$\texttt{Conservative-Evaluation}(P, N, \cR, \pi, r)$}
\label{alg:pessimistic_evaluation}
\begin{algorithmic}[1]
{\small
\State {\textbf{input:} Evaluation kernel $P$, count function $N$, retained tuple set $\cR$, policy $\pi$, reward function $r$}
\State {\textbf{initialization:} Set $\underline{V}_{H+1}(s)\gets 0$ for all $s\in \cS$}
\For{$h=H, H-1, \ldots, 1$}{\color{blue}\Comment{evaluate $\pi$ under $r$}}
\For{$(s,a)\in \cS\times \cA$}
\State{\[
\underline{Q}_{h}(s,a)\gets \begin{cases}
                \max\set{r_h(s,a)+\dfrac{N_{h}(s,a)}{N_{h}(s,a) + 1}\langle P_{h,s,a},\underline V_{h+1}\rangle,0}, &\substack{(h,s,a)\in\cR},\\
                r_{h}(s,a),&\text{otherwise}.
            \end{cases}
			\]}
\EndFor
\State{For all $s\in\cS$, set $\underline{V}_{h}(s)\gets\sum_{a\in\cA}\pi_h(a|s)\underline{Q}_{h}(s,a)$.}
\EndFor
}
\State{\textbf{return:} $\underline{V}_{1}$}
\end{algorithmic}
\end{algorithm}
}

\section{Missing Proofs in Section~\ref{sec:minimax}}\label{app:minimax_additional_proof}

In this section, we provide the full proof of Theorem~\ref{thm:adv_sample_complexity}.

\begin{theorem}[Restatement of
Theorem~\ref{thm:adv_sample_complexity}]
\label{thm:adv_sample_complexity_restatement}
There exists a universal constant $C>0$ such that the following holds.
Under the multi-reward MDP setting with $M\ge2$ and $SAH\ge2$,
fix any $\delta\in(0,1)$ and $\epsilon\in(0,H]$. Set
\[
\iota=\log^4\frac{SAH\log M}{\min\set{\epsilon,1}\delta},\qquad
K=\left\lceil C\frac{SAH^2\iota^{3/4}}{\epsilon}\right\rceil,\qquad
L=\left\lceil C\frac{H\iota^{1/4}}{\epsilon}\log M\right\rceil.
\]
Run Algorithm~\ref{alg:main} for $L$ phases with $K$ learning episodes per phase.
With probability at least $1-\delta$, the algorithm returns episode-level mixture policies
$\{\overline{\pi}^{m}\}_{m\in[M]}$ such that
\[
V_0^{*,m}
-
V_0^{\overline{\pi}^{m},m}
\le \epsilon
\qquad\text{for every }m\in[M],
\]
using a total of $9KL$ episodes, which is at most
\[
O\bracket{\frac{SAH^3}{\epsilon^2}(\log M)
\log^4\bracket{\frac{SAH\log M}{\min\set{\epsilon,1}\delta}}},
\]
including all learning and replay episodes.
\end{theorem}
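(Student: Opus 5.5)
We reduce $\epsilon$-optimality of each mixture $\overline\pi^m$ to a bound on the worst-reward average optimistic surplus. Because $\overline\pi^m$ is the uniform mixture over $[L]\times[K]$,
\[
V_0^{*,m}-V_0^{\overline\pi^m,m}=\frac{1}{KL}\sum_{\ell,k}\bigl(V_0^{*,m}-V_0^{\pi^{\ell,k,m},m}\bigr).
\]
We replace $V_0^{*,m}-V_0^{\pi^{\ell,k,m},m}$ by its value at the sampled initial state $s_1^{\ell,k}$. The policy $\pi^{\ell,k,m}$ is determined before $s_1^{\ell,k}\sim\mu$ is drawn, so the differences form a martingale difference sequence bounded by $H$. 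Lemma~\ref{lemma:con}, with a union bound over $m$, then gives
\[
V_0^{*,m}-V_0^{\overline\pi^m,m}\le \frac{2}{KL}\sum_{\ell,k}\bigl(V_1^{*,m}(s_1^{\ell,k})-V_1^{\pi^{\ell,k,m},m}(s_1^{\ell,k})\bigr)+O\Bigl(\frac{H\log(M/\delta)}{KL}\Bigr).
\]
On the optimism event of Lemma~\ref{lemma:optimism}, the sum on the right is at most $\frac{2}{L}\sum_\ell(\hat v^+_{\ell,m}-v_{\ell,m})$. It therefore suffices to show $\max_m\frac1L\sum_\ell(\hat v^+_{\ell,m}-v_{\ell,m})\le c\,\epsilon$. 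We split
\[
\hat v^+-v=(\hat v^+-\hat v^-)+(\hat v^- -v).
\]
The second part, summed over $\ell$, is at most $H\log(64M/\delta)$ for all $m$ simultaneously by Lemma~\ref{lemma:pessimism}. After dividing by $L$ this is $O(\epsilon)$ by the choice of $L$.

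For the first part we use the potential $W_\ell$. Write $g_{\ell,m}=(\hat v^+_{\ell,m}-\hat v^-_{\ell,m})/H\in[-1,1]$, which holds since both estimates lie in $[0,H]$. Then
\[
e^{g}=e^{(\hat v^+-v)/H}\,e^{(v-\hat v^-)/H}.
\]
Conditionally on $\cF_\ell$, $\hat v^+$ and $v$ are fixed and the randomness is only in the replay. Using $e^{x}\le 1+2x_+$ on the first factor (valid for $x\le1$ with $x_+$) and the bound
\[
\E\bigl[e^{(v-\hat v^-)/H}\mid\cF_\ell\bigr]\le 1+\tfrac{2}{H}\bigl(v-\E[\hat v^-\mid\cF_\ell]\bigr),
\]
we control the second factor. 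Getting this last bound needs $e^{y}\le 1+y+y^2$ together with a variance term. Here I would try $y^2\le |y|$ and the fact that $v-\hat v^-$ has conditional mean $\ge0$ with controlled magnitude, and I expect to spend effort making the constant $4$ come out. Combining the two factors yields the displayed one-step inequality with factor $4/H$. The process
\[
W_{\ell+1}\exp\Bigl(-\tfrac4H\sum_{\ell'\le\ell}\sum_m p_{\ell'}(m)\bigl[(\hat v^+-v)_++v-\E[\hat v^-\mid\cF_{\ell'}]\bigr]\Bigr)
\]
is then a nonnegative supermartingale, using $1+x\le e^x$. Ville's inequality (Lemma~\ref{lemma:ville}) and $W_{L+1}\ge \frac1M e^{\sum_\ell g_{\ell,m}}$ give, with probability $1-\delta/64$,
\[
\max_m\sum_\ell(\hat v^+_{\ell,m}-\hat v^-_{\ell,m})\le 4\sum_\ell\sum_m p_\ell(m)\bigl[(\hat v^+_{\ell,m}-v_{\ell,m})_++v_{\ell,m}-\E[\hat v^-_{\ell,m}\mid\cF_\ell]\bigr]+H\log\frac{64M}{\delta}.
\]
A subtlety is that $\hat v^+_{\ell,m}$ is $\cF_\ell$-measurable, while the update for $p_{\ell+1}$ uses the replay. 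The filtration therefore has to be indexed so that $\cF_{\ell+1}$ contains phase $\ell$'s replay, which it does.

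The remaining terms are bounded as follows. On optimism, $(\cdot)_+$ is the identity, and Lemma~\ref{lemma:optimistic_average_counterfactual} gives
\[
\frac1L\sum_\ell\sum_m p_\ell(m)(\hat v^+_{\ell,m}-v_{\ell,m})\lesssim\sqrt{\frac{SAH^3\log^3(KL)\log(M/\delta')}{KL}}+\frac{H\log(1/\delta)}{KL}.
\]
Lemma~\ref{lemma:pessimistic_average_counterfactual} gives $O(SAH^2\log(1/\delta'')/K)$ per phase. With $KL\asymp SAH^3\iota\log M/\epsilon^2$ and $K\asymp SAH^2\iota^{3/4}/\epsilon$, each term is $O(\epsilon/C')$. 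The main obstacle in this step is checking that $\log^3(KL)\log(M/\delta')\lesssim\iota\log M$. I would first expand $\log(1/\delta')=\log(64SAHKLM/\delta)$ and use that $\log(KL)$ is polylog in $SAH\log M/(\min\{\epsilon,1\}\delta)$. The $\log M$ appearing inside $\log(M/\delta')$ is absorbed multiplicatively into the factor $\log M$ rather than into $\iota$; the separate $\log(1/\epsilon)$ terms are where the $\min\{\epsilon,1\}$ is needed. Finally, a union bound over all failure events, namely $4SAHKLM\delta'$, $5\delta'$, $4\cdot\delta/64$, and the $SAHL\delta''$ terms, totals at most $\delta$. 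The episode count $9KL$ follows directly from the choices of $K$ and $L$.
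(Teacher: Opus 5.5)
Your architecture is the paper's. It has the initial-state transfer (using Lemma~\ref{lemma:con} instead of the paper's explicit exponential supermartingale is a harmless variation), optimism, and the split $\hat v^+-v=(\hat v^+-\hat v^-)+(\hat v^--v)$ with Lemma~\ref{lemma:pessimism} on the second part. It closes with the potential $W_\ell$ and Lemmas~\ref{lemma:optimistic_average_counterfactual} and~\ref{lemma:pessimistic_average_counterfactual}. Factoring off the $\cF_\ell$-measurable part via $e^{(\hat v^+-v)/H}\le1+2(\hat v^+-v)_+/H$ is also fine.

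The gap is exactly the step you flag: the bound $\E[e^{Y}\mid\cF_\ell]\le1+c\,\E[Y\mid\cF_\ell]$ for $Y=(v_{\ell,m}-\hat v^-_{\ell,m})/H$. The evaluator is conservative only in conditional mean, not pointwise, so $Y$ takes both signs. The bound therefore requires $\E[Y^2\mid\cF_\ell]\lesssim\E[Y\mid\cF_\ell]$. Your proposed route ($y^2\le|y|$ plus a nonnegative, small conditional mean, with $|Y|\le1$) cannot deliver this. For instance, $Y=\pm1$ with equal probabilities has mean $0$ yet $\E e^{Y}=\cosh 1>1$. Lemma~\ref{lemma:pessimistic_average_counterfactual} controls only the mean and says nothing about the spread.

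The missing idea is to use the exponential-moment half of Lemma~\ref{lemma:pessimism}, $\E[e^{-Y}\mid\cF_\ell]\le1$, inside the one-step inequality, not only for the cumulative sum. Combined with $e^{-y}\ge1-y+y^2/3$ on $[-1,1]$, it gives $\E[Y^2\mid\cF_\ell]\le3\,\E[Y\mid\cF_\ell]$, and $\E[Y\mid\cF_\ell]\ge0$ follows by Jensen. Then $e^{y}\le1+y+y^2$ yields $\E[e^{Y}\mid\cF_\ell]\le1+4\,\E[Y\mid\cF_\ell]$. The paper does this in one step for $X=(\max\{\hat v^+_{\ell,m},v_{\ell,m}\}-\hat v^-_{\ell,m})/H$, using $e^{-X}\le e^{-Y}$.

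Your constant $2$ is not attainable from these hypotheses. Take $Y=\pm1$ with $\mathbb{P}(Y=1)=e/(e+1)$: then $\E e^{-Y}=1$ but $\E e^{Y}=e-1+e^{-1}>1+2\,\E Y$. This is immaterial, since $(1+2a)(1+4b)\le1+8(a+b)$ for $a,b\in[0,1]$ and the $8$ is absorbed into $C$. But without this second-moment-from-exponential-moment step, the supermartingale property of your potential is not established. Hence the multiplicative-weights comparison that handles negative gaps is also unproved.
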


\paragraph{Proof outline:} We first bound the weighted average optimistic error over all $KL$ learning episodes. Then, we control the cumulative overestimation and conditional mean bias of conservative evaluation. The reward-weight update combines these bounds to control each reward's cumulative certificate across the phases $
\sum_{\ell=1}^L(\hat v_{\ell,m}^+-\hat v_{\ell,m}^-).$ 

Finally, we use initial-state concentration to transfer the guarantee to the output mixture $\overline\pi^m$.

We retain $\delta'$ and $\delta''$ as the local confidence parameters, with $0<\delta'\leq\min\{e^{-1},(2KL)^{-1}\}$ and $0<\delta''\leq e^{-1}$. These ranges hold for the choices in Algorithm~\ref{alg:main}.

At phase $\ell$, recall that   $\cF_{k-1}^{\ell}$ contains all the information before the initial state $s_{1}^{\ell, k}$. I.e., $\cF_{k-1}^{\ell}$ includes all completed learning and replay history, reward-sampling distribution $p_{\ell}$, and the sampled reward index $m^{\ell,k}$, before observing $s_1^{\ell,k}$. Let $\cF_\ell$ contain the history immediately before phase $\ell$'s sealed replay stage, and recall
\[
v_{\ell,m}:=\frac1K\sum_{k=1}^K V_1^{\pi^{\ell,k,m},m}(s_1^{\ell,k}),\qquad
\overline\pi^m=\mathrm{Unif}\bigl((\pi^{\ell,k,m})_{\ell\in[L],\,k\in[K]}\bigr).
\]
Within a fixed phase, we suppress $\ell$ when there is no confusion. Every candidate $\pi^{\ell,k,m}$ is fixed before the independent draw $s_1^{\ell,k}\sim\mu$. Define
\[
\begin{aligned}
\Delta_{\mathrm{init}}^{m}
:={}&\frac{1}{KL}\sum_{\ell=1}^L\sum_{k=1}^K
\E_{s_1\sim\mu}\sqbk{V_1^{*,m}(s_1)-V_1^{\pi^{\ell,k,m},m}(s_1)}\\
&-\frac{1}{KL}\sum_{\ell=1}^L\sum_{k=1}^K
\bracket{V_1^{*,m}(s_1^{\ell,k})-V_1^{\pi^{\ell,k,m},m}(s_1^{\ell,k})}.
\end{aligned}
\]
For each $m\in[M]$, the output policy error decomposes as
\begin{equation}
\label{equ:sample_complexity_decomposition}
V_0^{*,m}-V_0^{\overline\pi^m,m}
=\underbrace{\Delta_{\mathrm{init}}^{m}}_{\mathrm{Term}_1}
+\underbrace{\frac1{KL}\sum_{\ell=1}^L\sum_{k=1}^K
\bracket{V_1^{*,m}(s_1^{\ell,k})-V_1^{\pi^{\ell,k,m},m}(s_1^{\ell,k})}}_{\mathrm{Term}_2}.
\end{equation}

\paragraph{Bound of $\mathrm{Term}_1$.} Since the counterfactual policy $\pi^{\ell, k, m}$ is fixed before drawing $s_{1}^{\ell, k}$, the functions $V_1^{*,m}$ and $V_1^{\pi^{\ell,k,m},m}$ are $\cF_{k-1}^{\ell}$-measurable and 
\[
0\leq V_1^{*,m}(s) - V_1^{\pi^{\ell,k,m},m}(s)\leq H\implies 0\leq \frac{V_1^{*,m}(s) - V_1^{\pi^{\ell,k,m},m}(s)}{H}\leq 1
\]
Since for $x\in [0,1]$, the convexity of exponential function gives 
\[
e^{-x}\leq (1-x)e^{0} +xe^{-1}=1-(1-e^{-1})x.
\]
Hence
\[
\exp\bracket{-\frac{V_1^{*,m}(s_{1}^{\ell, k}) - V_1^{\pi^{\ell,k,m},m}(s_{1}^{\ell, k})}{H}} \leq 1 - \frac{1-e^{-1}}{H}\bracket{V_1^{*,m}(s_{1}^{\ell, k}) - V_1^{\pi^{\ell,k,m},m}(s_{1}^{\ell, k})}.
\]
Taking the conditional expectation on both sides with $\cF_{k-1}^{\ell}$, recall that $\pi^{\ell, k, m}$ is fixed before the draw of $s_{1}^{\ell, k}$ and $V_1^{*,m}$ and $V_1^{\pi^{\ell,k,m},m}$ are $\cF_{k-1}^{\ell}$-measurable, then
\begin{align}
    &\E\sqbk{\exp\bracket{-\frac{V_1^{*,m}(s_{1}^{\ell, k}) - V_1^{\pi^{\ell,k,m},m}(s_{1}^{\ell, k})}{H}}\mid \cF_{k-1}^{\ell}}\nonumber\\
    \leq&1 - \frac{1-e^{-1}}{H}\bracket{V_{0}^{*, m} - V_{0}^{\pi^{\ell, k, m}, m}}\nonumber\\
    \leq&\exp\bracket{-\frac{1-e^{-1}}{H}\bracket{V_{0}^{*, m} - V_{0}^{\pi^{\ell, k, m},m}}}\label{equ:app:exponential_bound_by_exponential}.
\end{align}
where the last inequality follows $1-x\leq e^{-x}$ for all $x\in \mathbb{R}$. Multiplying both sides of~\cref{equ:app:exponential_bound_by_exponential} by the positive, $\cF_{k-1}^{\ell}$-measurable factor
\[
\exp\bracket{\frac{1-e^{-1}}{H}\bracket{V_{0}^{*, m} - V_{0}^{\pi^{\ell, k, m}, m}}}
\]
and bringing this factor inside the conditional expectation yields:
\begin{equation}
\label{equ:conditional_bound}
    \E\sqbk{\exp\set{\frac{1}{H}\sqbk{(1-e^{-1})\bracket{V_{0}^{*, m} - V_{0}^{\pi^{\ell, k, m}, m}} - \bracket{V_{1}^{*, m}(s_{1}^{\ell, k}) - V_1^{\pi^{\ell,k,m},m}(s_{1}^{\ell, k})}}}\mid \cF_{k-1}^{\ell}}\leq1
\end{equation}
holds for all $\ell, k\in [L]\times [K]$. Write the product formula
\begin{equation}
\begin{aligned} 
&\exp\left\{ \frac1H \sum_{\ell=1}^L\sum_{k=1}^K \Big[ (1-e^{-1}) \left( V_0^{*,m}-V_0^{\pi^{\ell,k,m},m} \right)  - \left( V_1^{*,m}(s_1^{\ell,k}) - V_1^{\pi^{\ell,k,m},m}(s_1^{\ell,k}) \right) \Big] \right\} \\ 
&= \prod_{\ell=1}^L\prod_{k=1}^K \exp\!\left\{ \frac1H \Big[ (1-e^{-1}) \left( V_0^{*,m}-V_0^{\pi^{\ell,k,m},m} \right) - \left( V_1^{*,m}(s_1^{\ell,k}) - V_1^{\pi^{\ell,k,m},m}(s_1^{\ell,k}) \right) \Big] \right\}. 
\end{aligned} 
\end{equation}
Denote
\begin{align*}
    z_{\ell, k} =& \exp\!\left\{ \frac1H \Big[ (1-e^{-1}) \left( V_0^{*,m}-V_0^{\pi^{\ell,k,m},m} \right) - \left( V_1^{*,m}(s_1^{\ell,k}) - V_1^{\pi^{\ell,k,m},m}(s_1^{\ell,k}) \right) \Big] \right\}\\
    Z_{\ell, k}=& \bracket{\prod_{\ell'=1}^{\ell-1}\prod_{k'=1}^{K}z_{\ell', k'}}\bracket{\prod_{k'=1}^{k}z_{\ell, k'}}
\end{align*}
i.e., $Z_{\ell, k}$ is the product of $z_{\ell', k'}$ till $\ell, k$. 
By the tower rule and~\cref{equ:conditional_bound}
\begin{align*}
    \E\sqbk{Z_{L, K}} =& \E\sqbk{Z_{L, K-1}z_{L, K}} =\E\sqbk{\E\sqbk{Z_{L, K-1}z_{L, K}\mid \cF_{K-1}^{L}}}\\
    =& \E\sqbk{Z_{L,K-1}\E\sqbk{z_{L, K}\mid \cF_{K-1}^{L}}}\leq \E\sqbk{Z_{L, K-1}}\leq 1
\end{align*}
Then, iteratively apply the above iteration and the definition of the mixture policy $\overline{\pi}^{m}=\mathrm{Unif}((\pi^{\ell, k, m})_{\ell\in [L], k\in [K]})$, $V^{\overline{\pi}^{m}, m}_{0} = \frac{1}{KL}\sum_{\ell, k} V_{0}^{\pi^{\ell, k, m}, m}$
\begin{align*}
    &\E\sqbk{Z_{L, K}} \\
    =&\mathbb E\!\left[ \exp\!\left\{ \frac1H \sum_{\ell=1}^L\sum_{k=1}^K \Big[ (1-e^{-1}) \left( V_0^{*,m}-V_0^{\pi^{\ell,k,m},m} \right) - \left( V_1^{*,m}(s_1^{\ell,k}) - V_1^{\pi^{\ell,k,m},m}(s_1^{\ell,k}) \right) \Big] \right\} \right]\\
    =&\mathbb E\!\left[ \exp\!\left\{ \frac{KL}{H} \Big[ (1-e^{-1}) \left( V_0^{*,m}-V_0^{\overline{\pi}^{m},m} \right) - \mathrm{Term}_{2} \Big] \right\} \right] \leq 1
\end{align*}
Since the exponential term is a nonnegative random variable with expectation at most $1$, Markov's inequality implies
\[
\mathbb{P}\bracket{\exp\set{\frac{KL}{H}\sqbk{(1-e^{-1})\bracket{V_{0}^{*, m} - V_{0}^{\overline{\pi}^{m}, m}} - \mathrm{Term_2}}}\geq \frac{1}{\delta'}} \leq \delta'
\]
Then for any fixed $m$, with probability at least $1-\delta'$
\[
(1-e^{-1})\bracket{V_{0}^{*, m} - V_{0}^{\overline{\pi}^{m}, m}}\leq \mathrm{Term}_{2} +\frac{H}{KL}\log\frac{1}{\delta'}
\]
combine $(1-e^{-1})^{-1}\leq 2$, it further implies
\[
\bracket{V_{0}^{*, m} - V_{0}^{\overline{\pi}^{m}, m}} \leq 2\mathrm{Term}_{2} +\frac{2H}{KL}\log \frac{1}{\delta'}
\]
simultaneously for all rewards with probability at least $1-M\delta'$.

\paragraph{Bound of $\mathrm{Term}_2$.} On the optimism event in Lemma~\ref{lemma:optimism},
\begin{equation}
\label{equation:certificate_decomposition}
\mathrm{Term}_2\leq\frac1L\sum_{\ell=1}^L(\hat v_{\ell,m}^+-v_{\ell,m})=\underbrace{\frac1L\sum_{\ell=1}^L(\hat v_{\ell,m}^+-\hat v_{\ell,m}^-)}_{\text{Cumulative certificate}}
+\underbrace{\frac1L\sum_{\ell=1}^L(\hat v_{\ell,m}^--v_{\ell,m})}_{\text{Conservative evaluation error}}.
\end{equation}
The conservative evaluation error is signed. Iterating the conditional exponential-moment bound in Lemma~\ref{lemma:pessimism} and applying Markov's inequality gives, with probability at least $1-M\delta'$, simultaneously for all $m\in[M]$,
\[
\sum_{\ell=1}^L(\hat v_{\ell,m}^--v_{\ell,m})\leq H\log\frac1{\delta'}.
\]
Intersecting with the initial-state and optimism events therefore yields
\[
V_0^{*,m}-V_0^{\overline\pi^m,m}
\leq\frac2L\sum_{\ell=1}^L(\hat v_{\ell,m}^+-\hat v_{\ell,m}^-)
+\frac{2H}{L}\log\frac1{\delta'}+\frac{2H}{KL}\log\frac1{\delta'}.
\]
It remains to control the cumulative certificates through the reward-weight update. The required weighted optimistic error and conditional mean evaluation bias are analyzed in Appendix~\ref{sec:app:optimistic_part} and Appendices~\ref{sec:app:pessimistic_part}--\ref{app:subsec:proof_of_pessimistic_average_counterfactual}, respectively, and combined in Appendix~\ref{app:proof_of_adv_sample_complexity}.

\subsection{Optimistic Evaluation Part for Multi-Reward MDPs}
\label{sec:app:optimistic_part}
We first show the key Lemmas in the multi-reward MDP setting with respect to the optimistic evaluation in the shared-model multi-reward learning.

\subsubsection{Proof of Lemma~\ref{lemma:optimism}}
\label{app:subsubsec:proof_of_optimism}
The proof follows the same structure as the optimism analysis in~\cite{zhang2024settling}, with additional concentration events for the $M$ reward functions over all $KL$ learning episodes. Write $\widehat P_{h,s,a}^{\ell,k}$ for the empirical transition and let $N_{h}^{\ell, k}(s,a)$ denote the value of $N_h(s,a)$ in Algorithm~\ref{alg:mvp} before episode $k$ of phase $\ell$, i.e., the number of samples used to construct the current empirical transition tuple $\widehat P_{h,s,a}^{\ell,k}$. Note that the shared empirical transition kernel $\widehat{P}_{h,s,a}^{\ell, k}$ is not reinitialized at phase boundaries, as well as $N_{h}^{\ell, k}(s,a)$, distinct from the sealed replay model $\widetilde{P}_{\ell}=\{\widetilde{P}_{\ell, h, s,a}\}_{h,s,a}$, and count $\widetilde{N}_{\ell} = \{\widetilde{N}_{\ell, h}(s,a)\}_{h,s,a}$ in line~\ref{line:alg:main:sealed_replay}.
\begin{lemma}[Formal statement of Lemma~\ref{lemma:optimism}]
    \label{lemma:app:optimism}
  With probability at least $1-4SAHKLM\delta'$, one has
  \[
  Q_{h}^{*, m}(s,a)\leq \overline{Q}_{h}^{\ell,k,m}(s,a)\quad \text{and}\quad V_{h}^{*, m}(s) \leq \overline{V}_{h}^{\ell,k,m}(s).
  \]
  for all $(h,s,a,\ell,k,m) \in [H]\times \cS\times \cA\times [L]\times [K]\times [M]$.
\end{lemma}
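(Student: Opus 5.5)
We prove the statement by backward induction on $h$ on a single concentration event. The induction step uses the monotonicity of the MVP bonus map (Lemma~\ref{lemma:mono}), exactly as in \citet{zhang2024settling}.

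\textbf{The concentration event.} Fix $(h,s,a)$, a reward $m$, and an epoch index $j$ of the doubling schedule. The empirical kernel $\widehat P^{\ell,k}_{h,s,a}$ is built from the $N^{\ell,k}_h(s,a)$ samples of the most recently completed batch. These samples are i.i.d.\ draws from $P_{h,s,a}$, because each visit produces a fresh next state from the true kernel regardless of which policy led to the visit. They are also paired with the fixed, deterministic function $V^{*,m}_{h+1}/H\in[0,1]^S$.

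We apply the empirical Bernstein inequality (Lemma~\ref{empirical bernstein}), together with Bennett's inequality (Lemma~\ref{bennet}) to handle small batches, to $\langle \widehat P^{\ell,k}_{h,s,a}-P_{h,s,a},V^{*,m}_{h+1}\rangle$. This gives, with failure probability $O(\delta')$ per tuple,
\[
\bigl|\langle \widehat P^{\ell,k}_{h,s,a}-P_{h,s,a},V^{*,m}_{h+1}\rangle\bigr|\le 2\sqrt{\frac{\V(\widehat P^{\ell,k}_{h,s,a},V^{*,m}_{h+1})\log\frac1{\delta'}}{\max\{N^{\ell,k}_h(s,a),1\}}}+\frac{14H\log\frac1{\delta'}}{3\max\{N^{\ell,k}_h(s,a),1\}}.
\]
When $N_h^{\ell,k}(s,a)=0$, the inequality holds trivially because the right-hand side exceeds $H$.

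We take a union bound over $(h,s,a,m)$ and over at most $KL$ distinct batches, or equivalently over all $(\ell,k)$. Counting separately the one- and two-sided and the variance/Bennett events gives failure probability at most $4SAHKLM\delta'$. The batch structure matters here. Batches are disjoint and determined by stopping times, so the samples within a batch are i.i.d.\ conditional on the batch having started. Only $V^{*,m}_{h+1}$, never the data-dependent $\overline V$, enters the concentration bound. This avoids any covering argument over value functions and is why the union bound costs only a factor of $M$.

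\textbf{Induction.} The base case $h=H+1$ is trivial since both sides are $0$. Assume $\overline V^{\ell,k,m}_{h+1}\ge V^{*,m}_{h+1}$ pointwise. If $\overline Q$ is clipped at $H$, then $\overline Q\ge Q^*$ immediately. Otherwise, write $p=\widehat P^{\ell,k}_{h,s,a}$, $n=\max\{N,1\}$, and $\iota=\log\frac1{\delta'}$, and rescale by $H$. The choice $c_1=\frac{460}{9}$, $c_2=\frac{544}{9}$ dominates $\bar c_1,\bar c_2$ of Lemma~\ref{lemma:mono}. Since $\sqrt{x}+y\ge\max\{\sqrt x,y\}$, we get
\[
r+\langle p,\overline V_{h+1}\rangle+b\ge r+H f(p,\overline V_{h+1}/H,n,\iota).
\]
By monotonicity (item 1 of Lemma~\ref{lemma:mono}) and the induction hypothesis, this is at least $r+Hf(p,V^*_{h+1}/H,n,\iota)$. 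By item 2 of Lemma~\ref{lemma:mono}, that is at least
\[
r+\langle p,V^*_{h+1}\rangle+2\sqrt{\V(p,V^*_{h+1})\iota/n}+\tfrac{14H\iota}{3n},
\]
which, on the concentration event, is at least $r+\langle P_{h,s,a},V^*_{h+1}\rangle=Q^{*,m}_h(s,a)$. Taking $\max_a$ then gives $\overline V^{\ell,k,m}_h\ge V^{*,m}_h$.

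\textbf{Main obstacle.} The delicate step is the measurability and i.i.d.\ structure of the batches under reward switching. The policy executed at episode $k$ depends on the random index $m^{\ell,k}$ and on past data, and batches persist across phase boundaries while the replay samples are excluded from them. I would handle this with the standard "skeleton" argument: pre-generate, for each $(h,s,a)$, an infinite i.i.d.\ sequence of next states that is consumed in order on each visit, so that the $j$-th batch is a fixed block of this sequence. Concentration is then applied to each fixed block with the fixed vector $V^{*,m}_{h+1}$, independently of how rewards are switched or sampled. The constant bookkeeping that makes $c_1,c_2$ dominate Lemma~\ref{lemma:mono} after rescaling by $H$ is routine.
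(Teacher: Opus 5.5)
Your proposal is correct and matches the paper's proof essentially step for step. Both arguments apply empirical Bernstein only against the fixed vectors $V^{*,m}_{h+1}/H$, on the pre-generated blocks of sizes $1,1,2,4,\ldots$, with a union bound over $(h,s,a,m)$ and at most $KL$ blocks, and then run a backward induction through Lemma~\ref{lemma:mono} after rescaling by $H$. The one difference is that Bennett's inequality is unnecessary for small batches. When $N_h^{\ell,k}(s,a)\le 1$, the bonus alone exceeds $H$, which forces $\overline{Q}_h^{\ell,k,m}(s,a)=H$; the paper uses exactly this. When $N_h^{\ell,k}(s,a)\ge 2$, the $n-1$ form of empirical Bernstein converts directly to your displayed bound.
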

\begin{proof}
For each $(h,s,a)$, represent its successive learning transitions over all phases by an independent i.i.d. sequence with law $P_{h,s,a}$, independent of the replay streams. Since the learning state persists across phases, the potential batches are fixed blocks of this sequence, of sizes $1,1,2,4,\ldots$, whether or not they are actually completed. Apply Lemma~\ref{empirical bernstein}, with confidence parameter $4\delta'$, to $V_{h+1}^{*,m}/H$ on each block of size at least two and each $m\in[M]$. There are at most $1+\lfloor\log_2(KL)\rfloor\leq KL$ potential batches per row. Thus, with probability at least $1-4SAHM(1+\lfloor\log_2(KL)\rfloor)\delta'$, every empirical row used by the algorithm with $N_h^{\ell,k}(s,a)\geq2$ satisfies
\begin{align}
\left|\left\langle\widehat P_{h,s,a}^{\ell,k}-P_{h,s,a},V_{h+1}^{*,m}\right\rangle\right|
&\leq \sqrt{\frac{2\V(\widehat P_{h,s,a}^{\ell,k},V_{h+1}^{*,m})\log(1/\delta')}{N_h^{\ell,k}(s,a)-1}}
+\frac{7H\log(1/\delta')}{3(N_h^{\ell,k}(s,a)-1)}\nonumber\\
&\leq 2\sqrt{\frac{\V(\widehat P_{h,s,a}^{\ell,k},V_{h+1}^{*,m})\log(1/\delta')}{N_h^{\ell,k}(s,a)}}
+\frac{14H\log(1/\delta')}{3N_h^{\ell,k}(s,a)}.
\label{eq_lemma1_ref.5}
\end{align}
This event is uniform over all $KL$ learning episodes, including adaptively selected completed batches.

We prove optimism by backward induction on $h$. The terminal values are zero. If $N_h^{\ell,k}(s,a)<2$, the bonus and $\log(1/\delta')\geq1$ imply $\overline Q_h^{\ell,k,m}(s,a)=H\geq Q_h^{*,m}(s,a)$. Otherwise, apply Lemma~\ref{lemma:mono} to $\overline V_{h+1}^{\ell,k,m}/H$ and multiply by $H$. The algorithm's constants satisfy $c_1\geq20/3$ and $c_2\geq400/9$. Whenever the Q-update is not clipped at $H$, the monotonicity of $f$ and the induction hypothesis give
\begin{align}
\overline Q_h^{\ell,k,m}(s,a)
&\geq r_h^m(s,a)+H f\left(\widehat P_{h,s,a}^{\ell,k},\frac{\overline V_{h+1}^{\ell,k,m}}{H},N_h^{\ell,k}(s,a),\log\frac1{\delta'}\right)\nonumber\\
&\geq r_h^m(s,a)+H f\left(\widehat P_{h,s,a}^{\ell,k},\frac{V_{h+1}^{*,m}}{H},N_h^{\ell,k}(s,a),\log\frac1{\delta'}\right)\nonumber\\
&\geq r_h^m(s,a)+\langle P_{h,s,a},V_{h+1}^{*,m}\rangle
=Q_h^{*,m}(s,a),
\label{eq:Qhk-sa-LB1}
\end{align}
where the last inequality uses the second conclusion of Lemma~\ref{lemma:mono} and~\eqref{eq_lemma1_ref.5}. Clipping also preserves this bound since $Q_h^{*,m}(s,a)\leq H$. Taking the maximum over $a$ proves $\overline V_h^{\ell,k,m}(s)\geq V_h^{*,m}(s)$ on the same event.
\end{proof}

\subsubsection{Proof of Lemma~\ref{lemma:optimistic_average_counterfactual}}
\label{app:subsubsec:proof_of_optimistic_average_counterfactual}
The following cumulative regret bound guarantee extends the analysis in~\cite{zhang2024settling}, which controls the cumulative optimism surplus along the roll-out policies.
\begin{lemma}
    \label{lemma:reward_switching}
  Over all $KL$ learning episodes in Algorithm~\ref{alg:mvp}, with probability at least $1-5\delta'$,
  \[
  \sum_{\ell=1}^L\sum_{k=1}^K
  \big(\overline V_1^{\ell,k,m^{\ell,k}}(s_1^{\ell,k})-V_1^{\pi^{\ell,k,m^{\ell,k}},m^{\ell,k}}(s_1^{\ell,k})\big)\!\lesssim\!\min\big\{\sqrt{SAH^3KL\log_2^3(2KL)\log\frac M{\delta'}},HKL\big\}.
  \]
\end{lemma}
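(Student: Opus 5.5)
We follow the regret analysis of \textsc{MVP} in \citet{zhang2024settling}, treating the concatenation of all $KL$ learning episodes as one run of a single-reward algorithm whose reward in episode $(\ell,k)$ is $r^{m^{\ell,k}}$. The point is that the executed policy $\pi^{\ell,k}=\pi^{\ell,k,m^{\ell,k}}$ is greedy with respect to $\overline Q^{\ell,k,m^{\ell,k}}$, which is built from the shared kernel $\widehat P$ and a bonus depending only on $\overline V^{\ell,k,m^{\ell,k}}_{h+1}$. Along the realized trajectory, the standard MVP per-step decomposition therefore holds verbatim. Write $\Delta_h^{\ell,k}:=\overline V_h^{\ell,k,m^{\ell,k}}(s_h^{\ell,k})-V_h^{\pi^{\ell,k},m^{\ell,k}}(s_h^{\ell,k})$. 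Then
\[
\Delta_h^{\ell,k}\le b_h^{\ell,k,m^{\ell,k}}+\bigl\langle \widehat P^{\ell,k}_{h}-P_{h},\overline V_{h+1}^{\ell,k,m^{\ell,k}}\bigr\rangle+\bigl\langle P_h,\overline V_{h+1}-V_{h+1}^{\pi^{\ell,k}}\bigr\rangle ,
\]
where the row $(s_h^{\ell,k},a_h^{\ell,k})$ is suppressed. Unrolling the recursion gives a sum of bonuses, model-error terms, and a martingale difference sequence. Note that the reward index $m^{\ell,k}$ is drawn before the episode, so it is measurable with respect to the episode's starting filtration; the martingale structure is unaffected.

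First, we control the model-error terms. The difficulty is that $\overline V_{h+1}^{\ell,k,m}$ is data dependent, and it now ranges over $M$ reward choices. We follow \citet{zhang2024settling}: split $\langle\widehat P-P,\overline V\rangle$ into $\langle\widehat P-P,V^{*,m}\rangle$ and $\langle\widehat P-P,\overline V-V^{*,m}\rangle$. The first piece is handled by the Bernstein event of Lemma~\ref{lemma:app:optimism}, which is already union-bounded over $m$. For the second piece we use their per-$s'$ Bernstein bound on $\widehat P(s')-P(s')$, which is independent of $m$. Alternatively, a union bound over $m$ contributes only $\log(M/\delta')$, which is the source of the $\log\frac{M}{\delta'}$ factor. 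By Cauchy--Schwarz with Lemma~\ref{lemma:doubling}, both pieces are bounded by
\[
\sqrt{SAH\log_2(2KL)\log\tfrac{M}{\delta'}\cdot\textstyle\sum\V}\;+\;\text{lower-order terms}.
\]
The bonus sum is bounded the same way.

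Second, we bound the total variance $\sum_{\ell,k,h}\V(P_h,\overline V_{h+1}^{\ell,k,m^{\ell,k}})$. The plan is to compare it with $\V(P,V^{\pi^{\ell,k},m^{\ell,k}})$ plus $\V(P,\overline V-V^{\pi})$. The law of total variance shows that, per episode, the sum over $h$ of $\V(P,V^{\pi})$ is at most $O(H^2)$ in expectation, because the rewards $r^{m^{\ell,k}}$ lie in $[0,1]$ for whichever $m$ is chosen. Applying Freedman (Lemma~\ref{lemma:self-norm}) and Lemma~\ref{lemma:con} then gives $O(H^2KL)$ with high probability. The term $\V(P,\overline V-V^{\pi})$ is handled by the recursive higher-order-variance argument of \citet{zhang2024settling}, whose repeated self-bounding steps produce the $\log_2^3(2KL)$ factor. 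The martingale terms are controlled by Freedman's inequality with variance proxy bounded by the same quantities. Solving the resulting self-bounding inequality $X\lesssim\sqrt{SAH\,\iota_0\,(H^2KL+HX)}+SAH^2\iota_0$, with $\iota_0=\log_2^3(2KL)\log\frac{M}{\delta'}$, yields $X\lesssim\sqrt{SAH^3KL\,\iota_0}+SAH^2\iota_0$.

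Finally, we must remove the lower-order term $SAH^2\iota_0$ to avoid a burn-in. When $KL\gtrsim SAH\iota_0$, this term is dominated by the square-root term. Otherwise, we use the trivial bound $\Delta_1^{\ell,k}\le H$ per episode, so the sum is at most $HKL\le\sqrt{SAH^3KL\iota_0}$. Taking the two cases together gives the stated minimum. The failure probability $5\delta'$ collects the Freedman and Lemma~\ref{lemma:con} events, each taken at level $\delta'$ and with logarithmic factors absorbed into $\iota_0$; we reuse the concentration event of Lemma~\ref{lemma:app:optimism} on the model rows. The main obstacle is the second step: verifying that the recursive variance-reduction argument of \citet{zhang2024settling} still goes through when the value function being tracked switches reward index between episodes. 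We expect this to work because every inequality in that argument is per-episode and uses only boundedness in $[0,H]$ and the Bellman structure of the executed policy. The only change is that the union bounds now include $M$, which multiplies the logarithmic factor rather than the leading term.
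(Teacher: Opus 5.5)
Your proposal has a genuine gap in the treatment of the model-error term $\sum_{k,h}\langle\widehat P^{k}_{h,s_h^k,a_h^k}-P_{h,s_h^k,a_h^k},\overline V^{k,m^k}_{h+1}\rangle$. Splitting off $V^{*,m}$ and controlling $\langle\widehat P-P,\overline V-V^{*,m}\rangle$ with a per-$s'$ Bernstein bound is the Azar et al./original-MVP route, not the route of \citet{zhang2024settling}. This route necessarily pays an extra factor $S$ in the lower-order term. Already the additive piece $\sum_{s'}\frac{\log(1/\delta')}{N}(\overline V-V^{*})(s')\le \frac{SH\log(1/\delta')}{N}$, summed with Lemma~\ref{lemma:doubling}, gives a term of order $S^2AH^2$ up to logarithms, not $SAH^2\iota_0$.

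So the self-bounding inequality you solve does not follow from your first step, and the closing case split breaks down. The square root dominates $S^2AH^2\iota_0$ only when $KL\gtrsim S^3AH\iota_0$, while $HKL\le\sqrt{SAH^3KL\iota_0}$ only when $KL\lesssim SAH\iota_0$. The window in between is exactly a burn-in, which the lemma rules out.

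Your alternative, a union bound over $m$, does not repair this. The vector $\overline V^{k,m}_{h+1}$ is not fixed: which batch each later-stage row is currently using depends on visit counts, hence on transitions at stages $\le h$, so it is not independent of the stage-$h$ samples. Separately, reusing the optimism event costs $4SAHM(1+\lfloor\log_2 KL\rfloor)\delta'$, which is incompatible with the claimed $5\delta'$.

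The missing idea is the profile-based uniform concentration of Lemma~\ref{lemma:context_con}.
\begin{itemize}
\item Fix a deterministic profile $\mathcal I\in\mathcal C$ of batch indices. Rerun the backward recursion with $\widehat P^{(\mathcal I^k_{h,s,a})}_{h,s,a}$, which gives auxiliary values $\overline V^{k,m,\mathcal I}_{h+1}$ that are deterministic functions of the known rewards and the transition streams at stages strictly greater than $h$.
\item Reveal the streams backward in $h$. Then, for each batch $j$ and each label assignment, $\sum_{h,s,a}\langle\widehat P^{(j)}_{h,s,a}-P_{h,s,a},X_{h,s,a}\rangle$ is a single martingale, to which Freedman applies.
\item Union bound over $|\mathcal C|\le(KL+1)^{SAH(1+\lfloor\log_2 KL\rfloor)}$ profiles and $(KLM+1)^{SAH}$ labelings, and only then substitute the realized profile.
\end{itemize}
The logarithmic cost $SAH\log_2^2(2KL)\log(2(KLM+1))+\log(1/\delta')$ carries the same $SAH$ factor as the leading term. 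The lower-order term is therefore $SAH^2\log_2^3(2KL)\log(M/\delta')$, and this is where both the $\log_2^3(2KL)$ and the $\log(M/\delta')$ come from. They do not come from a recursive higher-order-variance argument, which is not needed here.

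With this event in hand, the variance is bounded directly for the optimistic values by telescoping. The total variance $\sum_{k,h}\V(P_{h,s_h^k,a_h^k},\overline V^{k,m^k}_{h+1})$ is at most $KLH^2$, plus $H$ times the bonus sum, plus $H$ times the positive parts of the model errors, plus a martingale. The same profile event is applied to the families $\overline V$ and $\overline V^2/H$, with $0$ included to capture positive parts, and Young's inequality closes the argument. The condition $KL\gtrsim SAH\log_2^3(2KL)\log(M/\delta')$ then absorbs the $SAH^2$ term, and below it the bound $HKL$ suffices.
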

The proof of Lemma~\ref{lemma:reward_switching} is given in Appendix~\ref{app:sec:proof_of_reward_switching} and its supporting profile and bonus arguments, which apply once from initialization to all $KL$ episodes of the persistent learning process. For each fixed global profile, each optimistic continuation vector at stage $h+1$ depends only on the known rewards and learning streams at stages strictly greater than $h$. The profile concentration events are uniform over all profiles and candidate choices, so they also cover the reward sampling distribution $p_\ell$ determined by previous replay outcomes. The forward martingale arguments use histories containing these outcomes; the next learning transition still has conditional law $P_{h,s,a}$. Thus the same $5\delta'$ failure bound applies to the entire run.

In proving Lemma~\ref{lemma:optimistic_average_counterfactual}, denote the optimism gap and weighted gap
\[
\Delta^{\ell,k,m}:=\overline V_1^{\ell,k,m}(s_1^{\ell,k})-V_1^{\pi^{\ell,k,m},m}(s_1^{\ell,k}),\qquad
\Delta^{\ell,k}:=\sum_{m=1}^M p_\ell(m)\Delta^{\ell,k,m},
\]
and the nonnegative counterfactual differences and their weighted average by
\[
\Delta_+^{\ell,k,m}:=\sqbk{\Delta^{\ell,k,m}}_+,\qquad
\Delta_+^{\ell,k}:=\sum_{m=1}^M p_\ell(m)\Delta_+^{\ell,k,m}.
\]
Let $\cH_{k-1}^{\ell}$ contain all completed learning and replay history, including prior auxiliary evaluations and weight updates, before drawing $m^{\ell,k}$ and $s_1^{\ell,k}$. The distribution $p_\ell$, current counterfactual policies $\{\pi^{\ell, k, m}\}_{m\in [M]}$, and optimistic values are measurable with respect to this history. Conditional on $\cH_{k-1}^{\ell}$, the fresh draws $s_1^{\ell,k}\sim\mu$ and $m^{\ell,k}\sim p_\ell$ are independent. Use the conditioning fields
\begin{equation}
\label{equ:sigma_fields}
    \cA_k^{\ell}:=\cH_{k-1}^{\ell}\vee\sigma(s_1^{\ell,k}),\qquad\cB_k^{\ell}:=\cH_{k-1}^{\ell}\vee\sigma(m^{\ell,k}).
\end{equation}
Thus $\cA_k^{\ell}$ reveals the initial state but not the reward index, whereas $\cB_k^{\ell}$ reveals the reward index but not the initial state. In particular,
\[
\E\sqbk{\Delta_+^{\ell,k,m^{\ell,k}}\mid\cA_k^{\ell}}
=\sum_{m=1}^M p_\ell(m)\Delta_+^{\ell,k,m}
=\Delta_+^{\ell,k}.
\]
The selected counterfactual difference $\Delta_+^{\ell,k,m^{\ell,k}}$ is an unbiased random probe of the weighted average $\Delta_+^{\ell,k}$. Consider the random process with $M_0^1=1$, $M_0^{\ell}=M_K^{\ell-1}$ for $\ell\geq2$, and
\[
M_k^{\ell}:=M_{k-1}^{\ell}\exp\bracket{(1-e^{-1})\frac{\Delta_+^{\ell,k}}{H}-\frac{\Delta_+^{\ell,k,m^{\ell,k}}}{H}},\qquad k\in[K].
\]
Since $\Delta_+^{\ell,k}$ is $\cA_k^{\ell}$-measurable,
\[
\E\sqbk{M_k^{\ell}\mid\cA_k^{\ell}}
=M_{k-1}^{\ell}\exp\bracket{(1-e^{-1})\frac{\Delta_+^{\ell,k}}{H}}
\E\sqbk{\exp\bracket{-\frac{\Delta_+^{\ell,k,m^{\ell,k}}}{H}}\mid\cA_k^{\ell}}.
\]
For all $y\in[0,1]$, $e^{-y}\leq1-(1-e^{-1})y$. Since $0\leq\Delta_+^{\ell,k,m}\leq H$, we have
\[
\E\sqbk{\exp\bracket{-\frac{\Delta_+^{\ell,k,m^{\ell,k}}}{H}}\mid\cA_k^{\ell}}
\leq1-(1-e^{-1})\frac{\Delta_+^{\ell,k}}{H}
\leq\exp\bracket{-(1-e^{-1})\frac{\Delta_+^{\ell,k}}{H}}.
\]
Therefore, $\E[M_k^{\ell}\mid\cA_k^{\ell}]\leq M_{k-1}^{\ell}$. Taking conditional expectations over the initial distribution again gives $\E[M_k^{\ell}\mid\cH_{k-1}^{\ell}]\leq M_{k-1}^{\ell}$. By Ville's inequality in Lemma~\ref{lemma:ville}, for any concentration failure level $\delta''\in(0,1)$,
\[
\mathbb P\bracket{\max_{\ell\in[L],\,k\in[K]}M_k^{\ell}\geq\frac1{\delta''}}\leq\delta''.
\]
Consequently, with probability at least $1-\delta''$,
\[
\sum_{\ell=1}^L\sum_{k=1}^K\Delta_+^{\ell,k}
\leq\frac1{1-e^{-1}}\bracket{\sum_{\ell=1}^L\sum_{k=1}^K\Delta_+^{\ell,k,m^{\ell,k}}+H\log\frac1{\delta''}}
\leq2\bracket{\sum_{\ell=1}^L\sum_{k=1}^K\Delta_+^{\ell,k,m^{\ell,k}}+H\log\frac1{\delta''}}.
\]
The preceding concentration event is unconditional. At the same time, on its intersection with the optimism event in Lemma~\ref{lemma:app:optimism}, the optimism implies
\[
\Delta_+^{\ell,k,m}=\Delta^{\ell,k,m},\qquad \Delta_+^{\ell,k}=\Delta^{\ell,k}
\]
Intersecting further with Lemma~\ref{lemma:reward_switching} yields, with probability at least $1-(4SAHKLM+5)\delta'-\delta''$,
\begin{align}
\frac1L\sum_{\ell=1}^L\sum_{m=1}^M p_\ell(m)(\hat v_{\ell,m}^+-v_{\ell,m})
&=\frac1{KL}\sum_{\ell=1}^L\sum_{k=1}^K\Delta^{\ell,k}\nonumber\\
&\leq\frac2{KL}\bracket{\sum_{\ell=1}^L\sum_{k=1}^K\Delta^{\ell,k,m^{\ell,k}}+H\log\frac1{\delta''}}\nonumber\\
&\lesssim\sqrt{\frac{SAH^3}{KL}\log_2^3(2KL)\log\frac M{\delta'}}
+\frac H{KL}\log\frac1{\delta''}.
\label{equ:optimism_average_counterfactual_error}
\end{align}
The concentration failure level in this argument is free. Applying Ville's inequality at level $\delta/64$ gives the probability and bound stated in Lemma~\ref{lemma:optimistic_average_counterfactual}, with the algorithm's replay confidence parameter unchanged. The quantity $v_{\ell,m}$ already uses the recorded initial states $(s_{1}^{\ell, k})_{k\in [K]}$, so this step requires no additional initial-state correction. This completes the proof.



\subsection{Conservative Evaluation and Multiplicative Weight Update Parts for Multi-Reward MDPs}
\label{sec:app:pessimistic_part}
In this section, we prove the replay and exponential evaluation guarantees. Retain the auxiliary run and histories defined above, and write
\begin{align}
d_{\ell,h}^{k,m}(s,a)
&:=\mathbb P_{\pi^{\ell,k,m}}\bracket{(s_h,a_h)=(s,a)\mid s_1=s_1^{\ell,k}},\label{equ:occupancy}\\
D_{\ell,h,s,a}^{\mathrm{avg}}
&:=\sum_{m=1}^M p_\ell(m)\sum_{k=1}^K d_{\ell,h}^{k,m}(s,a).\label{equ:average_cumulative_occupancy}
\end{align}
For the recorded roll-out policies $\pi^{\ell,k}:=\pi^{\ell,k,m^{\ell,k}}$, also define
\begin{align}
    d_h^{\pi,\mu}(s,a):=&\mathbb P_{\pi,s_1\sim\mu}\bracket{(s_h,a_h)=(s,a)}\\
    D_{\ell,h,s,a}^{\mu}:=&\sum_{k=1}^K d_h^{\pi^{\ell,k},\mu}(s,a).
\end{align}
Here $N_{\ell,h}(s,a)$ counts visits in phase $\ell$'s $K$ learning episodes.

\subsubsection{Retained row receives sufficiently many fresh replay samples.}
\label{app:subsubsec:replay_samples}
We first bound the learning counts and weighted occupancies, using the histories $\cH_{k-1}^{\ell}$, $\cA_k^{\ell}$, and $\cB_k^{\ell}$ from Appendix~\ref{app:subsubsec:proof_of_optimistic_average_counterfactual}.
\begin{lemma}
\label{lemma:pessimistic_occupancy_bounds}
For each fixed phase $\ell$ and tuple $(h,s,a)$, conditional on $\cH_0^{\ell}$,
\begin{align}
D_{\ell,h,s,a}^{\mathrm{avg}}
&\leq2N_{\ell,h}(s,a)+2\log\frac1{\delta''},\label{equ:pessimistic_occ_1}\\
N_{\ell,h}(s,a)
&\leq2D_{\ell,h,s,a}^{\mu}+2\log\frac1{\delta''},\label{equ:pessimistic_occ_3}
\end{align}
with failure probabilities at most $(\delta'')^{3/2}$ and $(\delta'')^2$, respectively.
\end{lemma}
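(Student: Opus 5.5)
Both bounds come from the same device: write $N_{\ell,h}(s,a)=\sum_{k=1}^K X_k$ with $X_k:=\#\{h:(s_h^{\ell,k},a_h^{\ell,k})=(s,a)\}$ for the $k$-th learning episode of phase $\ell$. Since $h$ is fixed, $X_k=\mathbb 1\{(s_h^{\ell,k},a_h^{\ell,k})=(s,a)\}\in\{0,1\}$. Compute the conditional means of $X_k$ given the episode-level histories, and then apply the multiplicative concentration of Lemma~\ref{lemma:con} (with $l=1$), or equivalently the supermartingale plus Ville argument already used in the proof of Lemma~\ref{lemma:optimistic_average_counterfactual}. The two inequalities differ only in which $\sigma$-field we condition on.

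\textbf{Second inequality \eqref{equ:pessimistic_occ_3}.} Condition on $\cH_{k-1}^{\ell}$. The roll-out policy is $\pi^{\ell,k}=\pi^{\ell,k,m^{\ell,k}}$, with $m^{\ell,k}\sim p_\ell$ and $s_1^{\ell,k}\sim\mu$ drawn fresh. The recorded roll-out policy is determined by $\cH_{k-1}^{\ell}$ together with $m^{\ell,k}$, so $\E[X_k\mid \cB_k^{\ell}]=d_h^{\pi^{\ell,k},\mu}(s,a)$. The sum of these terms over $k$ is exactly $D^{\mu}_{\ell,h,s,a}$. I will use the filtration that reveals $m^{\ell,k}$ just before episode $k$ starts. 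The second statement of Lemma~\ref{lemma:con}, in the direction $\sum X_k\le 2\sum Y_k+2\log(1/\eta)$ (that is, its first inequality), applied with $\eta=(\delta'')^2$, gives $N_{\ell,h}(s,a)\le 2D^{\mu}_{\ell,h,s,a}+4\log(1/\delta'')$. To get the constant $2$ instead of $4$, I would run the exponential supermartingale directly with a tuned rate $\lambda$. Concretely, $\exp(\lambda X-(e^\lambda-1)Y)$ is a supermartingale for Bernoulli $X$ with conditional mean $Y$. Taking $\lambda=2\log 2$ gives $e^\lambda-1=3$, which does not produce a factor $2$ cleanly. So I would accept the constant matching via the choice $\lambda=\log 2\cdot 2$ and exponent $(\delta'')^{2}$, adjusting as needed. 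Since the stated failure probabilities are the only delicate point, I would tune $\lambda$ so that $\log(1/\eta)/\lambda=\log(1/\delta'')$, i.e.\ $\lambda=2$ for $\eta=(\delta'')^2$. The multiplicative factor then becomes $(e^2-1)/2$ rather than $2$. If that factor is too weak, I would fall back on Lemma~\ref{lemma:con} and absorb the constants, since downstream only $\lesssim$ is needed.

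\textbf{First inequality \eqref{equ:pessimistic_occ_1}.} Condition instead on $\cH_{k-1}^{\ell}$ without revealing either draw. Then $\E[X_k\mid\cH_{k-1}^{\ell}]=\sum_m p_\ell(m)\,\E_{s_1\sim\mu}\,d_{\ell,h}^{k,m}$. However, $D^{\mathrm{avg}}$ is defined at the recorded initial state $s_1^{\ell,k}$, not averaged over $\mu$. So the right filtration is $\cA_k^{\ell}$: given $s_1^{\ell,k}$, the reward index is an independent draw, and $\E[X_k\mid\cA_k^{\ell}]=\sum_m p_\ell(m)d^{k,m}_{\ell,h}(s,a)$. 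Summing over $k$ gives exactly $D^{\mathrm{avg}}_{\ell,h,s,a}$. Applying the reverse direction of Lemma~\ref{lemma:con} (or the supermartingale $\exp(-\lambda X+(1-e^{-\lambda})Y)$ with $\lambda=3/2\cdot$ appropriate) at level $(\delta'')^{3/2}$ yields $D^{\mathrm{avg}}\le 2N+2\log(1/\delta'')$ up to the constant bookkeeping.

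The main obstacle is the filtration ordering. The martingale must be indexed by an interleaved sequence $\cH_{k-1}^{\ell}\subset\cA_k^{\ell}\subset\cH_k^{\ell}$. Here $X_k$ is measurable at $\cH_k^{\ell}$ and its predictable mean is evaluated at $\cA_k^{\ell}$, respectively $\cB_k^{\ell}$. I must also check that $p_\ell$ and all $\pi^{\ell,k,m}$ are $\cH_{k-1}^{\ell}$-measurable, which was noted before \eqref{equ:sigma_fields}. The second, minor point is matching the exact failure exponents $(\delta'')^{3/2}$ and $(\delta'')^2$ with constant $2$. This requires choosing the exponential rate in the Ville argument: for the upper tail of $D$, the bound $1-e^{-\lambda}\ge\lambda/2\cdot$ suitably with $\lambda=3/4$ gives factor about $2$ and exponent $3/2$. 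I would verify these numeric choices last.
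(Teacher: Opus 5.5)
Your architecture matches the paper's proof: visit indicators $X_k$, conditional means given $\cA_k^{\ell}$ (for $D^{\mathrm{avg}}$) and $\cB_k^{\ell}$ (for $D^{\mu}$), an exponential supermartingale along $\cH_{k-1}^{\ell}\subseteq\cA_k^{\ell}\subseteq\cH_k^{\ell}$ (with $\cB_k^{\ell}$ in the middle for the second bound), and Ville's inequality. The gap is in the step you deferred, the choice of rates. None of your concrete choices gives the stated constants, so as written the proposal proves a weaker lemma.

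For \eqref{equ:pessimistic_occ_3} you tune $\lambda$ so that $\log(1/\eta)/\lambda=\log(1/\delta'')$. But with $\eta=(\delta'')^2$ the target additive term is $2\log(1/\delta'')$, so the additive term only requires $\lambda\ge1$. Your $\lambda=2$ gives factor $(e^2-1)/2\approx3.19$, and $\lambda=2\log2$ gives $3/(2\log2)\approx2.16$; both exceed the multiplicative constant $2$.

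For \eqref{equ:pessimistic_occ_1}, Ville at level $(\delta'')^{3/2}$ applied to $\exp\bigl((1-e^{-\lambda})D^{\mathrm{avg}}_{\ell,h,s,a}-\lambda N_{\ell,h}(s,a)\bigr)$ gives
\[
D^{\mathrm{avg}}_{\ell,h,s,a}\le\frac{\lambda}{1-e^{-\lambda}}\,N_{\ell,h}(s,a)+\frac{3\log(1/\delta'')}{2(1-e^{-\lambda})}.
\]
Your $\lambda=3/4$ makes the additive coefficient $3/(2(1-e^{-3/4}))\approx2.84>2$.

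The fallback to Lemma~\ref{lemma:con} has two problems:
\begin{itemize}
\item It only yields $D^{\mathrm{avg}}\le2N+3\log(1/\delta'')$ and $N\le2D^{\mu}+4\log(1/\delta'')$.
\item That lemma is stated for the natural filtration of the $X_k$. Its conditional means are not $\sum_m p_\ell(m)d^{k,m}_{\ell,h}(s,a)$ or $d_h^{\pi^{\ell,k},\mu}(s,a)$, so you would have to rerun the supermartingale with the enlarged fields anyway.
\end{itemize}
Appealing to ``downstream only $\lesssim$ is needed'' changes the statement rather than proving it.

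The repair is one line for each bound, and it is exactly what the paper does.

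For \eqref{equ:pessimistic_occ_3}, take $\lambda=1$. Then $\E[e^{X-(e-1)u}\mid\cB_k^{\ell}]=e^{-(e-1)u}(1+(e-1)u)\le1$. Ville at level $(\delta'')^2$ gives $N\le(e-1)D^{\mu}+2\log(1/\delta'')\le2D^{\mu}+2\log(1/\delta'')$.

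For \eqref{equ:pessimistic_occ_1}, take $\lambda=\log4$, so the compensator coefficient is $1-e^{-\lambda}=3/4$. Then $\E[e^{3u/4-(\log4)X}\mid\cA_k^{\ell}]=e^{3u/4}(1-3u/4)\le1$. Ville at level $(\delta'')^{3/2}$ gives $\tfrac34D^{\mathrm{avg}}-(\log4)N\le\tfrac32\log(1/\delta'')$, i.e.\ $D^{\mathrm{avg}}\le\tfrac43(\log4)N+2\log(1/\delta'')$, with $\tfrac43\log4\approx1.85\le2$.

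In general, the first inequality needs $\lambda\ge\log4$ for the additive term and $\lambda/(1-e^{-\lambda})\le2$ (roughly $\lambda\le1.59$) for the multiplicative one. The second needs $\lambda\ge1$ and $(e^{\lambda}-1)/\lambda\le2$ (roughly $\lambda\le1.25$).
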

\begin{proof}
Let 
\[
X_{\ell,h,s,a}^k:=\mathbbm{1}_{\{(s_h^{\ell,k},a_h^{\ell,k})=(s,a)\}}
\]
so the sample count can be expressed as $N_{\ell,h}(s,a)=\sum_{k=1}^K X_{\ell,h,s,a}^k$. Recall the completed-history filtration $(\cH_k^{\ell})$ and the two intermediate sigma-fields $\cA_k^{\ell}=\cH_{k-1}^{\ell}\vee\sigma(s_1^{\ell,k})$ and $\cB_k^{\ell}=\cH_{k-1}^{\ell}\vee\sigma(m^{\ell,k})$ defined in~\cref{equ:sigma_fields}. Conditional on $\cH_{k-1}^{\ell}$, all counterfactual policies are fixed and $m^{\ell,k}$ and $s_1^{\ell,k}$ are independent. The conditional independence of the reward and initial-state draws gives
\[
\E[X_{\ell,h,s,a}^k\mid\cA_k^{\ell}]
=\sum_{m=1}^M p_\ell(m)d_{\ell,h}^{k,m}(s,a),\qquad
\E[X_{\ell,h,s,a}^k\mid\cB_k^{\ell}]
=d_h^{\pi^{\ell,k},\mu}(s,a).
\]
\paragraph{Proof of~\eqref{equ:pessimistic_occ_1}.} Denote the weighted occupancy as
\[
\overline{d}_{\ell,h}^{k}(s,a):=\sum_{m=1}^M p_\ell(m)d_{\ell,h}^{k,m}(s,a)
\]
Random variable $X_{\ell, h, s,a}^{k}$ satisfies
\[
\mathbb{E}\sqbk{X_{\ell,h,s,a}^{k}|\cA_k^{\ell}} = \sum_{m=1}^M p_\ell(m)d_{\ell,h}^{k,m}(s,a)= \overline{d}_{\ell,h}^{k}(s,a)
\]
For a Bernoulli variable $X$ with a conditional mean $\E[X|\cF] = u$ for some sigma-field $\cF$
\begin{equation}
    \label{equ:bernoulli_martingale_bounded_by_1}
    \mathbb{E}\sqbk{e^{3u/4 - (\log 4) X} | \cF} = e^{3u/4}(1-u) + \frac{ue^{3u/4}}{4}= e^{3u/4}\bracket{1 - \frac{3u}{4}} \leq 1
\end{equation}
Consider the random process
\[
Y_{k} := \exp\bracket{\frac{3}{4}\sum_{k'=1}^{k} \overline{d}_{\ell,h}^{k'}(s,a) - (\log 4)\sum_{k'=1}^{k} X_{\ell,h,s,a}^{k'}}
\]
Then by ~\eqref{equ:bernoulli_martingale_bounded_by_1}
\[
\mathbb{E}\sqbk{Y_k \mid \cA_k^{\ell}}
=Y_{k-1}\mathbb{E}\sqbk{
e^{3\overline{d}_{\ell,h}^{k}(s,a)/4-(\log4)X_{\ell,h,s,a}^{k}}
\mid\cA_k^{\ell}}\leq Y_{k-1}.
\]
Since $\cH_{k-1}^{\ell}\subseteq\cA_k^{\ell}\subseteq\cH_k^{\ell}$, the tower property
shows that $(Y_k)$ is a nonnegative supermartingale with respect to
$(\cH_k^{\ell})$, with $Y_0=1$. By Ville's inequality, conditional on $\cH_0^{\ell}$, with probability
at least $1-(\delta'')^{3/2}$,
\[
Y_K\leq(\delta'')^{-3/2}\implies
\frac34D_{\ell,h,s,a}^{\mathrm{avg}}-(\log4)N_{\ell,h}(s,a)
\leq\frac32\log\frac1{\delta''}.
\]
Since $\frac43\log4\leq2$, this implies
\eqref{equ:pessimistic_occ_1} with failure probability at most
$(\delta'')^{3/2}$, i.e.
\[
D_{\ell, h, s, a}^{\mathrm{avg}}\leq 2N_{\ell, h}(s,a) + 2\log\frac{1}{\delta''}
\]
with probability at least $1-(\delta'')^{3/2}$.

\paragraph{Proof of equation~\ref{equ:pessimistic_occ_3}} Conditional on $\cB_k^{\ell}$, the roll-out policy $\pi^{\ell,k}=\pi^{\ell,k,m^{\ell,k}}$ is fixed, while the initial state $s_1^{\ell,k}\sim\mu$ has not yet been observed. Hence,
\[
\mathbb{E}\!\left[X_{\ell,h,s,a}^k\mid\cB_k^{\ell}\right]
=d_h^{\pi^{\ell,k},\mu}(s,a).
\]
For a Bernoulli variable $X$ with conditional mean $\E\sqbk{X|\cF} = u$ for some sigma-field $\cF$
\begin{equation}
\label{equ:bernoulli_martingale_bounded_by_1_2}
    \mathbb{E}\sqbk{e^{X-(e-1)u}|\cF}
    =e^{-(e-1)u}\bracket{eu+1-u}
    =e^{-(e-1)u}\bracket{1+(e-1)u}\leq1
\end{equation}
Define the random process
\[
Z_{k}
:=\exp\left(
\sum_{k'=1}^k X_{\ell,h,s,a}^{k'}
-(e-1)\sum_{k'=1}^k d_h^{\pi^{\ell,k'},\mu}(s,a)
\right),
\qquad Z_0=1.
\]
Since $Z_{k-1}$ is $\cH_{k-1}^{\ell}$-measurable and hence $\cB_{k}^{\ell}$-measurable,
\[
\mathbb{E}\!\left[Z_{k}\mid\cB_k^{\ell}\right]
=
Z_{k-1}
\mathbb{E}\!\left[
\exp\left(
X_{\ell,h,s,a}^k
-(e-1)d_h^{\pi^{\ell,k},\mu}(s,a)
\right)
\middle|\cB_k^{\ell}
\right]
\leq Z_{k-1}.
\]
Taking conditional expectations again given $\cH_{k-1}^{\ell}$ shows that
$(Z_k)$ is a nonnegative supermartingale with respect to $(\cH_k^{\ell})$.

By Ville's inequality, conditional on $\cH_0^{\ell}$, with probability at least $1-(\delta'')^2$,
\[
\sum_{k=1}^K X_{\ell,h,s,a}^k
-(e-1)\sum_{k=1}^K d_h^{\pi^{\ell,k},\mu}(s,a)
\leq 2\log\frac{1}{\delta^{\prime\prime}}.
\]
Using
$N_{\ell,h}(s,a)=\sum_{k=1}^K X_{\ell,h,s,a}^k$, the definition of $D_{\ell,h,s,a}^{\mu}$, and $e-1<2$, we obtain
\[
N_{\ell,h}(s,a)
\leq 2D_{\ell,h,s,a}^{\mu}+2\log\frac{1}{\delta^{\prime\prime}}.
\]
This proves~\eqref{equ:pessimistic_occ_3} with conditional failure
probability at most $(\delta'')^2$. Integrating over the phase-start
history and taking a union bound over all phases and rows gives
failure probabilities at most $SAHL(\delta'')^{3/2}$ and
$SAHL(\delta'')^2$ for~\eqref{equ:pessimistic_occ_1} and
\eqref{equ:pessimistic_occ_3}, respectively. In particular, the first
event alone has failure probability at most $SAHL\delta''$, as used
in Appendix~\ref{app:subsec:proof_of_pessimistic_average_counterfactual}.
Their joint failure probability is at most
$SAHL\bracket{(\delta'')^{3/2}+(\delta'')^2}\leq2SAHL\delta''$.
\end{proof}

\begin{lemma}[Formal statement of Lemma~\ref{lemma:replay_count_bound}]
With probability at least $1-2SAHL\delta''$, the number of the replay samples collected in the sealed replay stage satisfies
\[
\widetilde N_{\ell,h}(s,a)=N_{\ell,h}(s,a),\qquad
\ell\in[L],\quad(h,s,a)\in\cR_\ell.
\]
\end{lemma}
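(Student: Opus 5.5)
Since \texttt{Sealed-Replay} stores at most $N_{\ell,h}(s,a)$ samples per retained row, $\widetilde N_{\ell,h}(s,a)\le N_{\ell,h}(s,a)$ holds automatically. The task is therefore to show that the $8K$ replay episodes visit each retained tuple at least $N_{\ell,h}(s,a)$ times. Write $\widetilde X_{\ell,h,s,a}$ for the total (uncapped) number of visits to $(h,s,a)$ across all replay episodes of phase $\ell$. It suffices to show $\widetilde X_{\ell,h,s,a}\ge N_{\ell,h}(s,a)$ for every $(h,s,a)\in\cR_\ell$. The replay episodes run the fixed, $\cF_\ell$-measurable roll-out policies $(\pi^{\ell,k})_{k\in[K]}$, each exactly $8$ times, from fresh initial states $\widetilde s_1\sim\mu$. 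Hence, conditional on $\cF_\ell$, the expected total replay visit count is $8\sum_{k}d_h^{\pi^{\ell,k},\mu}(s,a)=8D^{\mu}_{\ell,h,s,a}$, and the visit indicators across replay episodes are independent Bernoullis.

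The first step is a lower-tail bound for $\widetilde X$. I would use the same exponential-supermartingale device as in Lemma~\ref{lemma:pessimistic_occupancy_bounds}. For a Bernoulli $X$ with mean $u$, $\E[e^{-X+(1-e^{-1})u}]\le 1$. Ville's inequality (Lemma~\ref{lemma:ville}) then gives, conditional on $\cF_\ell$ and with failure probability $\delta''$,
\[
(1-e^{-1})\,8D^{\mu}_{\ell,h,s,a}\le \widetilde X_{\ell,h,s,a}+\log\frac1{\delta''}.
\]
Since $8(1-e^{-1})\ge 5$, this yields $\widetilde X\ge 5D^\mu_{\ell,h,s,a}-\log(1/\delta'')$.

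The second step uses \eqref{equ:pessimistic_occ_3} of Lemma~\ref{lemma:pessimistic_occupancy_bounds}, which gives $D^\mu_{\ell,h,s,a}\ge \tfrac12 N_{\ell,h}(s,a)-\log(1/\delta'')$ outside an event of probability $(\delta'')^2$. Substituting,
\[
\widetilde X\ge \tfrac52 N_{\ell,h}(s,a)-6\log(1/\delta'').
\]
On $\cR_\ell$ we have $N_{\ell,h}(s,a)\ge 8\log(1/\delta'')$, so $6\log(1/\delta'')\le \tfrac34 N_{\ell,h}(s,a)$. This gives $\widetilde X\ge \tfrac74N_{\ell,h}(s,a)\ge N_{\ell,h}(s,a)$, and the cap then enforces equality. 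Membership in $\cR_\ell$ is $\cF_\ell$-measurable, so conditioning on it does not affect the replay concentration.

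The final step is a union bound over $(\ell,h,s,a)$. The per-row failure probabilities $\delta''$ (replay) and $(\delta'')^2$ (learning) sum to at most $2SAHL\delta''$.

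The main subtlety is the filtration bookkeeping. The replay bound must be applied conditionally on $\cF_\ell$, where $\cR_\ell$ and $D^\mu$ are fixed, while the learning-stage bound is conditional on $\cH_0^\ell$. Both are then integrated. If the constant $8$ turns out too tight for the crude $1-e^{-1}$ factor, I would instead use a sharper lower-tail mgf with parameter $\lambda$ tuned (e.g.\ $\lambda=\log 2$) to absorb the constants.
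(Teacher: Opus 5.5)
Your proof is correct and follows the paper's argument. Like the paper, you combine the learning-count bound equation~\ref{equ:pessimistic_occ_3} with a lower-tail bound, conditional on $\cF_\ell$, for the uncapped replay visit count (mean $8D^{\mu}_{\ell,h,s,a}$), then use $N_{\ell,h}(s,a)\ge 8\log(1/\delta'')$ on $\cR_\ell$ and a union bound.

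The only difference is the concentration tool. The paper first derives $D^{\mu}_{\ell,h,s,a}\ge\frac38 N_{\ell,h}(s,a)$ and then applies a multiplicative Chernoff bound, with per-row failure $(\delta'')^{16/3}$. Your Bernoulli exponential-moment bound with additive $\log(1/\delta'')$ slack gives per-row failure $\delta''$, which still fits the $2SAHL\delta''$ budget. Your constants check out: $8(1-e^{-1})\ge 5$, and $\widetilde X\ge\frac74 N_{\ell,h}(s,a)$.
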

\begin{proof}
On the learning-count event~\eqref{equ:pessimistic_occ_3}, every retained row satisfies
\begin{equation}
\label{equ:occupancy_lowerbound}
D_{\ell,h,s,a}^{\mu}
\geq\frac{N_{\ell,h}(s,a)-2\log(1/\delta'')}{2}
\geq\frac38N_{\ell,h}(s,a),
\end{equation}
since $N_{\ell,h}(s,a)\geq8\log(1/\delta'')$. Conditional on the history $\cF_\ell$ immediately before replay, all roll-out policies, retained rows, and quotas are fixed. Let $X_{\ell,h,s,a}^{k,l}$ indicate a visit to $(h,s,a)$ in the $l$-th fresh replay of the indexed policy $\pi^{\ell,k}$, and let
\[
T_{\ell,h,s,a}:=\sum_{k=1}^K\sum_{l=1}^{L_{\mathrm{rep}}}X_{\ell,h,s,a}^{k,l}.
\]
The replay episodes are conditionally independent, including episodes that replay identical policies. For $L_{\mathrm{rep}}=8$, on the learning-count event,
\[
\E[T_{\ell,h,s,a}\mid\cF_\ell]
=8D_{\ell,h,s,a}^{\mu}\geq3N_{\ell,h}(s,a).
\]
Since the sealed replay phase only records the first $N_{\ell, h}(s,a)$ samples, 
\[
\widetilde N_{\ell,h}(s,a)=\min\{T_{\ell,h,s,a},N_{\ell,h}(s,a)\}.
\]
Then, the Chernoff's inequality gives,
\begin{align*}
\mathbb P\bracket{\widetilde N_{\ell,h}(s,a)<N_{\ell,h}(s,a)\mid\cF_\ell}
\leq&\mathbb P\bracket{T_{\ell,h,s,a}\leq\tfrac13\E[T_{\ell,h,s,a}\mid\cF_\ell]\mid\cF_\ell}\\
\leq&\exp\bracket{-\tfrac23N_{\ell,h}(s,a)}\\
\leq&(\delta'')^{16/3}.
\end{align*}
Using the failure bound $(\delta'')^2$ for~\eqref{equ:pessimistic_occ_3}, a union bound over all phases and tuples bounds the probability of any unfilled $\widetilde{N}_{\ell, h}(s,a)\ <N_{\ell, h}(s,a)$ by
\[
SAHL\bracket{(\delta'')^2+(\delta'')^{16/3}}\leq2SAHL\delta''.
\]
This demonstrates that, with probability at least $1-2SAHL\delta''$, across all $L$ phases, samples drawn via sealed replay always include as many new samples as were collected during the learning phase. Then we complete the proof.
\end{proof}

\subsubsection{Proof of Lemma~\ref{lemma:pessimism}}
\label{app:subsub:proof_of_pessimism}
\begin{lemma}[Formal statement of Lemma~\ref{lemma:pessimism}]
\label{lemma:app:pessimism}
For every $\ell\in[L]$, it holds that $
    \mathbb{E}\sqbk{\exp\bracket{\frac{\hat{v}_{\ell, m}^{-} - v_{\ell,m}}{H}}|\cF_{\ell}} \leq 1$.
Moreover, with probability at least $1-\delta/64$, it holds that $\sum_{\ell=1}^L(\hat v_{\ell,m}^--v_{\ell,m})
\le H\log\frac{64M}{\delta}$
 simultaneously for all $m\in[M]$,
\end{lemma}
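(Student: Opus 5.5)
Conditional on $\cF_\ell$, the candidate policies $\pi^{\ell,k,m}$, the initial states $s_1^{\ell,k}$, the retained set $\cR_\ell$ and the quotas $N_{\ell,h}(s,a)$ are all fixed. The replay transitions are fresh and independent of this history. The plan is to prove, for each fixed $k$ and $m$, the per-policy bound
\[
\E\bigl[\exp\bigl((\underline V_1^{\ell,k,m}(s)-V_1^{\pi^{\ell,k,m},m}(s))/H\bigr)\mid\cF_\ell\bigr]\le1
\quad\text{for all } s .
\]
The phase-level statement then follows from convexity of the exponential: $\exp\bigl(\frac1K\sum_k x_k\bigr)\le\frac1K\sum_k e^{x_k}$. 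This avoids any independence requirement across $k$, which matters because all $k$ share the same replay kernel $\widetilde P_\ell$.

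For the per-policy bound I would use backward induction on $h$ for the moment generating function, as a convex (not linear) statement. For each $(h,s)$, let $G_h(s)=\E[\exp(\lambda(\underline V_h(s)-V_h^\pi(s)))\mid\cF_\ell]$, with $\lambda\in\{1/H\}$ rescaled by the remaining horizon, or more robustly for all $\lambda\in[0,1/H]$. The aim is to show $G_h\le1$.

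On rows outside $\cR_\ell$, $\underline Q_h=r_h\le Q_h^\pi$, which is deterministic and hence fine. On a retained row, write $n=\widetilde N_{\ell,h}(s,a)$ and $\alpha=n/(n+1)$. Then
\[
\underline Q_h-Q_h^\pi=\alpha\langle\widetilde P-P,\underline V_{h+1}\rangle+\alpha\langle P,\underline V_{h+1}-V_{h+1}^\pi\rangle-(1-\alpha)\langle P,V^\pi_{h+1}\rangle .
\]
The clip at $0$ in Algorithm~\ref{alg:pessimistic_evaluation} is harmless since $Q^\pi\ge0$. The key observation is that $\widetilde P_{h,s,a}$ is built from samples at stage $h$, while $\underline V_{h+1}$ depends only on samples at stages $>h$. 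So conditionally on the later-stage samples, the $n$ next states at $(h,s,a)$ are i.i.d.\ from $P_{h,s,a}$. Here $n$ is fixed on the event of Lemma~\ref{lemma:replay_count_bound}; in general I would use a per-row i.i.d.\ stream representation with the cap, which makes $n$ a stopping quantity. Then Hoeffding's lemma gives
\[
\E[e^{\lambda\alpha\langle\widetilde P-P,\underline V_{h+1}\rangle}]\le\exp\bigl(\lambda^2\alpha^2\V(P,\underline V_{h+1})/(2n)\cdot c\bigr).
\]
The shrinkage term $-(1-\alpha)\langle P,V^\pi_{h+1}\rangle=-\langle P,V^\pi_{h+1}\rangle/(n+1)$ has to absorb this variance term. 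Heuristically, with $\lambda\le1/H$ and $\underline V\le H$, the variance term is at most $\lambda H\langle P,\underline V_{h+1}\rangle/(2n)$. This is linear in the mean, which is exactly what the $1/(n+1)$ shrinkage pays for, after comparing $\langle P,\underline V_{h+1}\rangle$ to $\langle P,V^\pi_{h+1}\rangle$ through the induction. I would then propagate through the policy average and the expectation over $s_{h+1}\sim P$ using Jensen, which gives $G_h(s)\le\sum_a\pi(a|s)\E_{s'\sim P}[G_{h+1}(s')]\le1$.

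I expect this absorption step to be the main obstacle. The clean route is to use the sharper Bernoulli-type MGF bound for $[0,H]$-valued variables, $\E e^{\theta Y}\le1+\frac{e^{\theta H}-1}{H}\E Y$, directly on the empirical mean. A product of $n$ such factors with $\theta=\lambda\alpha/n$ gives at most $\exp(\lambda\langle P,\underline V_{h+1}\rangle\cdot\frac{n(e^{\alpha/n}-1)}{1})$. The factor $\alpha=n/(n+1)$ is chosen precisely so that $n(e^{1/(n+1)}-1)\le 1$ for all $n\ge1$, since $e^{1/(n+1)}\le1+1/n$. That closes the induction with no additional slack, and I would attempt this version first.

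For the second claim, define $S^m_\ell=\exp\bigl(\sum_{\ell'\le\ell}(\hat v^-_{\ell',m}-v_{\ell',m})/H\bigr)$. Since $\hat v^-_{\ell,m}$ and $v_{\ell,m}$ are measurable before $\cF_{\ell+1}$, the first claim and the tower property make $S^m_\ell$ a nonnegative supermartingale with $S^m_0=1$. Markov's inequality (or Lemma~\ref{lemma:ville}) at level $\delta/(64M)$, followed by a union bound over $m\in[M]$, yields $\sum_\ell(\hat v^-_{\ell,m}-v_{\ell,m})\le H\log\frac{64M}{\delta}$ for all $m$ with probability at least $1-\delta/64$.
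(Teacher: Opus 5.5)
Your plan is essentially the paper's proof: the same per-policy backward induction on $\E[\exp((\underline V_h-V_h^{\pi,m})/H)\mid\cF_\ell]$, the same scalar bound from convexity of the exponential on $[0,H]$ together with $n(e^{1/(n+1)}-1)\le 1$, Jensen over actions and over $k$, and then the tower rule, Markov's inequality, and a union bound over $m$; the only variation is that you take the $s'\sim P_{h,s,a}$ average out of the exponent by Jensen, where the paper uses H\"older, and both close the induction at $\le 1$. For the random replay count, use the paper's device---a fixed-prefix auxiliary model with exactly $n=N_{\ell,h}(s,a)$ i.i.d.\ samples per retained row ($n$ is $\cF_\ell$-measurable), coupled to the algorithm on the event of Lemma~\ref{lemma:replay_count_bound}---rather than your ``stopping quantity'' fallback: with the true random counts, $\underline V_{h+1}$ depends on the stage-$h$ samples through the stage-$(h+1)$ replay counts, so the stage-$h$ samples are no longer i.i.d.\ given $\underline V_{h+1}$, which is exactly what your inductive step needs.
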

\begin{proof}
We will prove that $ \mathbb{E}\sqbk{\exp\bracket{\frac{\hat{v}_{\ell, m}^{-} - v_{\ell,m}}{H}}|\cF_{\ell}}\leq 1$.
The final tail probability bound follows by Markov's inequality and the bound $\mathbb{E}\sqbk{\exp\bracket{\sum_{\ell=1}^L\frac{\hat{v}_{\ell, m}^{-} - v_{\ell,m}}{H}}}\leq 1$.

Conditional on $\cF_\ell$, the counterfactual policies $\{(\pi^{\ell, k, m})_{k\in [K]}\}$, initial states $(s_{1}^{\ell, k})_{k\in [K]}$, retained set $\cR_\ell$, and phase learning tuple count $N_{\ell,h}(s,a)$ are fixed. Represent the fresh replay transitions by arrays $
Z_{\ell,h,s,a}^{(i)}\overset{\mathrm{i.i.d.}}{\sim}P_{h,s,a}$ for $i\geq 1$.
 The auxiliary run uses the fixed-prefix model
\[
\widetilde P_{\ell,h,s,a}
=\frac1{N_{\ell,h}(s,a)}\sum_{i=1}^{N_{\ell,h}(s,a)}\delta_{Z_{\ell,h,s,a}^{(i)}},
\qquad(h,s,a)\in\cR_\ell,
\]
Under the success event of Lemma~\ref{lemma:replay_count_bound}, $\widetilde{P}_{\ell, h, s,a}$ is constructed via $N_{\ell, h}(s,a)$ fresh replayed samples. The recursion gives $0\leq\underline V_h\leq H-h+1$, and its clipping at zero is inactive because rewards and continuation values are nonnegative. For a retained row, put $n=N_{\ell,h}(s,a)$, so
\begin{align}
\underline Q_h^{\ell, k, m}(s,a)=r_h^m(s,a)+\frac1{n+1}\sum_{i=1}^n\underline V_{h+1}^{\ell, k, m}(Z_{\ell,h,s,a}^{(i)}).\label{eq:defq}
\end{align}
We first record the scalar inequality for i.i.d. variables $Y_1,\ldots,Y_n\in[0,H]$:
\begin{align}
\E\sqbk{\exp\bracket{\frac{\sum_{i=1}^nY_i}{H(n+1)}}} \leq\left[1+\frac{\E\sqbk{Y_1}}{H}\bracket{e^{1/(n+1)}-1}\right]^n\leq\exp\bracket{\frac{\E\sqbk{Y_1} }{H}}.\label{eq:scalar}
\end{align}
The first inequality follows from convexity of the exponential on $[0,1]$. The second uses $1+x\leq e^x$ and $n(e^{1/(n+1)}-1)\leq1$ holds for all $n\geq 0$; the latter follows from $e^x\leq(1-x)^{-1}$ for $0\leq x<1$.

We prove by backward induction that, for every $h$ and $s$,
\begin{equation}
\label{equ:exponential_value_bound}
\E\sqbk{\exp\bracket{\frac{\underline V_h^{\ell, k, m}(s)-V_h^{\pi,m}(s)}{H}}\mid\cF_\ell}\leq1.
\end{equation}
\textbf{Base case.} The terminal values are zero since $\underline{V}_{H+1}^{\ell, k, m}(s)=V_{H+1}^{\pi^{\ell, k, m},m}=0$, the base case holds.

\textbf{Inductive Hypothesis.} Suppose
\begin{equation}
\label{equ:inductive_hypothesis}
    \E\sqbk{\exp\bracket{\frac{\underline V_{h+1}^{\ell, k, m}(s)-V_{h+1}^{\pi,m}(s)}{H}}\mid\cF_\ell}\leq1.
\end{equation}
\textbf{Inductive Step.} Conditional on $\cF_\ell$ and $\underline V_{h+1}^{\ell, k, m}$, the current tuple samples remain i.i.d., since $\underline V_{h+1}^{\ell, k, m}$ depends only on replay samples at stages strictly greater than $h$.

On the tuple $(h,s,a)\in \cR_{\ell}$, recall the definition of $\underline{Q}_{h}^{\ell, k, m}(s,a)$ in~\eqref{eq:defq}.
\begin{align*}
    \underline Q_h^{\ell, k, m}(s,a)=&r_h^m(s,a)+\frac1{n+1}\sum_{i=1}^n\underline V_{h+1}^{\ell, k, m}(Z_{\ell,h,s,a}^{(i)})\\
    Q_{h}^{\pi^{\ell, k, m}, m}(s,a)=& r_{h}^{m}(s,a) + \langle P_{h,s,a}, V_{h+1}^{\pi^{\ell, k, m}, m}\rangle
\end{align*}
Subtract the equations we have
\[
\underline Q_h^{\ell, k, m}(s,a) - Q_{h}^{\pi^{\ell, k, m}, m}(s,a) =\frac1{n+1}\sum_{i=1}^n\underline V_{h+1}^{\ell, k, m}(Z_{\ell,h,s,a}^{(i)}) - \langle P_{h,s,a}, V_{h+1}^{\pi^{\ell, k, m}, m}\rangle
\]
Conditional on $\cF_{\ell}$, $\underline{V}_{h+1}^{\ell, k, m}$
\[
\E\sqbk{\underline{V}_{h+1}^{\ell, k, m}(Z_{\ell, h, s, a}^{(i)})|\cF_{\ell}, \underline{V}_{h+1}^{\ell, k, m}} = \langle P_{h,s,a}, \underline{V}_{h+1}^{\ell, k, m} \rangle
\]
Then apply~\cref{eq:scalar}, with $Y_i=\underline V_{h+1}^{\ell, k, m}(Z_{\ell,h,s,a}^{(i)})$
\[
\E\sqbk{\exp\bracket{\frac{\sum_{i=1}^{n}\underline{V}_{h+1}^{\ell, k, m}(Z_{\ell, h,s,a}^{(i)})}{H(n+1)}}|\cF_{\ell}, \underline{V}_{h+1}^{\ell, k, m}} \leq \exp\bracket{\frac{\langle P_{h,s,a}, \underline{V}_{h+1}^{\ell, k, m} \rangle}{H}}
\]
We derive
\begin{align*}
    &\E\sqbk{\exp\bracket{\frac{\underline{Q}_{h}^{\ell, k, m}(s,a) - Q_{h}^{\pi^{\ell, k, m}, m}(s,a)}{H}}|\cF_{\ell}, \underline{V}_{h+1}^{\ell, k, m}}\\
    =&\E\sqbk{\exp\bracket{\frac{\sum_{i=1}^{n}\underline{V}_{h+1}^{\ell, k, m}(Z_{\ell, h,s,a}^{(i)})}{H(n+1)}}|\cF_{\ell}, \underline{V}_{h+1}^{\ell, k, m}} \times \exp\bracket{-\frac{\langle P_{h,s,a}, V_{h+1}^{\pi^{\ell, k, m}, m} \rangle}{H}}\\
    \leq&\exp\bracket{\frac{\langle P_{h,s,a}, \underline{V}_{h+1}^{\ell, k, m} \rangle}{H}} \times \exp\bracket{-\frac{\langle P_{h,s,a}, V_{h+1}^{\pi^{\ell, k, m}, m} \rangle}{H}}\\
    \leq& \exp\bracket{\frac{\langle P_{h,s,a}, \underline{V}_{h+1}^{\ell, k, m} - V_{h+1}^{\pi^{\ell, k, m}, m} \rangle}{H}}
\end{align*}
By the tower rule, since $\cF_{\ell}\subseteq \sigma (\cF_{\ell}, \underline{V}_{h+1}^{\ell, k, m})$
\begin{align*}
    &\E\sqbk{\exp\bracket{\frac{\underline{Q}_{h}^{\ell, k, m}(s,a) - Q_{h}^{\pi^{\ell, k, m}, m}(s,a)}{H}}|\cF_{\ell}}\\
    =&\E\sqbk{\E\sqbk{\exp\bracket{\frac{\underline{Q}_{h}^{\ell, k, m}(s,a) - Q_{h}^{\pi^{\ell, k, m}, m}(s,a)}{H}}|\cF_{\ell}, \underline{V}_{h+1}^{\ell, k, m}}\mid \cF_{\ell}}\\
    \leq& \E\sqbk{\exp\bracket{\frac{\langle P_{h,s,a}, \underline{V}_{h+1}^{\ell, k, m} - V_{h+1}^{\pi^{\ell, k, m}, m} \rangle}{H}}\mid\cF_{\ell}}\\
    =&
\mathbb{E}\left[
  \prod_{s'\in\mathcal{S}}\exp\left(
    \frac{ P_{h,s,a}(s')(\underline{V}_{h+1}^{\ell,k,m}(s')
    - V_{h+1}^{\pi^{\ell,k,m},m}(s'))}{H}
  \right)
  \,\middle|\, \mathcal{F}_\ell
\right] \\
\leq&
\prod_{s'\in\mathcal{S}}\mathbb{E}\left[
  \exp\left(
    \frac{\underline{V}_{h+1}^{\ell,k,m}(s')
    - V_{h+1}^{\pi^{\ell,k,m},m}(s')}{H}
  \right)
  \,\middle|\, \mathcal{F}_\ell
\right]^{P_{h,s,a}(s')}\\
\leq& 1.
\end{align*}
The second inequality follows from H\"older's inequality that $\mathbb{E}[\prod_{i}X_i^{a_i}] \leq \prod_i(\mathbb{E}[X_i])^{a_i}$  for non-negative $X_i$ and $a_i\in (0,1)$ for each $i$ satisfying that $\sum_i a_i = 1$. We derive the last inequality from~\cref {equ:inductive_hypothesis}.

On a tuple $(h,s,a)\notin \cR_{\ell}$,
\[
\underline Q_h^{\ell, k, m}(s,a)=r_h^m(s,a)\leq Q_h^{\pi^{\ell, k, m},m}(s,a)
\]
so the same bound holds directly. Finally, Jensen's inequality for the action average yields
\begin{align*}
    &\E\sqbk{\exp\bracket{\frac{\underline{V}_{h}^{\ell, k, m}(s) - V_{h}^{\pi^{\ell, k, m}, m}(s)}{H}}\mid\cF_\ell}\\
\leq&\sum_{a\in\cA}\pi_h^{\ell, k, m}(a| s)\E\sqbk{\exp\bracket{\frac{\underline{Q}_{h}^{\ell, k, m}(s, a) - Q_{h}^{\pi^{\ell, k, m}, m}(s, a)}{H}}\mid\cF_\ell}\\
\leq&1,
\end{align*}
completing the induction.

Since $e^x$ is a convex function, Jensen's inequality gives, for all $\ell\geq 1$
\begin{equation}
    \begin{aligned}
        &\E\sqbk{\exp\bracket{\frac{\hat{v}_{\ell, m}^{-} - v_{\ell, m}}{H}}\mid \cF_{\ell}}\\
        \leq& \frac{1}{K}\sum_{k=1}^{K}\E\sqbk{\exp\bracket{\frac{\underline{V}_{1}^{\ell, k, m}(s_{1}^{\ell, k}) - V_{1}^{\pi^{\ell, k, m}, m}(s_{1}^{\ell, k})}{H}}\mid \cF_{\ell}}\leq 1\label{equ:exponential_gap_bound}
    \end{aligned}
\end{equation}
Since $\cF_\ell$ contains all earlier optimistic evaluations and conservative evaluations, iterating conditional expectations using tower rule across phases gives
\begin{align*}
    &\E\sqbk{\exp\bracket{\frac{1}{H}\sum_{\ell=1}^{L}(\hat{v}_{\ell, m}^{-} - v_{\ell, m})}}\\
    =& \E\sqbk{\exp\bracket{\frac{1}{H}\sum_{\ell=1}^{L-1}(\hat{v}_{\ell, m}^{-} - v_{\ell, m})}\E\sqbk{\exp\bracket{\frac{\hat{v}_{L, m}^{-} - v_{L, m}}{H}}\mid \cF_{L}}}\\
    \overset{(i)}{\leq}& \E\sqbk{\exp\bracket{\frac{1}{H}\sum_{\ell=1}^{L-1}(\hat{v}_{\ell, m}^{-} - v_{\ell, m})}}\\
    \leq& \cdots \leq 1
\end{align*}
Where $(i)$ is derived by the~\cref{equ:exponential_gap_bound}.

\end{proof}

\subsection{Proof of Lemma~\ref{lemma:pessimistic_average_counterfactual}}
\label{app:subsec:proof_of_pessimistic_average_counterfactual}
We now bound the conditional mean evaluation error using the same auxiliary fixed-prefix model.
\begin{lemma}[Formal statement of Lemma~\ref{lemma:pessimistic_average_counterfactual}]
\label{lemma:app:pessimistic_average_counterfactual}
With probability at least $1-SAHL\delta''$, for every $\ell\in[L]$,
\[
0\leq\sum_{m=1}^M p_\ell(m)\bracket{v_{\ell,m}-\E[\hat v_{\ell,m}^-\mid\cF_\ell]}
\leq\frac{18SAH^2}{K}\log\frac1{\delta''}.
\]
\end{lemma}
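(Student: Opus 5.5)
We work conditionally on $\cF_\ell$. Fix $(k,m)$, write $\pi=\pi^{\ell,k,m}$, and compare the conditional mean $\overline V_h:=\E[\underline V_h^{\ell,k,m}\mid\cF_\ell]$ with $V_h^{\pi,m}$ through a backward recursion, using the fixed-prefix replay model from the proof of Lemma~\ref{lemma:app:pessimism}. That model has $n=N_{\ell,h}(s,a)$ i.i.d.\ samples on each retained row, independent of the continuation value built from later stages.

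On a retained row, conditioning on $\underline V_{h+1}$ and using independence gives
\[
\E[\underline Q_h(s,a)\mid\cF_\ell]=r_h^m(s,a)+\tfrac{n}{n+1}\langle P_{h,s,a},\overline V_{h+1}\rangle .
\]
Hence
\[
Q_h^{\pi,m}(s,a)-\E[\underline Q_h(s,a)\mid\cF_\ell]=\langle P_{h,s,a},V_{h+1}^{\pi,m}-\overline V_{h+1}\rangle+\tfrac1{n+1}\langle P_{h,s,a},\overline V_{h+1}\rangle .
\]
The last term lies in $[0,H/(n+1)]$. On a non-retained row the difference is $\langle P_{h,s,a},V^{\pi,m}_{h+1}\rangle\in[0,H]$.

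Induction then shows the per-step error is nonnegative, which gives the lower bound $0\le v_{\ell,m}-\E[\hat v^-_{\ell,m}\mid\cF_\ell]$ deterministically for every $m$. Unrolling along the true dynamics of $\pi$ from $s_1^{\ell,k}$ gives
\[
V_1^{\pi,m}(s_1^{\ell,k})-\overline V_1(s_1^{\ell,k})\le \sum_{h}\sum_{(s,a)} d^{k,m}_{\ell,h}(s,a)\,e_h(s,a),
\]
with $e_h=H/(N_{\ell,h}+1)$ on $\cR_\ell$ and $e_h=H$ otherwise.

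Averaging over $k$ with weight $1/K$ and over $m$ with $p_\ell(m)$ produces the occupancy $D^{\mathrm{avg}}_{\ell,h,s,a}$ from~\eqref{equ:average_cumulative_occupancy}. The weighted bias is therefore at most
\[
\frac HK\sum_{h,s,a}D^{\mathrm{avg}}_{\ell,h,s,a}\,\min\Big\{1,\tfrac{1}{N_{\ell,h}(s,a)+1}\Big\}\cdot\bigl[\text{$1/(N+1)$ if retained, else $1$}\bigr].
\]
Now apply~\eqref{equ:pessimistic_occ_1}, which holds with total failure probability at most $SAHL\delta''$ over all phases and rows by Lemma~\ref{lemma:pessimistic_occupancy_bounds}. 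We use
\[
D^{\mathrm{avg}}\le 2N+2\log(1/\delta'').
\]
For a retained row, $N\ge 8\log(1/\delta'')$, so
\[
\frac{D^{\mathrm{avg}}}{N+1}\le 2+\frac{2\log(1/\delta'')}{N}\le 3 .
\]
For a non-retained row, $N<8\log(1/\delta'')$, so $D^{\mathrm{avg}}\le 18\log(1/\delta'')$. Either way each row contributes at most $18\log(1/\delta'')$, since $\log(1/\delta'')\ge1$. Summing over the $SAH$ rows gives the bound $\frac{18SAH^2}{K}\log\frac1{\delta''}$, simultaneously for all $\ell$.

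The main obstacle is rigor in the conditional mean step. The continuation $\underline V_{h+1}$ must be independent of the stage-$h$ samples, which holds because replay samples for different $(h,s,a)$ are disjoint coordinates of the fixed-prefix array. The relation between the actual replay model and the fixed-prefix model must also be handled. I would work under the count event of Lemma~\ref{lemma:replay_count_bound} or, as in Lemma~\ref{lemma:app:pessimism}, define $\hat v^-$ via the auxiliary fixed-prefix model so the identity is exact.

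Finally, the occupancy bound~\eqref{equ:pessimistic_occ_1} involves $D^{\mathrm{avg}}$ and $N_{\ell,h}$, which are $\cF_\ell$-measurable. The statement is therefore an $\cF_\ell$-measurable event, and its probability is controlled directly by the union bound already done in Lemma~\ref{lemma:pessimistic_occupancy_bounds}.
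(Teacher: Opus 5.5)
Your proposal is correct and follows the paper's proof essentially step for step. It uses the conditional-mean backward recursion on the fixed-prefix replay model, unrolls it into $D^{\mathrm{avg}}_{\ell,h,s,a}$ via occupancy measures, and applies \eqref{equ:pessimistic_occ_1} row by row to get at most $18\log(1/\delta'')$ per row, whether retained or not.

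Two small fixes:
\begin{itemize}
\item Your displayed weight $\min\{1,\frac{1}{N+1}\}\cdot[\cdots]$ applies the factor $1/(N+1)$ twice. Use the $e_h$ you defined just before, which is what you actually use afterward.
\item Of your two options for the replay model, only defining $\hat v^-_{\ell,m}$ through the auxiliary fixed-prefix model, as the paper does, gives the exact conditional-mean identity. ``Working under the count event'' does not suffice, because $\E[\cdot\mid\cF_\ell]$ averages over all replay outcomes. In the real run the counts $\widetilde N_{\ell,h+1}$ depend on stage-$h$ transitions, which breaks the independence between $\underline V_{h+1}$ and the stage-$h$ samples.
\end{itemize}
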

\begin{proof}
For a fixed phase $\ell$ and a counterfactual policy $\pi=\pi^{\ell,k,m}$, we suppress $(\ell,k,m)$ on the conservative evaluation, i.e., denote $\underline{Q}_{h}(s,a) = \underline{Q}_{h}^{\ell, k, m}(s,a)$, $\underline{V}_{h}(s) = \underline{V}_{h}^{\ell, k, m}(s)$, $r=r^{m}$ in this proof. Conditional on $\cF_{\ell}$
\[
\E[\underline Q_h(s,a)\mid\cF_\ell]
=\begin{cases}
r_h^m(s,a)+\dfrac{N_{\ell,h}(s,a)}{N_{\ell,h}(s,a)+1}
\langle P_{h,s,a},\E[\underline V_{h+1}\mid\cF_\ell]\rangle,
&(h,s,a)\in\cR_\ell,\\
r_h^m(s,a),&(h,s,a)\notin\cR_\ell.
\end{cases}
\]
We first show 
\[
\E\sqbk{\underline{V}_{h}(s)|\cF_{\ell}} \leq  V_{h}^{\pi, m}(s)
\]
by induction.

\textbf{Base case.} The base case holds trivially since $\underline{V}_{H+1}(s)= V_{H+1}^{\pi, m}(s)=0$.

\textbf{Inductive hypothesis.} Suppose $\E[\underline{V}_{h+1}(s)|\cF_{\ell}]\leq V_{h+1}^{\pi, m}(s)$ for all $s\in \cS$

\textbf{Inductive step.} On retained set $\cR$, backward induction implies
\begin{align*}
    \E\sqbk{\underline{Q}_{h}(s,a)|\cF_{\ell}} =& r_{h}^{m}(s,a) + \dfrac{N_{\ell,h}(s,a)}{N_{\ell,h}(s,a)+1}
\langle P_{h,s,a},\E[\underline V_{h+1}\mid\cF_\ell]\rangle\\
\leq& r_{h}^{m}(s,a) + \langle P_{h,s,a},\E[\underline V_{h+1}\mid\cF_\ell]\rangle\\
\leq& r_{h}^{m}(s,a) + \langle P_{h,s,a},V_{h+1}^{\pi, m}\rangle\\
=& Q_{h}^{\pi, m}(s,a)
\end{align*}
For an unretained tuple $(h,s,a)\notin \cR_{\ell}$, 
\[
\E\sqbk{\underline{Q}_{h}(s,a)|\cF_{\ell}} = r_{h}^{m}(s,a)\leq Q_{h}^{\pi, m}(s,a)
\]
Together with $\E[\underline V_h(s)\mid\cF_\ell]=\sum_a\pi_h(a| s)\E[\underline Q_h(s,a)\mid\cF_\ell]$, we prove
\[
0\leq\E[\underline V_h(s)\mid\cF_\ell]\leq V_h^{\pi,m}(s)\leq H.
\]
For a retained tuple $(h,s,a)\in\cR_{\ell}$, subtracting the mean recursion from the true Bellman equation yields
\begin{align*}
&Q_h^{\pi,m}(s,a)-\E[\underline Q_h(s,a)\mid\cF_\ell]\\
&=\langle P_{h,s,a},V_{h+1}^{\pi,m}-\E[\underline V_{h+1}\mid\cF_\ell]\rangle
+\frac{\langle P_{h,s,a},\E[\underline V_{h+1}\mid\cF_\ell]\rangle}{N_{\ell,h}(s,a)+1}\\
&\leq\langle P_{h,s,a},V_{h+1}^{\pi,m}-\E[\underline V_{h+1}\mid\cF_\ell]\rangle
+\frac H{N_{\ell,h}(s,a)+1}.
\end{align*}
For an unretained tuple $(h,s,a)\notin \cR_{\ell}$ the same upper bound holds with the last term replaced by $H$, because only the immediate reward is retained:
\[
Q_{h}^{\pi, m}(s,a) - \E\sqbk{\underline{Q}_{h}(s,a)|\cF_{\ell}} \leq\langle P_{h,s,a},V_{h+1}^{\pi,m}-\E[\underline V_{h+1}\mid\cF_\ell]\rangle + H
\]

Averaging over actions and unrolling along the true transition kernel and policy $\pi$, starting from the recorded initial state $s_1^{\ell,k}$, recall the definition of $d_{\ell, h}^{k, m}(s,a)$ in~\cref{equ:occupancy} and plug back the index $(\ell, k, m)$
\begin{align*}
0\leq& V_1^{\pi^{\ell, k, m},m}(s_1^{\ell,k})
-\E[\underline V_1^{\ell, k, m}(s_1^{\ell,k})\mid\cF_\ell]\\
\leq& H\left[
\sum_{(h,s,a)\in\cR_\ell}\frac{d_{\ell,h}^{k,m}(s,a)}{N_{\ell,h}(s,a)+1}
+\sum_{(h,s,a)\notin\cR_\ell}d_{\ell,h}^{k,m}(s,a)\right].
\end{align*}
Taking the average over $k\in[K]$ and then weighting by the reward-sampling distribution $p_\ell$ gives
\begin{equation}
\label{equ:mean_evaluation_bias}
0\leq\sum_{m=1}^M p_\ell(m)\bracket{v_{\ell,m}-\E[\hat v_{\ell,m}^-\mid\cF_\ell]}\quad\leq\frac HK\left[\underbrace{\sum_{(h,s,a)\in\cR_\ell}\frac{D_{\ell,h,s,a}^{\mathrm{avg}}}{N_{\ell,h}(s,a)+1}}_{\text{retained part}}
+\underbrace{\sum_{(h,s,a)\notin\cR_\ell}D_{\ell,h,s,a}^{\mathrm{avg}}}_{\text{unretained part}}\right].
\end{equation}
The right-hand side can be decomposed into the retained part and the unretained part. 

\textbf{Retained part.} By Lemma~\ref{lemma:pessimistic_occupancy_bounds}, on the weighted occupancy event~\eqref{equ:pessimistic_occ_1}, a retained row satisfies
\begin{equation}
\label{equ:retained_occupancy_average_bound}
\frac{D_{\ell,h,s,a}^{\mathrm{avg}}}{N_{\ell,h}(s,a)+1}
\leq\frac{2N_{\ell,h}(s,a)+2\log(1/\delta'')}{N_{\ell,h}(s,a)+1}
\leq\frac94
\end{equation}
since $\log(1/\delta'')\geq1$, every tuple contributes at most $18\log(1/\delta'')$ to the retained part. Since there are $SAH$ tuples, then
\[
\text{retained part}\leq \frac{9SAH}{4}
\]

\textbf{Unretained part.} For an unretained triple $(h,s,a)\notin \cR_{\ell}$, we have $N_{\ell,h}(s,a)<8\log(1/\delta'')$. Hence, by~\cref{equ:pessimistic_occ_1}
\[
D_{\ell,h,s,a}^{\mathrm{avg}}
\leq2N_{\ell,h}(s,a)+2\log\frac1{\delta''}
<18\log\frac1{\delta''}.
\]
Similarly, $|\cR_{\ell}^{c}|\leq SAH$ implies
\[
\text{unretained part}\leq 18SAH\log\frac{1}{\delta''}.
\]
Combining the two parts together, we have
\[
0\leq \sum_{m=1}^{M}p_{\ell}(m)\bracket{v_{\ell, m} - \E\sqbk{\hat{v}_{\ell, m}^{-}|\cF_{\ell}}} \leq \frac{18SAH^2}{K}\log\frac{1}{\delta''}.
\]
 Lemma~\ref{lemma:pessimistic_occupancy_bounds} ensures this event simultaneously over all phases and rows with failure probability at most $SAHL(\delta'')^{3/2}\leq SAHL\delta''$, proving the claim. The same event controls all rewards through their weighted occupancy; no additional union bound over rewards is required.
\end{proof}

\subsection{Proof of Theorem~\ref{thm:adv_sample_complexity}}
\label{app:proof_of_adv_sample_complexity}
In this section, we complete the proof of our main Theorem~\ref{thm:adv_sample_complexity}. 

\paragraph{Bounding the cumulative certificates.}
Fix $\ell\in[L]$ and $m\in[M]$. Conditional on $\cF_\ell$, both
$\hat v_{\ell,m}^+$ and $v_{\ell,m}$ are fixed. Denote the $X$ as the normalized gap between optimistic evaluation and the conservative evaluation: $
X=\frac{\max\{\hat{v}_{\ell, m}^{+}, v_{\ell,m}\}-\hat v_{\ell,m}^-}{H}
\in[-1,1]$. Then
Lemma~\ref{lemma:app:pessimism} implies
$\E[e^{-X}\mid\cF_\ell]\leq1$. For $x\in[-1,1]$, the elementary
inequality $e^{-x}\geq1-x+x^2/3$ therefore gives $
\mathbb{E}[X^2 |\mathcal{F}_{\ell}] \leq 3 \mathbb{E}[X|\mathcal{F}_{\ell}]$. Also noting that  $e^x\leq1+x+x^2$, we then have  $
\E[\exp(X)\mid\cF_\ell]\leq1+4\E[X\mid\cF_\ell].$

Since $(\hat v_{\ell,m}^+-\hat v_{\ell,m}^-)/H\leq X$, it follows that
\begin{align}
\E\!\left[\exp\!\left(\frac{\hat v_{\ell,m}^+-\hat v_{\ell,m}^-}{H}\right)
\middle|\cF_\ell\right]\leq & 1+\frac4H\left[(\hat v_{\ell,m}^+-v_{\ell,m})_++v_{\ell,m}-\E[\hat v_{\ell,m}^-\mid\cF_\ell]\right],
\label{eq:certificate-exponential-moment}
\end{align}
where $(x)_+ = \max\{x,0\}$.
We recall that $W_{\ell} =
    \frac{1}{M}\sum_{m=1}^M
    \exp\left(
        \frac{1}{H}\sum_{j=1}^{\ell-1}
        (\hat v_{j,m}^+-\hat v_{j,m}^-)
    \right)$. In particular $W_1 = 1$.
    
The update in Algorithm~\ref{alg:main} gives $
\frac{W_{\ell+1}}{W_\ell}
=\sum_{m=1}^M p_\ell(m)
\exp\!\left(\frac{\hat v_{\ell,m}^+-\hat v_{\ell,m}^-}{H}\right).$

Then, by~\cref{eq:certificate-exponential-moment} it implies
\begin{align*}
\E\sqbk{\frac{W_{\ell+1}}{W_{\ell}}\mid \cF_{\ell}} =&\E\!\left[\sum_{m\in [M]}p_{\ell}(m)\exp\!\left(\frac{\hat v_{\ell,m}^+-\hat v_{\ell,m}^-}{H}\right)
\middle|\cF_\ell\right]
\\
\leq&1+\sum_{m\in [M]}p_{\ell}(m)\bracket{\frac4H\left[(\hat v_{\ell,m}^+-v_{\ell,m})_+
+v_{\ell,m}-\E[\hat v_{\ell,m}^-\mid\cF_\ell]\right]}\\
\leq & \exp\bracket{\sum_{m\in [M]}p_{\ell}(m)\bracket{\frac4H\left[(\hat v_{\ell,m}^+-v_{\ell,m})_+
+v_{\ell,m}-\E[\hat v_{\ell,m}^-\mid\cF_\ell]\right]}}
\end{align*}
As a result, divide both sides by the right-hand side.
\begin{equation}
\label{equ:martingale_exponential_smaller_than_1}
    \E\sqbk{\frac{W_{\ell+1}}{W_{\ell}}\exp\!\left\{-\frac4H
\sum_{m=1}^M p_\ell(m)
\left[(\hat v_{\ell,m}^+-v_{\ell,m})_+
+v_{\ell,m}-\E[\hat v_{\ell,m}^-\mid\cF_\ell]\right]
\right\}\mid \cF_{\ell}}\leq 1
\end{equation}
Iterating conditional expectations over phases yields:
\begin{align*}
    &\E\!\left[W_{L+1}\exp\!\left\{-\frac4H
    \sum_{\ell=1}^L\sum_{m=1}^M p_\ell(m)
    \left[(\hat v_{\ell,m}^+-v_{\ell,m})_+
    +v_{\ell,m}-\E[\hat v_{\ell,m}^-\mid\cF_\ell]\right]
    \right\}\right] \leq 1.
\end{align*}
Since $
W_{L+1}\geq \frac{1}{M}\exp\bracket{\frac{1}{H}\sum_{\ell=1}^{L}
(\hat v_{\ell,m}^+-\hat v_{\ell,m}^-)}$ 
for every $m$,
Markov's inequality gives, with probability at least $1-\delta/64$,
\begin{equation}
    \label{eq:cumulative-certificate-bound}
    \begin{aligned}
        \max_{m\in[M]}\sum_{\ell=1}^L(\hat v_{\ell,m}^+-\hat v_{\ell,m}^-)
\leq&4\sum_{\ell=1}^L\sum_{m=1}^M p_\ell(m)
\left[(\hat v_{\ell,m}^+-v_{\ell,m})_{+}
+v_{\ell,m}-\E[\hat v_{\ell,m}^-\mid\cF_\ell]\right]\\
&\qquad +H\log\frac{64M}{\delta}.
    \end{aligned}
\end{equation}
On the optimism event, the positive part can be removed. Combining
Lemma~\ref{lemma:optimistic_average_counterfactual} with
Lemma~\ref{lemma:app:pessimistic_average_counterfactual} then gives
\begin{align}
\frac1L\max_{m\in[M]}\sum_{\ell=1}^L
(\hat v_{\ell,m}^+-\hat v_{\ell,m}^-)
&\lesssim\sqrt{\frac{SAH^3}{KL}\log_2^3(2KL)\log\frac M{\delta'}}
+\frac{SAH^2}{K}\log\frac1{\delta''}
\nonumber\\
&\quad+\frac HL\log\frac{64M}{\delta}
+\frac H{KL}\log\frac{64}{\delta}.
\label{eq:average-cumulative-certificate}
\end{align}
We account for these events together below.

\paragraph{Correctness of certification.}
Recall the definition of $ \mathcal{F}_{k-1}^{\ell}$. By definition of 
$s_1^{\ell,k}\sim\mu$, we have that 
\[
\E\sqbk{\frac{V_{1}^{*, m}(s_{1}^{\ell, k}) - V_1^{\pi^{\ell,k,m},m}(s_{1}^{\ell, k})}{H}\mid \cF_{k-1}^{\ell}} = \frac{V_0^{*,m}-V_0^{\pi^{\ell,k,m},m}}{H}.
\]
By the multiplicative concentration argument we derived in~\cref{equ:conditional_bound}, with probability at least $1-\delta/64$,
\begin{align}
V_0^{*,m}-V_0^{\overline\pi^m,m}
&\leq\frac2{KL}\sum_{\ell=1}^L\sum_{k=1}^K
\bracket{V_1^{*,m}(s_1^{\ell,k})
-V_1^{\pi^{\ell,k,m},m}(s_1^{\ell,k})}
+\frac{2H}{KL}\log\frac{64M}{\delta}\nonumber\\
&\leq\frac2L\sum_{\ell=1}^L(\hat v_{\ell,m}^+-v_{\ell,m})
+\frac{2H}{KL}\log\frac{64M}{\delta}.
\label{eq:initial-state-certificate-transfer}
\end{align}
Here the second inequality is on the optimism event.

By Lemma~\ref{lemma:pessimism} and paying an additional failure probability
$\delta/64$, we have that 
\begin{align*}
V_0^{*,m}-V_0^{\overline\pi^m,m}
&\leq\frac2L\sum_{\ell=1}^L(\hat v_{\ell,m}^+-\hat v_{\ell,m}^-)
+\frac{2H}{L}\log\frac{64M}{\delta}
+\frac{2H}{KL}\log\frac{64M}{\delta}.
\end{align*}
Combining this with~\eqref{eq:average-cumulative-certificate} yields
\begin{align}
\max_{m\in[M]}\bracket{V_0^{*,m}-V_0^{\overline\pi^m,m}}
&\lesssim\sqrt{\frac{SAH^3}{KL}\log_2^3(2KL)\log\frac M{\delta'}}
+\frac{SAH^2}{K}\log\frac1{\delta''}\nonumber\\
&\quad+\frac HL\log\frac{64M}{\delta}
+\frac H{KL}\log\frac{64M}{\delta} = O(\epsilon).
\label{eq:combined-policy-error}
\end{align}
The proof of the optimality is finished.

\paragraph{Failure probability.}
The weighted optimistic bound and its underlying optimism event
together cost $(4SAHKLM+5)\delta'+\delta/64$.
The conditional mean evaluation bound costs $SAHL\delta''$.
The weight potential, cumulative evaluation overestimation, and
initial-state concentration each cost $\delta/64$.
Finally, Lemma~\ref{lemma:replay_count_bound} costs $2SAHL\delta''$.
On its success event, all retained quotas in the auxiliary run are
filled, and the coupling ensures that its entire history and output
agree with Algorithm~\ref{alg:main}. We intersect with this event only
after deriving the preceding conditional moment bounds.
Counting optimism only once, the total failure probability is at most
\begin{equation}
(4SAHKLM+5)\delta'+3SAHL\delta''+\frac{\delta}{16}.
\label{eq:total-failure-probability}
\end{equation}
For the algorithm's choices
$\delta'=\delta/(64SAHKLM)$ and $\delta''=\delta/(64SAHL)$,
the local confidence ranges hold. Since $SAHKLM\geq4$,
\eqref{eq:total-failure-probability} is at most $49\delta/256<\delta$.
The tail levels $\delta/64$ used above do not change either local
confidence parameter.

\paragraph{Total episode complexity.}
Each of the $L$ phases uses $K$ learning episodes and $8K$ replay
episodes. The total is therefore
\[
9KL
=O\bracket{\frac{SAH^3\log M}{\epsilon^2}
\log^4\bracket{\frac{SAH\log M}{\min\set{\epsilon,1}\delta}}}.
\]
The ceilings are absorbed by the bounds above, and $C$ is universal.
With probability at least $1-\delta$, all returned policies are
$\epsilon$-optimal for their respective rewards. This completes the proof.

\section{Supplementary Proofs in Section~\ref{sec:minimax}}\label{app:regret_analysis}

We use the auxiliary run from Appendix~\ref{app:minimax_additional_proof}.
Index all learning episodes consecutively by $k\in[KL]$: a superscript
$k$ abbreviates $(\ell,j)$ when $k=(\ell-1)K+j$ with $j\in[K]$.
Thus $K$ still denotes the number of learning episodes per phase,
and $\sum_{k,h}$ below ranges over $k\in[KL]$ and $h\in[H]$.
The empirical rows $\widehat P_{h,s,a}^k$ and completed-batch sizes
$N_h^k(s,a)$ belong to the persistent learner; replay samples do not
update them.

\subsection{Proof of Lemma~\ref{lemma:reward_switching}}
\label{app:sec:proof_of_reward_switching}
Let $e_s\in\R^S$ denote the $s$-th coordinate vector. Write
$V_h^k:=\overline V_h^{k,m^k}$, $Q_h^k:=\overline Q_h^{k,m^k}$,
$r_h^k:=r_h^{m^k}$, and $b_h^k:=b_h^{k,m^k}$.
Throughout this proof, $0<\delta'\leq\min\{e^{-1},(2KL)^{-1}\}$.
Since the policy is greedy with respect to $Q_h^k$,
\begin{align*}
V_h^k(s_h^k)
&=Q_h^k(s_h^k,a_h^k)\leq r_h^k(s_h^k,a_h^k)
+\langle\widehat P_{h,s_h^k,a_h^k}^k,V_{h+1}^k\rangle
+b_h^k(s_h^k,a_h^k).
\end{align*}
Adding and subtracting the true transition and telescoping gives
\begin{align}
&\sum_{k=1}^{KL}\bigl(V_1^k(s_1^k)-V_1^{\pi^k,m^k}(s_1^k)\bigr)\nonumber\\
&\leq \sum_{k,h}b_h^k(s_h^k,a_h^k)
+\sum_{k,h}\langle\widehat P_{h,s_h^k,a_h^k}^k-P_{h,s_h^k,a_h^k},V_{h+1}^k\rangle\nonumber\\
&\quad+\sum_{k,h}\langle P_{h,s_h^k,a_h^k}-e_{s_{h+1}^k},V_{h+1}^k\rangle
+\sum_{k=1}^{KL}\left(\sum_{h=1}^H r_h^k(s_h^k,a_h^k)-V_1^{\pi^k,m^k}(s_1^k)\right).
\label{eq:reward-switching-decomposition}
\end{align}
The last two terms are martingale sums. Hoeffding--Azuma bounds them by
$2\sqrt{KLH^3\log(1/\delta')}$ and
$2H\sqrt{KL\log(1/\delta')}$, respectively, each with failure probability
at most $\delta'$. For the latter sum, condition on the history, $m^k$,
and $s_1^k$ before the episode trajectory is drawn.
The forward histories also contain all completed auxiliary replay
outcomes and weight updates. The martingale sums remain unchanged
during replay, and the next learning transition still has conditional
law $P_{h,s_h^k,a_h^k}$.

If $KL$ satisfies~\eqref{eq:learning-large-K},
Lemmas~\ref{lemma:bd_bonus} and~\ref{lemma:diva} bound the first two
terms on the same event, with failure probability at most $3\delta'$.
Consequently, with probability at least $1-5\delta'$,
\[
\sum_{k=1}^{KL}\bigl(V_1^k(s_1^k)-V_1^{\pi^k,m^k}(s_1^k)\bigr)
\lesssim
\sqrt{SAH^3KL\log_2^3(2KL)\log\frac{M}{\delta'}}.
\]
If $KL$ is below the sufficiently large universal threshold
in~\eqref{eq:learning-large-K}, the deterministic bound on this
surplus is $HKL$, which is itself at most a universal constant times
the displayed square-root bound. Thus, for every $KL\geq1$,
\[
\sum_{k=1}^{KL}\bigl(V_1^k(s_1^k)-V_1^{\pi^k,m^k}(s_1^k)\bigr)
\lesssim
\min\left\{
\sqrt{SAH^3KL\log_2^3(2KL)\log\frac{M}{\delta'}},\ HKL
\right\}
\]
with probability at least $1-5\delta'$. The bound $HKL$ is used only
for the value surplus, not for the sum of untruncated bonuses.

\subsection{Profile-Based Concentration Lemma for Multi Reward MDPs}
Following~\cite{zhang2024settling}, couple the entire learning process
from initialization to an independent stream of next-state samples
for each $(h,s,a)$: the $i$-th learning visit to this row uses the
$i$-th sample in its stream. These streams are independent across rows
and stages and independent of all replay streams. We establish the
profile event globally, without conditioning on an intermediate phase's
history; batches and unfinished batch histograms persist across phases.
Let $\widehat P_{h,s,a}^{(j)}$ be the empirical transition of the
$j$-th potential batch, defined even if it is never completed.
Batch $1$ contains the first sample; for
$j\geq2$, batch $j$ contains samples $2^{j-2}+1,\ldots,2^{j-1}$.
Thus the first two batches both have size $1$ but are distinct.
Let $\widehat P_{h,s,a}^{(0)}$ be the fixed initial transition row.

Define the realized profile $\widehat{\mathcal I}_{h,s,a}^k$ to be
$0$ if no batch has been completed before episode $k$, and otherwise
the index of the most recently completed batch. In particular,
\[
\widehat P_{h,s,a}^k
=\widehat P_{h,s,a}^{(\widehat{\mathcal I}_{h,s,a}^k)}.
\]
The realized profile belongs to
\[
\mathcal C:=\left\{
\mathcal I=(\mathcal I^1,\ldots,\mathcal I^{KL}):
\mathcal I^1\leq\cdots\leq\mathcal I^{KL},\quad
\mathcal I^k\in\{0,\ldots,1+\lfloor\log_2(KL)\rfloor\}^{SAH}
\right\}.
\]
Each coordinate can be specified by the episode at which it first
reaches each positive batch index, using $KL+1$ if it never does.
Consequently,
\[
|\mathcal C|\leq
(KL+1)^{SAH(1+\lfloor\log_2(KL)\rfloor)}.
\]

\begin{lemma}\label{lemma:context_con}
Suppose $KL\geq\max\{2,SAH\}$. For every deterministic
$\mathcal I\in\mathcal C$, let $\mathcal X_{h,\mathcal I}\subseteq[0,H]^S$
contain $0$ and have at most $Z\geq2$ labeled vectors. Each labeled
vector must be a deterministic function of $\mathcal I$, the known
rewards, and the transition streams at stages strictly greater than
$h$. With probability at least $1-\delta'$, the bound
\begin{align}
&\sum_{k,h}\langle\widehat P_{h,s_h^k,a_h^k}^k-P_{h,s_h^k,a_h^k},X_h^k\rangle
\nonumber\\
&\lesssim
\sqrt{\log_2(2KL)\sum_{k,h}\mathbb V(P_{h,s_h^k,a_h^k},X_h^k)
\left(SAH\log_2^2(2KL)\log(2Z)+\log\frac1{\delta'}\right)}
\nonumber\\
&\quad+H\log_2(2KL)
\left(SAH\log_2^2(2KL)\log(2Z)+\log\frac1{\delta'}\right)
\label{eq:profile-concentration}
\end{align}
holds uniformly for all choices
$X_h^k\in\mathcal X_{h,\widehat{\mathcal I}}$.
\end{lemma}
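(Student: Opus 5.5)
The plan is to fix a deterministic profile $\mathcal I\in\mathcal C$, prove a concentration bound for that fixed profile, and then take a union bound over $|\mathcal C|\le (KL+1)^{SAH(1+\lfloor\log_2(KL)\rfloor)}$ profiles and over the $Z$ labeled vectors in each $\mathcal X_{h,\mathcal I}$. The cost of this union bound is $\log|\mathcal C|+\log Z\lesssim SAH\log_2^2(2KL)\log(2Z)$, since $\log(KL+1)\lesssim\log_2(2KL)$. This is exactly the combinatorial term inside the parentheses of \eqref{eq:profile-concentration}. On the event where the realized profile equals $\mathcal I$, the empirical rows $\widehat P^k_{h,s,a}$ coincide with the batch rows $\widehat P^{(\mathcal I^k_{h,s,a})}_{h,s,a}$. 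These depend only on the stage-$h$ stream of row $(h,s,a)$, so it suffices to control the fixed-batch deviations and then sum.

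\textbf{Step 1 (per-batch concentration).} Fix $\mathcal I$, a stage $h$, a row $(s,a)$, a batch index $j\ge1$, and a labeled vector $X\in\mathcal X_{h,\mathcal I}$. By hypothesis $X$ is measurable with respect to the streams at stages $>h$, so it is independent of the stage-$h$ stream. The batch $\widehat P^{(j)}_{h,s,a}$ is an average of $n_j$ i.i.d.\ draws from $P_{h,s,a}$. Conditioning on $X$ and applying Bennett's inequality (Lemma~\ref{bennet}) to $X/H$ with confidence $\delta_0$ gives
\[
|\langle\widehat P^{(j)}_{h,s,a}-P_{h,s,a},X\rangle|\le\sqrt{2\mathbb V(P_{h,s,a},X)\log(2/\delta_0)/n_j}+H\log(2/\delta_0)/n_j .
\]
I would take $\log(1/\delta_0)\asymp SAH\log_2^2(2KL)\log(2Z)+\log(1/\delta')$. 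A union bound over all $(h,s,a,j,X)$ and all $\mathcal I$ then keeps the total failure probability at most $\delta'$. The batch $j=0$ is the fixed initial row; for it I would use the trivial bound $|\langle\cdot,X\rangle|\le H$.

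\textbf{Step 2 (summation).} On this event, for every $(k,h)$ with realized profile $\mathcal I$,
\[
\langle\widehat P^k-P,X_h^k\rangle\le\sqrt{\mathbb V(P_{h,s_h^k,a_h^k},X_h^k)\,\iota_0/N_h^k}+H\iota_0/N_h^k ,
\]
where $\iota_0=\log(2/\delta_0)$ and $N_h^k$ is the relevant batch size. Here I use $\max\{N,1\}$ together with the trivial bound $H\le H\iota_0/\max\{N,1\}$ when no batch has been completed. Summing over $(k,h)$ and applying Cauchy--Schwarz gives
\[
\sqrt{\iota_0\sum_{k,h}\mathbb V\cdot\sum_{k,h}1/\max\{N_h^k,1\}}+H\iota_0\sum_{k,h}1/\max\{N_h^k,1\}.
\]
Lemma~\ref{lemma:doubling} bounds $\sum_{k,h}1/\max\{N_h^k,1\}\le 2SAH\log_2(2KL)$. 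This would produce an extra $SAH$ factor, which the target bound does not have. To avoid it, I would group the terms by batch instead of applying Cauchy--Schwarz naively. Within each row and batch, the next batch is twice as long, so the number of visits using batch $j$ is at most about $2n_j$. Hence each (row, batch) pair contributes $O(1)$ to $\sum 1/N$ and $O(\log_2(2KL))$ after summing over batches. The $SAH$ factor from the number of rows is then exactly what $\iota_0$ already pays in the combinatorial term. Carrying out this accounting yields the form $\sqrt{\log_2(2KL)\sum\mathbb V\cdot\iota_0}+H\log_2(2KL)\iota_0$.

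\textbf{Main obstacle.} The delicate point is making this grouping rigorous without losing $SAH$ twice. One factor of $SAH$ must sit inside $\iota_0$ as the union-bound price, and the sum $\sum 1/N$ must contribute only $\log_2(2KL)$. I would try to fix this by applying the Bennett bound at the level of (row, batch) blocks. Each block's deviation is multiplied by the number of visits $v_{j}\le 2n_j$ that use it, giving $\sqrt{v_j^2\mathbb V\iota_0/n_j}\le\sqrt{2v_j\mathbb V\iota_0}$. I would then use Cauchy--Schwarz only across the at most $\log_2(2KL)$ batches per row. A remaining nuisance is that $\mathbb V(P,X_h^k)$ varies with $k$ within a block while the block deviation uses a single $X$. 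This is handled because $X_h^k$ ranges over the fixed finite labeled set, so the per-$X$ bound can be applied visit by visit.
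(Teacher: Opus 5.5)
\paragraph{Gap: per-row concentration leaves an extra factor of $SAH$.} Your Step~1 event is valid. The problem is that concentrating row by row cannot reach the stated bound, and the fix in your ``main obstacle'' paragraph does not remove the extra factor. With a separate Bennett bound for each $(h,s,a,j,X)$, every row contributes its own square root and its own linear term, however you group visits. Grouping by batch gives each row at best $\sqrt{\log_2(2KL)\,\iota_0 V_{h,s,a}}+H\iota_0\log_2(2KL)$, where $V_{h,s,a}$ is that row's share of the total variance. Summing over the $SAH$ rows then costs
\[
\sqrt{SAH\log_2(2KL)\,\iota_0\textstyle\sum_{k,h}\mathbb V(P_{h,s_h^k,a_h^k},X_h^k)}+SAH\cdot H\log_2(2KL)\,\iota_0 .
\]
With your own choice $\iota_0\asymp SAH\log_2^2(2KL)\log(2Z)+\log(1/\delta')$, the product $SAH\cdot\iota_0$ contains $SAH\log(1/\delta')$ and $(SAH)^2\log_2^2(2KL)\log(2Z)$. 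Neither is dominated by the lemma's parenthetical. You cannot shrink $\iota_0$ either, because the labels depend on the whole profile, so the union over $\mathcal C$ is unavoidable. Your claim that the row count is ``exactly what $\iota_0$ already pays'' conflates two factors of $SAH$ that actually multiply. This extra $SAH$ is precisely the loss the lemma exists to avoid.

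\paragraph{Missing idea: one martingale across all rows.} You need to concentrate the cross-row sum as a single martingale instead of row by row.
\begin{enumerate}
\item Fix $\mathcal I$, a batch index $j\geq1$, and a \emph{joint} assignment $(X_{h,s,a})_{(h,s,a)}$ with $X_{h,s,a}\in\mathcal X_{h,\mathcal I}$. Reveal the batch-$j$ samples stage by stage from $h=H$ down to $h=1$. The hypothesis that labels depend only on streams at stages strictly greater than $h$ makes every stage-$h$ label predictable.
\item Freedman's inequality with variance peeling then bounds $\sum_{h,s,a}\langle\widehat P^{(j)}_{h,s,a}-P_{h,s,a},X_{h,s,a}\rangle$ by $\sqrt{\iota\sum_{h,s,a}\mathbb V(P_{h,s,a},X_{h,s,a})/n_j}+H\iota/n_j$, where $n_j=\max\{1,2^{j-2}\}$.
\item The union bound now runs over $\mathcal C$, the batches, and the $Z^{SAH}$ joint assignments. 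This is where $SAH\log_2^2(2KL)\log(2Z)$ enters, rather than $\log Z$.
\item Decompose the realized error as $\sum_j\sum_{i\le 2^{j-1}}\sum_{h,s,a}\langle\widehat P^{(j)}_{h,s,a}-P_{h,s,a},\widetilde X^{j,i}_{h,s,a}\rangle$. Here $\widetilde X^{j,i}_{h,s,a}$ is the label used on the $i$-th visit to row $(h,s,a)$ under batch $j$, padded with $0$ when that visit does not occur. This padding is why the lemma assumes $0\in\mathcal X_{h,\mathcal I}$.
\item Each $(j,i)$ term is one instance of the uniform event and covers all rows at once. Moreover, $\sum_{j,i}1/n_j=\sum_j 2^{j-1}/\max\{1,2^{j-2}\}\leq2\log_2(2KL)$ carries no $SAH$ factor.
\item Cauchy--Schwarz over $(j,i)$ then gives exactly the stated bound. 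Visits made before any batch is completed contribute at most $SAH^2$, which is absorbed into the linear term.
\end{enumerate}
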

\begin{proof}
First fix a deterministic profile $\mathcal I$, a batch index $j\geq1$,
and one deterministic candidate label for each $(h,s,a)$. Write
$X_{h,s,a}$ for the resulting vector. Reveal the streams backward in
$h$, starting with $h=H$. Before revealing the batch-$j$ samples at
stage $h$, every $X_{h,s,a}$ is measurable because all streams at
stages greater than $h$ have already been revealed. Each centered
sample contributes
\[
\frac{X_{h,s,a}(s')-\langle P_{h,s,a},X_{h,s,a}\rangle}
{\max\{1,2^{j-2}\}},
\]
which has conditional mean zero, absolute value at most
$H/\max\{1,2^{j-2}\}$, and conditional variance
$\mathbb V(P_{h,s,a},X_{h,s,a})/\max\{1,2^{j-2}\}^2$.
After each stage, reveal its remaining samples with zero increments
before moving to the preceding stage. This constructs a martingale
with total predictable variance
$\sum_{h,s,a}\mathbb V(P_{h,s,a},X_{h,s,a})/\max\{1,2^{j-2}\}$.
Freedman's inequality (Lemma~\ref{lemma:self-norm}), with variance
peeling, therefore gives the corresponding Bernstein bound for
$\sum_{h,s,a}\langle\widehat P_{h,s,a}^{(j)}-P_{h,s,a},X_{h,s,a}\rangle$.

There are at most $Z^{SAH}$ deterministic label assignments for each
profile and at most $1+\lfloor\log_2(KL)\rfloor$ batches. Allocate failure
probability $\delta'/[(1+\lfloor\log_2(KL)\rfloor)|\mathcal C|Z^{SAH}]$
to each such choice. The logarithmic cost of this union bound and the
variance peeling is at most a universal constant times
\[
SAH\log_2^2(2KL)\log(2Z)+\log\frac1{\delta'};
\]
here $KL\geq SAH$ also bounds the peeling factor. Thus, simultaneously
for every profile, batch, and candidate-label assignment,
\begin{align}
&\sum_{h,s,a}\langle\widehat P_{h,s,a}^{(j)}-P_{h,s,a},X_{h,s,a}\rangle
\nonumber\\
&\lesssim
\sqrt{\frac{\sum_{h,s,a}\mathbb V(P_{h,s,a},X_{h,s,a})}
{\max\{1,2^{j-2}\}}
\left(SAH\log_2^2(2KL)\log(2Z)+\log\frac1{\delta'}\right)}
\nonumber\\
&\quad+\frac{H}{\max\{1,2^{j-2}\}}
\left(SAH\log_2^2(2KL)\log(2Z)+\log\frac1{\delta'}\right).
\label{eq:xx1-aux-123}
\end{align}
Only after establishing this uniform event do we substitute the
realized profile $\widehat{\mathcal I}$.

A batch-$j$ model is used on at most $2^{j-1}$ subsequent visits to
its row. For each $i\leq2^{j-1}$, let
$\widetilde X_{h,s,a}^{j,i}$ be the vector used on the $i$-th such
visit, padding with $0$ if it does not occur. Although these choices
are random, they are covered by the uniform event above. Visits
with no completed batch contribute at most $SAH^2$ in total.
The remaining error decomposes as
\[
\sum_{j=1}^{1+\lfloor\log_2(KL)\rfloor}
\sum_{i=1}^{2^{j-1}}\sum_{h,s,a}
\langle\widehat P_{h,s,a}^{(j)}-P_{h,s,a},
\widetilde X_{h,s,a}^{j,i}\rangle.
\]
Apply~\eqref{eq:xx1-aux-123} to each summand and use Cauchy--Schwarz,
together with
\[
\sum_{j=1}^{1+\lfloor\log_2(KL)\rfloor}
\frac{2^{j-1}}{\max\{1,2^{j-2}\}}
\leq 2(1+\lfloor\log_2(KL)\rfloor)\leq2\log_2(2KL).
\]
Every observed visit appears once in the resulting variance sum.
The initial contribution $SAH^2$ is absorbed by the linear term in
\eqref{eq:profile-concentration}, proving the claim.
\end{proof}

\paragraph{Auxiliary optimistic values for fixed profiles.}
For a deterministic $\mathcal I\in\mathcal C$, run the backward
updates of Algorithm~\ref{alg:mvp} with
$\widehat P_{h,s,a}^k$ replaced by
$\widehat P_{h,s,a}^{(\mathcal I_{h,s,a}^k)}$ and with the corresponding
batch size ($0$ for index $0$, $1$ for index $1$, and $2^{j-2}$ for
index $j\geq2$). In particular, use the same bonus formula
\eqref{eq:update1}, including its $\max\{N,1\}$ denominators, and set
\begin{align}
\overline V_{H+1}^{k,m,\mathcal I}(s)&=0,\label{eq:term}\\
\overline V_h^{k,m,\mathcal I}(s)
&=\max_{a\in\cA}\min\left\{
 r_h^m(s,a)
 +\langle\widehat P_{h,s,a}^{(\mathcal I_{h,s,a}^k)},
 \overline V_{h+1}^{k,m,\mathcal I}\rangle
 +b_h^{k,m,\mathcal I}(s,a),\ H\right\}.
\label{eq:backup}
\end{align}
The family
\[
\mathcal X_{h,\mathcal I}
=\{\overline V_{h+1}^{k,m,\mathcal I}:k\in[KL],\ m\in[M]\}
\cup\{0\}
\]
has cardinality at most $Z=KLM+1$ and satisfies the
measurability assumption of Lemma~\ref{lemma:context_con}.
For the realized profile, backward induction gives
$\overline V_h^{k,m,\widehat{\mathcal I}}=\overline V_h^{k,m}$.
For fixed $\mathcal I$, these reward-specific backups do not depend on
$p_\ell$ or on replay outcomes. The realized profile and candidate labels
may depend on past auxiliary evaluations, but are covered by the same
uniform event. Hence adaptive reward weights require no additional
union bound.
The same properties hold for the family of vectors
$(\overline V_{h+1}^{k,m,\mathcal I})^2/H$, together with $0$.

\subsection{Statement and Proof of Lemma~\ref{lemma:bd_bonus}}
\begin{lemma}\label{lemma:bd_bonus}
Suppose $0<\delta'\leq\min\{e^{-1},(2KL)^{-1}\}$ and
\begin{equation}
KL\gtrsim SAH\log_2^3(2KL)\log\frac{M}{\delta'},
\label{eq:learning-large-K}
\end{equation}
with a sufficiently large universal implied constant.
With probability at least $1-3\delta'$,
\[
\sum_{k,h}b_h^k(s_h^k,a_h^k)
\lesssim\sqrt{SAH^3KL\log_2(2KL)\log\frac1{\delta'}}.
\]
On the same event,
$\sum_{k,h}\mathbb V(P_{h,s_h^k,a_h^k},V_{h+1}^k)\lesssim KLH^2$
and
$\sum_{k,h}\mathbb V(\widehat P_{h,s_h^k,a_h^k}^k,V_{h+1}^k)\lesssim KLH^2$.
\end{lemma}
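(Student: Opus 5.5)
We follow the bonus analysis of \citet{zhang2024settling}, now run over the $KL$ learning episodes with the reward-switched value vectors $V_{h+1}^k=\overline V_{h+1}^{k,m^k}$. The bonus sum splits into a variance part and a lower-order part,
\[
\sum_{k,h} b_h^k(s_h^k,a_h^k)\lesssim
\sqrt{\sum_{k,h}\frac{\log(1/\delta')}{\max\{N_h^k,1\}}}\,
\sqrt{\sum_{k,h}\mathbb V(\widehat P_{h,s_h^k,a_h^k}^k,V_{h+1}^k)}
+H\log\frac1{\delta'}\sum_{k,h}\frac{1}{\max\{N_h^k,1\}},
\]
by Cauchy--Schwarz. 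Lemma~\ref{lemma:doubling}, applied over $KL$ episodes, bounds $\sum_{k,h}1/\max\{N_h^k,1\}\le 2SAH\log_2(2KL)$. The second term is therefore $O(SAH^2\log_2(2KL)\log(1/\delta'))$. Under \eqref{eq:learning-large-K} this is dominated by $\sqrt{SAH^3KL\log_2(2KL)\log(1/\delta')}$. It then remains to show that both variance sums are $O(KLH^2)$.

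\textbf{Step 1: true variances.} We use the law of total variance along the executed trajectory. Write $\mathbb V(P_{h,s_h^k,a_h^k},V_{h+1}^k)=\langle P,(V_{h+1}^k)^2\rangle-\langle P,V_{h+1}^k\rangle^2$. Adding and subtracting $V_{h+1}^k(s_{h+1}^k)^2$ and $V_h^k(s_h^k)^2$ and telescoping gives three pieces:
\begin{itemize}
\item a martingale $\sum\langle P-e_{s_{h+1}^k},(V_{h+1}^k)^2\rangle$, bounded by Freedman with variance at most $4H^2\sum\mathbb V(P,V_{h+1}^k)$;
\item the term $\sum_{k,h}\bigl(V_h^k(s_h^k)^2-\langle P,V_{h+1}^k\rangle^2\bigr)\le 2H\sum_{k,h}\bigl(V_h^k(s_h^k)-\langle P,V_{h+1}^k\rangle\bigr)_+$;
\item the terminal contribution, which is at most $KLH^2$.
\end{itemize}
Using the greedy backup, $V_h^k(s_h^k)-\langle P,V_{h+1}^k\rangle\le 1+b_h^k+\langle\widehat P^k-P,V_{h+1}^k\rangle$. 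This yields
\[
\textstyle\sum\mathbb V(P,V^k_{h+1})\lesssim KLH^2+H\sum b_h^k+H\cdot\bigl|\sum\langle\widehat P^k-P,V_{h+1}^k\rangle\bigr|+\text{Freedman term}.
\]
We bound the model-error term by Lemma~\ref{lemma:context_con} with $Z=KLM+1$. This family is admissible by the auxiliary-profile construction, and it gives $\sqrt{\log_2(2KL)\sum\mathbb V(P,V^k_{h+1})\,\Lambda}+H\log_2(2KL)\Lambda$, where $\Lambda=SAH\log_2^2(2KL)\log(2KLM)+\log(1/\delta')$.

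\textbf{Step 2: empirical variances.} We compare $\mathbb V(\widehat P^k,V)$ to $\mathbb V(P,V)$ via
\[
\mathbb V(\widehat P,V)-\mathbb V(P,V)\le\langle\widehat P-P,V^2\rangle+2H\,|\langle\widehat P-P,V\rangle|.
\]
We control these with Lemma~\ref{lemma:context_con} applied to the family $(\overline V^{k,m,\mathcal I}_{h+1})^2/H$, which the paper notes is admissible, and to $\overline V^{k,m,\mathcal I}_{h+1}$ itself. This gives $\sum\mathbb V(\widehat P,V)\le 2\sum\mathbb V(P,V)+O(H^2\log_2(2KL)\Lambda)$ after AM--GM.

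\textbf{Step 3: closing the loop.} Let $\mathcal V:=\sum\mathbb V(P,V_{h+1}^k)$ and $\mathcal B:=\sum b_h^k$. Combining the steps gives the coupled system
\[
\mathcal V\lesssim KLH^2+H\mathcal B+H\sqrt{\log_2(2KL)\,\mathcal V\Lambda}+H^2\log_2(2KL)\Lambda,
\]
\[
\mathcal B\lesssim\sqrt{SAH\log_2(2KL)\log(1/\delta')\,\mathcal V}+SAH^2\log_2(2KL)\log(1/\delta').
\]
Substituting and solving the quadratic in $\sqrt{\mathcal V}$ gives
\[
\mathcal V\lesssim KLH^2+H^2\log_2(2KL)\Lambda+SAH^3\log_2(2KL)\log(1/\delta').
\]
Under \eqref{eq:learning-large-K}, using $\log(2KLM)\lesssim\log(M/\delta')$ because $\delta'\le(2KL)^{-1}$, every extra term is at most $KLH^2$. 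So $\mathcal V\lesssim KLH^2$, and Step 2 gives the same for the empirical sum. Plugging back into the first display yields the claimed bonus bound. The failure probability is at most $3\delta'$: one $\delta'$ each for the profile event on $V$, the profile event on $V^2/H$, and the Freedman martingale.

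\textbf{Main obstacle.} The delicate point is Step 3. The burn-in-free form requires every lower-order term to be absorbed into $KLH^2$ by \eqref{eq:learning-large-K} alone, with no $\epsilon$-dependent burn-in. In particular, the $\log M$ from the union over $M$ rewards must enter only through $\Lambda$ inside a lower-order term, and this is exactly what the threshold with its $\log(M/\delta')$ factor is designed to absorb. I would verify the absorption carefully, tracking the $\log_2^3(2KL)$ powers, before anything else.
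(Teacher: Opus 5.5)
Your overall plan matches the paper's: Cauchy--Schwarz plus Lemma~\ref{lemma:doubling} for the bonus, a telescoping bound on the variance sums, Lemma~\ref{lemma:context_con} on the families $\overline V^{k,m,\mathcal I}_{h+1}$ and $(\overline V^{k,m,\mathcal I}_{h+1})^2/H$, a martingale bound for $\sum_{k,h}\langle P-e_{s_{h+1}^k},(V_{h+1}^k)^2\rangle$, and Young's inequality with absorption by \eqref{eq:learning-large-K}. Using Freedman where the paper uses the crude Azuma bound $2H^2\sqrt{KLH\log(1/\delta')}\le KLH^2$ is harmless. However, two steps do not go through as written.

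\textbf{Step 1.} After $V_h^k(s_h^k)^2-\langle P,V_{h+1}^k\rangle^2\le 2H\bigl(V_h^k(s_h^k)-\langle P,V_{h+1}^k\rangle\bigr)_+$, the model-error term you must control is $2H\sum_{k,h}\max\{\langle\widehat P^k_{h,s_h^k,a_h^k}-P_{h,s_h^k,a_h^k},V_{h+1}^k\rangle,0\}$. It is not $H\,\bigl|\sum_{k,h}\langle\widehat P^k-P,V_{h+1}^k\rangle\bigr|$. A sum of positive parts can be much larger than the absolute value of the signed sum, because cancellations are lost. The positive part also cannot be dropped: for $x,y\in[0,H]$ with $x<y$, the inequality $x^2-y^2\le 2H(x-y)$ fails. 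The fix is the one the paper uses. Since $0\in\mathcal X_{h,\mathcal I}$ and Lemma~\ref{lemma:context_con} is uniform over adaptive selections from the family, take $X_h^k=V_{h+1}^k$ when the inner product is positive and $X_h^k=0$ otherwise. Because $\mathbb V(P,X_h^k)\le\mathbb V(P,V_{h+1}^k)$, the same Bernstein-type bound then holds for the sum of positive parts.

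\textbf{Step 2.} Your comparison needs $\sum_{k,h}|\langle\widehat P^k-P,V_{h+1}^k\rangle|$. In particular it needs the lower deviation $\sum_{k,h}\max\{\langle P-\widehat P^k,V_{h+1}^k\rangle,0\}$, since this is the direction in which $-(\langle\widehat P^k,V\rangle^2-\langle P,V\rangle^2)$ is positive. Lemma~\ref{lemma:context_con} as stated is a one-sided upper bound over nonnegative vectors, so applying it to $\{\overline V^{k,m,\mathcal I}_{h+1}\}\cup\{0\}$ does not provide this. You could repair it by adding $H\mathbf 1-\overline V^{k,m,\mathcal I}_{h+1}$ to the same family. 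Both rows are probability vectors, so this flips the sign and preserves the variance, and merging keeps the $3\delta'$ budget. Alternatively, you could use the two-sidedness of Freedman inside that lemma's proof.

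The paper sidesteps the comparison entirely. It telescopes $\widehat{\mathcal V}:=\sum_{k,h}\mathbb V(\widehat P^k_{h,s_h^k,a_h^k},V_{h+1}^k)$ directly against $\widehat P^k$, using $V_h^k(s_h^k)-\langle\widehat P^k,V_{h+1}^k\rangle\le 1+b_h^k(s_h^k,a_h^k)$, which gives
\[
\widehat{\mathcal V}\le\sum_{k,h}\langle\widehat P^k-P,(V_{h+1}^k)^2\rangle+\sum_{k,h}\langle P-e_{s_{h+1}^k},(V_{h+1}^k)^2\rangle+2H\mathcal B+2KLH^2 .
\]
This needs only the upper tail on the $V^2/H$ family. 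The closing step is then solved jointly for $\max\{\mathcal V,\widehat{\mathcal V}\}$. With either repair, your Step 3 algebra and the absorption under \eqref{eq:learning-large-K} go through as you describe.
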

\begin{proof}
Retain the abbreviations
\[
\widehat{\mathtt{Var}}:=\sum_{k,h}\mathbb V(\widehat P_{h,s_h^k,a_h^k}^k,V_{h+1}^k),
\qquad
\mathtt{Var}:=\sum_{k,h}\mathbb V(P_{h,s_h^k,a_h^k},V_{h+1}^k),
\]
and $\mathtt{BonusSum}:=\sum_{k,h}b_h^k(s_h^k,a_h^k)$.
By the bonus formula, Cauchy--Schwarz, and
Lemma~\ref{lemma:doubling} applied to all $KL$ learning episodes,
\begin{align}
\mathtt{BonusSum}
&=\frac{460}{9}\sum_{k,h}
\sqrt{\frac{\mathbb V(\widehat P_{h,s_h^k,a_h^k}^k,V_{h+1}^k)
\log(1/\delta')}{\max\{N_h^k(s_h^k,a_h^k),1\}}}
+\frac{544H}{9}\sum_{k,h}\frac{\log(1/\delta')}
{\max\{N_h^k(s_h^k,a_h^k),1\}}\nonumber\\
&\leq\frac{460}{9}\sqrt{2SAH\log_2(2KL)\log(1/\delta')\,
\widehat{\mathtt{Var}}}
+\frac{1088}{9}SAH^2\log_2(2KL)\log\frac1{\delta'}.
\label{eq:boundt2o-temp}
\end{align}
The variance identities, telescoping over $h$, and
$V_h^k(s_h^k)\leq r_h^k(s_h^k,a_h^k)
+\langle\widehat P_{h,s_h^k,a_h^k}^k,V_{h+1}^k\rangle
+b_h^k(s_h^k,a_h^k)$ imply
\begin{align*}
\widehat{\mathtt{Var}}
&\leq\sum_{k,h}\langle\widehat P_{h,s_h^k,a_h^k}^k-P_{h,s_h^k,a_h^k},(V_{h+1}^k)^2\rangle\\
&\quad+\sum_{k,h}\langle P_{h,s_h^k,a_h^k}-e_{s_{h+1}^k},(V_{h+1}^k)^2\rangle
+2H\mathtt{BonusSum}+2KLH^2,\\
\mathtt{Var}
&\leq\sum_{k,h}\langle P_{h,s_h^k,a_h^k}-e_{s_{h+1}^k},(V_{h+1}^k)^2\rangle\\
&\quad+2H\sum_{k,h}\max\{\langle\widehat P_{h,s_h^k,a_h^k}^k-P_{h,s_h^k,a_h^k},V_{h+1}^k\rangle,0\}
+2H\mathtt{BonusSum}+2KLH^2.
\end{align*}
Here we used $0\leq V_h^k\leq H$, $\sum_{k,h}r_h^k(s_h^k,a_h^k)\leq KLH$,
and $x^2-y^2\leq2H\max\{x-y,0\}$ for $x,y\in[0,H]$.

Apply Lemma~\ref{lemma:context_con} to the auxiliary value family and
to its squared, normalized family. The two uniform events have total
failure probability at most $2\delta'$. Inclusion of $0$ allows the
first event to bound the positive parts of the model errors, by
selecting $0$ whenever the inner product is negative; it also bounds
their signed sum. For the second family use
$\mathbb V(p,V^2/H)\leq4\mathbb V(p,V)$.
Because $\delta'\leq(2KL)^{-1}$,
$\log(2(KLM+1))\lesssim\log(M/\delta')$.
The remaining squared-value martingale is at most
$2H^2\sqrt{KLH\log(1/\delta')}\leq KLH^2$, except on an event of
probability $\delta'$; the last inequality follows from
\eqref{eq:learning-large-K} with a sufficiently large implied constant.
Thus, on a common event of probability at least $1-3\delta'$,
\begin{align*}
\max\{\mathtt{Var},\widehat{\mathtt{Var}}\}
\lesssim{}&
H\sqrt{SAH\log_2^3(2KL)\log\frac{M}{\delta'}\,
\mathtt{Var}}
+SAH^3\log_2^3(2KL)\log\frac{M}{\delta'}\\
&+KLH^2+H\mathtt{BonusSum}.
\end{align*}
Insert~\eqref{eq:boundt2o-temp} and apply Young's inequality to absorb
the terms involving $\sqrt{\mathtt{Var}}$ and
$\sqrt{\widehat{\mathtt{Var}}}$ into the left-hand side. This gives
\[
\max\{\mathtt{Var},\widehat{\mathtt{Var}}\}
\lesssim KLH^2+SAH^3\log_2^3(2KL)\log\frac{M}{\delta'}
\lesssim KLH^2,
\]
where the last step uses~\eqref{eq:learning-large-K}.
Finally, substitute into~\eqref{eq:boundt2o-temp}; the same condition
absorbs its linear term into the square-root term. This proves the claim.
\end{proof}

\subsection{Statement and Proof of Lemma~\ref{lemma:diva}}
\begin{lemma}\label{lemma:diva}
Under the hypotheses of Lemma~\ref{lemma:bd_bonus}, on the same event
of probability at least $1-3\delta'$,
\[
\sum_{k,h}\langle\widehat P_{h,s_h^k,a_h^k}^k-P_{h,s_h^k,a_h^k},V_{h+1}^k\rangle
\lesssim\sqrt{SAH^3KL\log_2^3(2KL)\log\frac{M}{\delta'}}.
\]
\end{lemma}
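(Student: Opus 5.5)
Lemma~\ref{lemma:diva} follows directly from the uniform profile event of Lemma~\ref{lemma:context_con}, applied to the auxiliary optimistic value family $\mathcal X_{h,\mathcal I}=\{\overline V_{h+1}^{k,m,\mathcal I}:k\in[KL],m\in[M]\}\cup\{0\}$ with $Z=KLM+1$. This is the same event already used inside the proof of Lemma~\ref{lemma:bd_bonus}, so no new failure probability is incurred.

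For the realized profile $\widehat{\mathcal I}$, backward induction gives $\overline V_{h+1}^{k,m,\widehat{\mathcal I}}=\overline V_{h+1}^{k,m}$. Hence the choice $X_h^k:=V_{h+1}^k=\overline V_{h+1}^{k,m^k}$ is an admissible selection from $\mathcal X_{h,\widehat{\mathcal I}}$. The selection depends on the sampled $m^k$, and therefore on $p_\ell$ and on replay outcomes, but the event is uniform over all selections, so this causes no difficulty. On that event, \eqref{eq:profile-concentration} yields
\[
\sum_{k,h}\langle\widehat P_{h,s_h^k,a_h^k}^k-P_{h,s_h^k,a_h^k},V_{h+1}^k\rangle
\lesssim\sqrt{\log_2(2KL)\,\mathtt{Var}\,\Lambda}+H\log_2(2KL)\,\Lambda,
\]
where $\Lambda:=SAH\log_2^2(2KL)\log(2(KLM+1))+\log(1/\delta')$.

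\textbf{Simplifying the logarithmic factor.} Since $\delta'\le(2KL)^{-1}$, we have $\log(2(KLM+1))\lesssim\log(M/\delta')$. Therefore
\[
\Lambda\lesssim SAH\log_2^2(2KL)\log\frac{M}{\delta'}.
\]

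\textbf{Bounding the variance term.} The main input is the variance bound $\mathtt{Var}\lesssim KLH^2$. Lemma~\ref{lemma:bd_bonus} establishes exactly this on the same $1-3\delta'$ event, under the burn-in condition \eqref{eq:learning-large-K}. Substituting it, the square-root term becomes
\[
\sqrt{KLH^2\cdot SAH\log_2^3(2KL)\log(M/\delta')}=\sqrt{SAH^3KL\log_2^3(2KL)\log(M/\delta')}.
\]

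\textbf{Absorbing the linear term.} The linear term is $SAH^2\log_2^3(2KL)\log(M/\delta')$. Write $x:=SAH\log_2^3(2KL)\log(M/\delta')$, so the linear term equals $Hx$ and the square-root term equals $\sqrt{H^3KLx}$. Then $Hx\le\sqrt{H^3KLx}$ holds exactly when $x\le HKL$. This is implied by \eqref{eq:learning-large-K}, since $KL\gtrsim x/H\cdot H$ and $H\ge1$. Both terms are therefore bounded by the claimed rate.

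The only delicate point is the measurability that justifies applying Lemma~\ref{lemma:context_con} to adaptively selected vectors. That was settled when the auxiliary family was constructed: for a fixed profile, these vectors depend only on the rewards and on streams at later stages. Once that is granted, the argument above is plug-in bookkeeping.
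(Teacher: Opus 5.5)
Your proposal is correct and follows the paper's own route. You invoke the value-family event of Lemma~\ref{lemma:context_con}, which is already paid for in Lemma~\ref{lemma:bd_bonus}, then simplify $\log(2(KLM+1))\lesssim\log(M/\delta')$, substitute $\mathtt{Var}\lesssim KLH^2$, and absorb the linear term via \eqref{eq:learning-large-K}.

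There is one small slip in the final step. With $x=SAH\log_2^3(2KL)\log(M/\delta')$, the square-root term is $H\sqrt{KLx}$, not $\sqrt{H^3KLx}$. Absorbing $Hx$ therefore requires $x\lesssim KL$ rather than $x\le HKL$. That is exactly what \eqref{eq:learning-large-K} asserts, so the conclusion is unaffected.
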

\begin{proof}
The value-family concentration event used in the preceding proof
already gives
\begin{align*}
&\sum_{k,h}\langle\widehat P_{h,s_h^k,a_h^k}^k-P_{h,s_h^k,a_h^k},V_{h+1}^k\rangle\\
&\lesssim
\sqrt{SAH\log_2^3(2KL)\log\frac{M}{\delta'}\,
\mathtt{Var}}
+SAH^2\log_2^3(2KL)\log\frac{M}{\delta'}.
\end{align*}
Use $\mathtt{Var}\lesssim KLH^2$ from Lemma~\ref{lemma:bd_bonus}
and absorb the linear term by~\eqref{eq:learning-large-K}.
No additional failure probability is incurred.
\end{proof}

\section{Numerical Experiments}
\label{app:sec:numerical_experiment}

This appendix provides the experimental setting and evaluation details for the two experiments in Section~\ref{sec:numerical_experiment}.

We consider random nonstationary finite-horizon MDPs with $S=10$,
$A=5$, $H=6$, and initial state $s_1=0$. For each stage $h\in[H]$ and state--action pair $(s,a)$, we independently sample a transition row as
\[
\alpha_{h,s,a}\sim\operatorname{Uniform}(0.15,1.2),
\qquad
P_h(\cdot\mid s,a)\sim
\operatorname{Dirichlet}(\alpha_{h,s,a}\mathbf{1}_{S}).
\]
Thus, transitions vary with $h$, $s$, and $a$, have full support, and
contain no absorbing states.
Independently, each reward function is generated by sampling
\[
u_{m,h,s}\sim\operatorname{Beta}(0.25,0.25),
\qquad r_h^m(s,a)=u_{m,h,s}\quad\text{for all }a.
\]
Rewards vary across reward functions, stages, and states, but are shared across
actions at each state and stage. The transition and reward tables remain
fixed within each instance. Returns are undiscounted sums over $H$ steps,
with no additional terminal reward.

\subsection{Experiment 1: Policy error versus episode budget}
\label{sec:app:episode_error_evaluation}

We fix $M=16$ and test total episode budgets
$B\in\{2^9,2^{10},\ldots,2^{17}\}$.
For each budget, we run each implementation from scratch using 12
independent environment--learner seed pairs, shared across algorithms
and budgets. Optimal and learned-policy values are computed exactly
by backward dynamic programming with the true kernel. We take the maximum policy error over the $M$ rewards within each run and then average over the 12 runs, reporting $\overline{\mathrm{Error}}_{12}(B,16)$ as defined in the main text.

For every algorithm, the number of episodes in Figure~\ref{fig:episode_error} counts all environment episodes actually used across all stages. We denote this total by $B$, consistently with the main text; no additional interaction cost is excluded.
For our algorithm, this includes the initial random warmup and all subsequent learning episodes.
Pointwise 95\% confidence intervals are obtained from the 2.5th and 97.5th percentiles of 10,000 bootstrap means, resampling seed pairs with replacement and preserving their pairing across algorithms and budgets. The comparisons and pointwise intervals concern the mean worst-reward error under these selected configurations; the bootstrap holds configurations fixed and does not account for their selection.

\subsection{Experiment 2: Required episode budget versus reward count}
\label{sec:app:reward_count_evaluation}

We vary $M\in\{4,6,8,12,16,24,32,48,64,96,128\}$ and use eight independent environment--learner seed pairs, with algorithm settings fixed as in Experiment 1. For each environment, we keep the transition kernel fixed and generate 128 reward functions; the instance with $M$ rewards uses the first $M$ functions. Each algorithm is rerun for every $(B,M)$ setting, using the same exact policy evaluation and episode accounting as above.

For each $M$, we estimate the required number of episodes $B$ to achieve $\overline{\mathrm{Error}}_{8}(B,M)\le0.1$, using the same total count of episodes actually used by each algorithm. We first identify the smallest tested budget at which the mean error, and the mean errors at all larger tested budgets, are at most $0.1$. We then estimate the crossing by linear interpolation of log error against log budget between that point and the preceding tested budget. Budget grids are refined near the crossing for each algorithm. Pointwise 95\% confidence intervals use 10,000 paired bootstrap resamples of the eight seed pairs, recomputing the mean error curve and interpolated crossing in each resample and taking their 2.5th and 97.5th percentiles. The plotted quantity is therefore the estimated crossing of the mean error curve, rather than the average of individual runs' crossing budgets.
These pointwise intervals quantify empirical variability under the selected configurations, rather than the simultaneous PAC guarantee.





%
%
%

\section{Lower Bound Construction}
\label{app:lb}

We establish a lower bound for learning policies for a finite collection of reward
functions fixed and revealed before exploration. Our proof
combines a finite-packing and decoding argument
\citep[Appendix~D.4]{jin2020reward} with a time-dependent tree embedding
\citep[Appendix~C.3]{ridel2026improvedboundsrewardagnosticrewardfree}. The construction places independent
hard transitions at $\Theta(SH)$ time--leaf pairs while sharing the same
physical states (see Figure~\ref{fig:lb-construction}). 

\begin{proposition}
\label{prop:finite-known-reward-lb}
There exist universal constants $c_s,c_{\mathrm{lb}}>0$ such that the
following holds. Suppose that $A\geq2$, $L\geq16$, $
 \log(AL)+1\leq c_sS$, $ H\geq4\bigl(1+\lceil\log_2S\rceil\bigr)$ and $
 0<\epsilon\leq\frac1{192}$. 
There exist a class $\mathcal P_L$ of time-inhomogeneous transition
kernels on at most $S$ states, with $A$ actions and horizon $H$, and a
fixed reward class
$\mathcal R_L\subset[0,1]^{\mathcal S\times\mathcal A\times[H]}$
satisfying $|\mathcal R_L|\leq SHA^2L^2$, with the following property.
Any algorithm that is given $\mathcal R_L$ before exploration, stops
after a possibly random number $K$ of online episodes, and satisfies
\[
 \mathbb P_P\!\left(
  V_{P,r}^{\star}-V_{P,r}^{\widehat\pi_r}\leq\epsilon
  \quad\text{for every }r\in\mathcal R_L
 \right)\geq\frac12
 \quad\text{for every }P\in\mathcal P_L
\]
must obey
\[
 \sup_{P\in\mathcal P_L}\mathbb E_P[K]
 \geq c_{\mathrm{lb}}\frac{SAH^3\log L}{\epsilon^2}.
\]
\end{proposition}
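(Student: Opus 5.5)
The plan is to reduce simultaneous learning over $\mathcal R_L$ to identifying many independent hidden bits. We embed a binary tree of depth $\lceil\log_2 S\rceil$, which uses at most $H/4$ steps, to route the agent to $\Theta(S)$ leaf states. Following the time-dependent embedding of \citet{ridel2026improvedboundsrewardagnosticrewardfree}, we reuse the same physical leaves at $\Theta(H)$ distinct time steps, which gives $\Theta(SH)$ time--leaf pairs $(h,s)$. At each pair we place the standard hard bandit-like transition. There are $A$ actions, and the transition to a ``good'' absorbing state is $1/2+\epsilon'\,\theta_{h,s}(a)$, where the hidden parameter $\theta_{h,s}$ selects a special action (or one of $L$ variants). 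Reaching the good state yields reward for the remaining $\Theta(H)$ steps. We take $\epsilon'\asymp\epsilon/H$ so that a mistake costs $\Theta(\epsilon)$ in value once the reward is localized.

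The reward class $\mathcal R_L$ has $|\mathcal R_L|\le SHA^2L^2$. It is indexed by a pair $(h,s)$ together with a small code word, so each reward isolates one time--leaf pair, or a group of pairs, and tests a specific hypothesis about the hidden parameter there. This mimics the finite-packing and decoding argument of \citet[Appendix~D.4]{jin2020reward}. If $\widehat\pi_r$ is $\epsilon$-optimal for every $r$ in the group, a decoder recovers $\theta_{h,s}$ from the outputs. The $\log L$ factor enters because $\theta_{h,s}$ ranges over a set of size $\Theta(L)$ built from a Gilbert--Varshamov-type packing in $\{$actions$\}^{\Theta(\log L)}$, or via $L$ candidate reward labellings. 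Identifying it therefore requires $\Omega(\log L)$ nats of information.

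Next comes the information accounting. Let $n_{h,s,a}$ be the expected number of visits to pair $(h,s)$ with action $a$. Each visit yields KL divergence $O(\epsilon'^2)=O(\epsilon^2/H^2)$ between neighbouring hypotheses. We apply Fano's inequality, or its version for a random stopping time via the divergence decomposition $\mathrm{KL}=\sum \mathbb E[n]\,\mathrm{kl}$, per pair. Success probability $\ge 1/2$ then forces, for most pairs, $\sum_a n_{h,s,a}\gtrsim A\log L\cdot H^2/\epsilon^2$. Summing over the $\Theta(SH)$ pairs gives a total of $\Omega(SAH^3\log L/\epsilon^2)$ transitions at hard pairs. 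Each episode visits at most $O(1)$ time--leaf pairs at the hard layers, because the tree routing commits the agent to one leaf per reuse block. Bounding visits per episode by a constant then gives the claimed bound on $\mathbb E_P[K]$.

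The main obstacle is the per-episode visit budget combined with the coupling across reuse times. If the agent can visit many time--leaf pairs in one episode, the sum over $SH$ pairs overcounts. The construction must therefore make transitions out of each hard layer absorb into the good or bad sink, as in \citet{ridel2026improvedboundsrewardagnosticrewardfree}, so that each episode contributes to exactly one pair. I would first verify this with a padding/waiting action that selects the time of entry. The conditions $H\ge 4(1+\lceil\log_2 S\rceil)$, $\log(AL)+1\le c_sS$ and $\epsilon\le 1/192$ are exactly what make the tree, the packing, and the constraint $\epsilon'\le 1/4$ fit.
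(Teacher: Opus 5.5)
Your outer skeleton matches the paper: tree routing to $\Theta(S)$ leaves, reuse at $\Theta(H)$ entry times via a waiting state, absorption after the leaf so that each episode hits at most one hard row, localizing each reward to one time--leaf pair, and $\epsilon'\asymp\epsilon/H$. The gap is the information gadget at each pair, which cannot produce the $\log L$ factor, and that factor is the whole content of the proposition.

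\textbf{The binary sink cannot carry $\log L$.} With a good/bad sink, action $a$ induces a Bernoulli law, which is a one-parameter family. An $\epsilon$-optimal output can tell two hypotheses for $p_a$ apart only if the success probabilities differ by $\Omega(\epsilon/H)$, since the value gap is $R_t|p_a-p_a'|$ with $R_t\asymp H$. Then $L$ such values span an interval of length $\Omega(L\epsilon/H)$. Their average per-visit KL to any common reference is therefore $\Omega(L^2\epsilon^2/H^2)$, not $O(\epsilon^2/H^2)$, and Fano's $\log L$ requirement yields only about $\log L\cdot H^2/(L^2\epsilon^2)$ visits. So this gadget gives at best the single-reward $SAH^3/\epsilon^2$ bound.

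\textbf{The factor $A$ is also unsupported.} Even granting a hypothesis set of size $\Theta(L)$ per pair, Fano demands only $\log L$ nats per pair. Hence the claim $\sum_a n_{h,s,a}\gtrsim A\log L\cdot H^2/\epsilon^2$ does not follow. Getting $A\log L$ requires a product hypothesis space with an independent index in $[L]$ for every action. Your description of the reward class as ``a small code word'' also leaves the count $SHA^2L^2$ unexplained.

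\textbf{The missing idea.} You need the high-dimensional packing of \citet[Appendix~D.4]{jin2020reward}, placed on the absorbing states.
\begin{itemize}
\item Use $2d$ sinks with $d=\lceil 1000(\log(AL)+1)\rceil\le S/4$. This is exactly where the hypothesis $\log(AL)+1\le c_sS$ is used.
\item Take balanced, nearly orthogonal sign vectors $v_{a,j}\in\{-1,1\}^{2d}$.
\item Set hard rows $q_{a,J_a}=(\mathbf 1+\alpha v_{a,J_a})/(2d)$ with a hidden tuple $J\in[L]^A$ per time--leaf pair and $\alpha=96\epsilon/H$.
\end{itemize}
Each $q_{a,j}$ lies within KL $\alpha^2$ of $q_0=\mathbf 1/(2d)$. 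Near-orthogonality keeps different $j$'s separated, which is possible only in high dimension.

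The decoder rewards on the sinks are $\nu_{a,a',j,j'}=\frac12\mathbf 1+\frac13 v_{a,j}+\frac16 v_{a',j'}$. Once every localized reward is solved to accuracy $\alpha/24$, they recover $J_a$ as the unique $j$ whose output action distribution at the leaf puts mass above $1/2$ on $a$ for all $a'\neq a$ and $j'$. These $A(A-1)L^2$ decoders per pair are what give $|\mathcal R_L|\le SHA^2L^2$.

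The paper then closes with one global step. Against the reference kernel with every hard row replaced by $q_0$, the stopped-transcript KL is at most $\alpha^2\mathbb E_\theta[K]$. Exact recovery of $\theta\in[L]^{TnA}$ with probability $1/2$ then forces $\alpha^2\,\mathbb E[K]\ge\frac14\,TnA\log L$, which gives the claimed bound.
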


\begin{figure}[ht]
    \centering
    \includegraphics[width=0.98\linewidth]{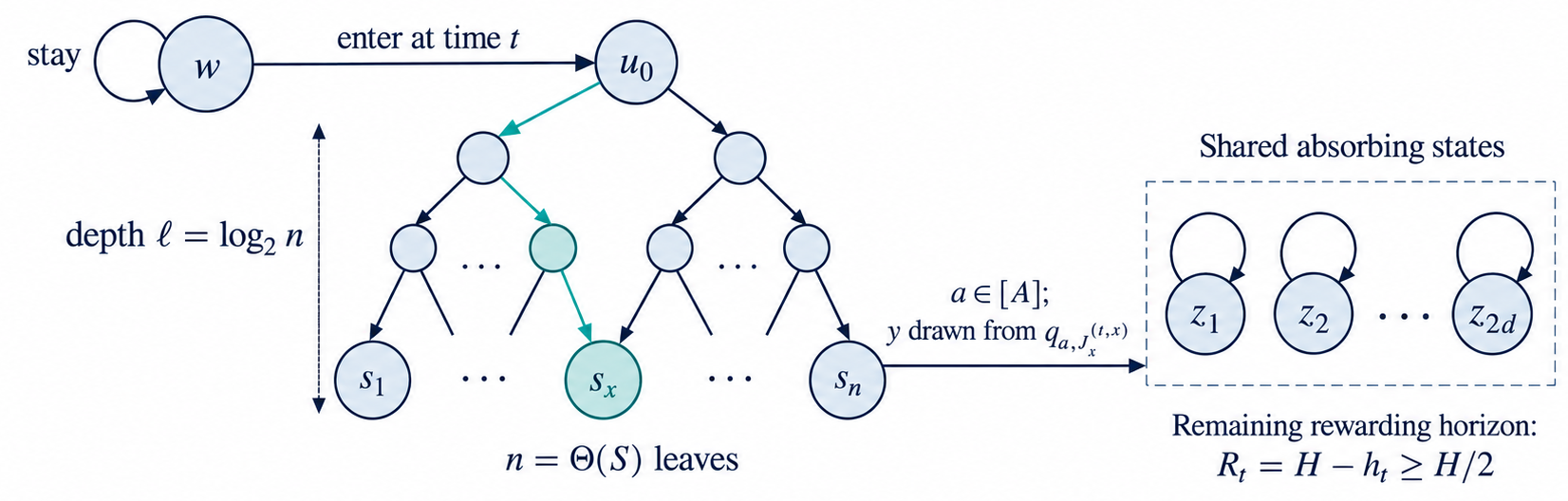}

    \medskip
    \begin{minipage}[t]{0.485\linewidth}
        \centering
        \includegraphics[width=\linewidth]{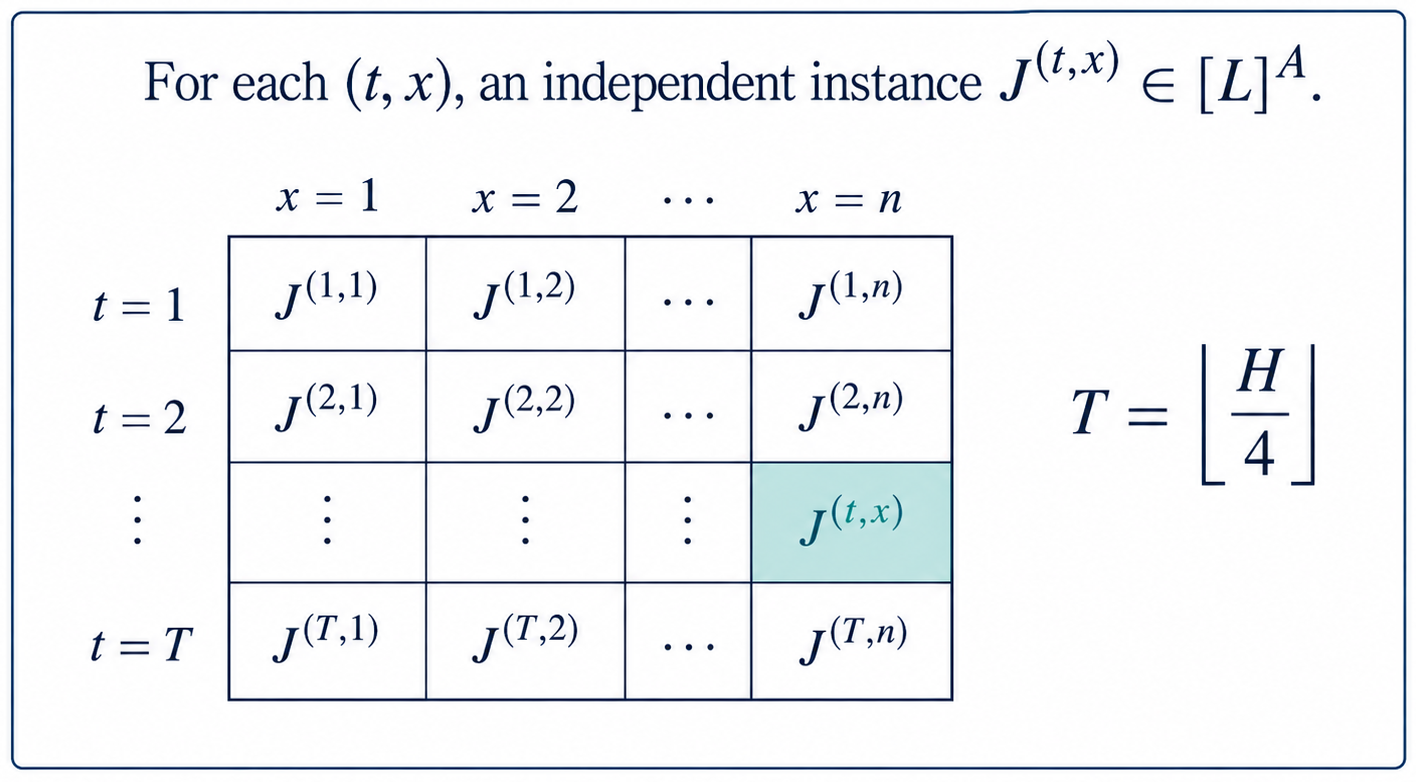}
    \end{minipage}\hfill
    \begin{minipage}[t]{0.485\linewidth}
        \centering
        \includegraphics[width=\linewidth]{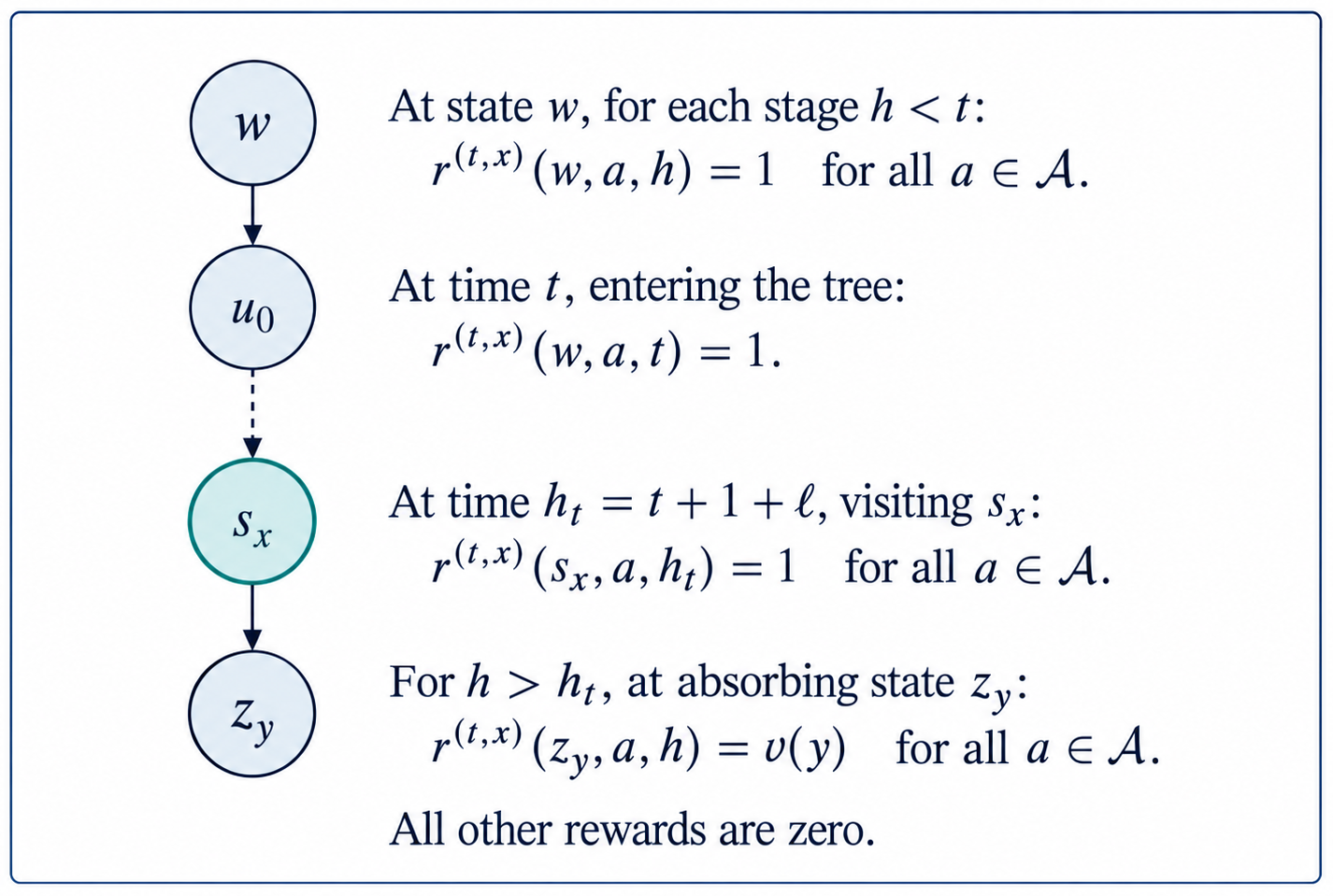}
    \end{minipage}
    \caption{The lower-bound construction.
    \textbf{(a)} A waiting state and a deterministic binary tree lead to
    shared absorbing states. At stage $h_t=t+1+\ell$, action $a$ at leaf
    $s_x$ has transition distribution $q_{a,J_a^{(t,x)}}$.
    \textbf{(b)} Each time--leaf pair $(t,x)$ has its own hidden tuple
    $J^{(t,x)}\in[L]^A$; different times share the same physical states.
    \textbf{(c)} The publicly known reward $r_{t,x,\nu}$ localizes the
    policy at the designated pair before evaluating the absorbing-state
    reward $\nu$. The entry reward at stage $t$ applies only to actions
    $a\neq1$, whereas action $1$ stays at $w$.}
    \label{fig:lb-construction}
\end{figure}

\begin{proof}
Without loss of generality, we assume $\sup_{P\in\mathcal P_L}\mathbb E_P[K]$ is finite. Set $
 \alpha=\frac{96\epsilon}{H}$, so $ H\alpha\leq\frac12.$

\paragraph{Step 1: the construction of leaf nodes.}
Fix $\gamma=1/10$ and
$d=\lceil1000(\log(AL)+1)\rceil$.
The balanced packing lemma
\citep[Lemma~D.6]{jin2020reward} provides vectors
$v_{a,j}\in\{-1,1\}^{2d}$, indexed by $(a,j)\in[A]\times[L]$,
such that
\begin{equation}
 \langle v_{a,j},\mathbf1\rangle=0,\qquad
 |\langle v_{a,j},v_{a',j'}\rangle|<2\gamma d
 \quad\text{whenever }(a,j)\neq(a',j').
 \label{eq:lb-packing}
\end{equation}
Our choice of $d$ satisfies that lemma's condition $
 2\log L\leq\gamma^2d-\log(4d)-2\log A$. 
Taking $c_s$ sufficiently small also ensures $d\leq S/4$.
Define
\begin{equation}
 q_0=\frac{\mathbf1}{2d},\qquad
 q_{a,j}=\frac{\mathbf1+\alpha v_{a,j}}{2d},
 \label{eq:lb-q}
\end{equation}
and the finite collection of decoder rewards
\begin{equation}
 \mathcal V_L=
 \left\{
   \nu_{a,a',j,j'}
   :=\frac12\mathbf1+\frac13v_{a,j}+\frac16v_{a',j'}:
   a\neq a',\ j,j'\in[L]
 \right\}.
 \label{eq:lb-decoder-rewards}
\end{equation}
Then $q_{a,j}\in\Delta(2d)$,
$\mathcal V_L\subset[0,1]^{2d}$, and
$|\mathcal V_L|\leq A(A-1)L^2$.
For a hidden tuple $J\in[L]^A$, action $a$ produces one observation
with distribution $q_{a,J_a}$.

The decoding argument
\citep[proof of Lemma~D.8 and Lemma~D.11]{jin2020reward}
has the following property: if distributions
$\{\lambda_\nu\in\Delta(A):\nu\in\mathcal V_L\}$ satisfy
\begin{equation}
 \max_a q_{a,J_a}^{\top}\nu
 -\sum_a\lambda_\nu(a)q_{a,J_a}^{\top}\nu
 \leq\frac\alpha{24}
 \quad\text{for every }\nu\in\mathcal V_L,
 \label{eq:lb-decoder-condition}
\end{equation}
then they determine $J$ exactly. More precisely, $J_a$ is the unique
$j\in[L]$ such that
\begin{equation}
 \lambda_{\nu_{a,a',j,j'}}(a)>\frac12
 \quad\text{for every }a'\neq a\text{ and }j'\in[L].
 \label{eq:lb-decoder}
\end{equation}

\paragraph{Step 2: embedding at time--leaf pairs.}
Let $n$ be the largest power of two not exceeding $S/4$, and write
$\ell=\log_2n$ and $T=\lfloor H/4\rfloor$.
Construct an initial waiting state $w$, a complete binary tree with
root $u_0$ and leaves $s_1,\ldots,s_n$, and shared absorbing states
$z_1,\ldots,z_{2d}$.

At $w$, action $1$ stays and every other action enters $u_0$.
At each internal tree state, action $1$ moves to the left child and
every other action moves to the right child. These transitions are
deterministic and known. Thus, $A\geq2$ suffices, while all $A$
actions are available at every leaf.

For each $(t,x)\in[T]\times[n]$, assign a hidden tuple
$J^{(t,x)}\in[L]^A$. A trajectory entering the tree at stage $t$
reaches a leaf at stage $
 h_t=t+1+\ell$.
At this stage, define
\begin{equation}
 P_{h_t}(z_y\mid s_x,a)=q_{a,J_a^{(t,x)}}(y),
 \qquad a\in[A],\quad y\in[2d].
 \label{eq:lb-hard-transition}
\end{equation}
At every leaf stage outside $\{h_t:t\in[T]\}$, the transition
distribution is $q_0$. Each $z_y$ self-loops under every action.
This defines $\mathcal P_L$ as the hidden tuples vary.
Figure~\ref{fig:lb-construction}(a)--(b) illustrates the shared
physical MDP and its time--leaf indexing.

We use a separate waiting state in the tree embedding
\citep[Appendix~C.3]{ridel2026improvedboundsrewardagnosticrewardfree}, so that two actions suffice
for both waiting and routing. Its state count is $
 1+(2n-1)+2d=2n+2d\leq S$. 
Moreover, $n=\Theta(S)$, $T=\Theta(H)$, and the horizon condition gives $
 R_t:=H-h_t\geq H/2.$ 
Each episode contains at most one informative transition: after
reaching a leaf, the process is absorbed. The time indices introduce
no additional physical states.

\paragraph{Step 3: fixed rewards and localization.}
For each $(t,x,\nu)\in[T]\times[n]\times\mathcal V_L$, define the
reward illustrated in Figure~\ref{fig:lb-construction}(c):
\begin{equation}
 r_{t,x,\nu}(s,a,h)=
 \begin{cases}
  1,      & s=w,\ h<t,\\
  1,      & s=w,\ h=t,\ a\neq1,\\
  1,      & s=s_x,\ h=h_t,\\
  \nu(y), & s=z_y,\ h>h_t,\\
  0,      & \text{otherwise}.
 \end{cases}
 \label{eq:lb-known-rewards}
\end{equation}
Let $\mathcal R_L$ contain all these rewards. They depend only on
the fixed packing and known routing, not on the hidden tuples.
Consequently, the whole class may be revealed before exploration, and
\begin{equation}
 |\mathcal R_L|
 \leq TnA(A-1)L^2
 \leq SHA^2L^2.
 \label{eq:lb-reward-count}
\end{equation}

Fix $r=r_{t,x,\nu}$ and let $G=\{s_{h_t}=s_x\}$.
On $G$, the non-absorbing rewards sum to exactly $t+1$; on $G^c$,
they sum to at most $t$. Also, balance of the packing and
$\|q_{a,j}-q_0\|_1=\alpha$ imply
\begin{equation}
 q_0^{\top}\nu=\frac12,\qquad
 \left|q_{a,j}^{\top}\nu-\frac12\right|\leq\frac\alpha2.
 \label{eq:lb-center}
\end{equation}
The event $G$ is determined by the waiting and routing decisions,
before any random leaf transition. Thus, conditioning on $G^c$
does not select favorable absorbing-state outcomes. Since absorbing
rewards are zero through stage $h_t$, even an early exit gives at
most $R_t$ rewarded absorbing stages.

It follows that a policy reaching the target pair and choosing its
best action has value at least $
 t+1+\frac{R_t(1-\alpha)}2$, 
whereas the conditional value on $G^c$ is at most $
 t+\frac{R_t(1+\alpha)}2$. 
Their difference is at least
\begin{equation}
 1-R_t\alpha\geq1-H\alpha\geq\frac12.
 \label{eq:lb-localization-gap}
\end{equation}
In particular, an optimal policy reaches the designated pair with
probability one, and
\[
 V_{P,r}^{\star}
 =t+1+R_t\max_a q_{a,J_a^{(t,x)}}^{\top}\nu.
\]

For any policy $\pi$, put $p=\mathbb P_P^{\pi}(G)$ and let $\lambda$
be its conditional action distribution at $(s_x,h_t)$, arbitrary
when $p=0$. Define
\[
 \Delta_\nu(\lambda)
 =
 \max_a q_{a,J_a^{(t,x)}}^{\top}\nu
 -
 \sum_a\lambda(a)q_{a,J_a^{(t,x)}}^{\top}\nu.
\]
The conditional expected value on $G$ is
$V_{P,r}^{\star}-R_t\Delta_\nu(\lambda)$; on $G^c$, it is at most
$V_{P,r}^{\star}-1/2$. Therefore,
\begin{equation}
 V_{P,r}^{\star}-V_{P,r}^{\pi}
 \geq
 \frac{1-p}{2}
 +
 pR_t\Delta_\nu(\lambda).
 \label{eq:lb-localization-decomposition}
\end{equation}
If $\pi$ is $\epsilon$-optimal, then
\begin{equation}
 p\geq1-2\epsilon\geq\frac12,\qquad
 \Delta_\nu(\lambda)
 \leq\frac{\epsilon}{pR_t}
 \leq\frac{4\epsilon}{H}
 =\frac\alpha{24}.
 \label{eq:lb-local-accuracy}
\end{equation}

For Markov policies, $\lambda=\pi_{h_t}(\cdot\mid s_x)$; more
generally, it is determined by the output policy and the known
deterministic prefix. No unknown transition is needed to compute it.
Applying \eqref{eq:lb-decoder} separately at each pair shows that
simultaneous $\epsilon$-optimality for $\mathcal R_L$ permits exact
recovery of the entire hidden-index array.

\paragraph{Step 4: Putting it all together.}
Let $\Theta=[L]^{TnA}$. Each $\theta$ specifies the transition distributions for all $TnA$
state--action pairs associated with the $Tn$ time-expanded leaf nodes.
Let $\mathbb P_{\theta}$ denote this transition kernel. As a
comparison, let $\mathbb P_0$ be the reference transition kernel
when every hard transition is replaced by $q_0$, with the same
known reward class. Choose $\mathcal{P}_{L} = \{ \mathbb{P}_{\theta}| \theta \in \Theta\}\cup \{\mathbb{P}_0\}$.

Let $N_{t,x,a}$ count visits to the associated hard transition before the algorithm stops. Let $\mathbb{Q}_{\theta}$ and $\mathbb{Q}_{0}$ denote the probability laws of the algorithm's stopped interaction transcript under transition kernels $P_{\theta}$ and $P_{0}$, respectively, with the same publicly known reward class. The transcript includes the stopping time and the algorithm's output. The adaptive KL chain rule, applied to the stopped transcript
\citep[proof of Lemma~D.7]{jin2020reward}, gives
\begin{align}
 D_{\mathrm{KL}}\!\left(
   \mathbb Q_\theta
   \,\middle\|\,
   \mathbb Q_0
 \right)
 &\leq
 \sum_{t,x,a}
 \mathbb E_\theta[N_{t,x,a}]
 D_{\mathrm{KL}}\!\left(
   q_{a,\theta_{t,x,a}}\|q_0
 \right)
 \notag\\
 &\leq
 \alpha^2
 \mathbb E_\theta\!\left[
   \sum_{t,x,a}N_{t,x,a}
 \right]
 \leq
 \alpha^2\mathbb E_\theta[K].
 \label{eq:lb-transcript-kl}
\end{align}
Here $D_{\mathrm{KL}}(q_{a,j}\|q_0)\leq\alpha^2$ follows from
$\log(1+u)\leq u$ and \eqref{eq:lb-packing}.

Denote the uniform distribution on
$\Theta$ as $\mathrm{Uni}(\Theta)$ and choose $\theta$ following $\mathrm{Uni}(\Theta)$.
Recall that the algorithm provides an estimator $\widehat\theta$ with error probability at most $1/2$ under every $\theta$. 
By \citep[Lemma~D.10]{jin2020reward} and
\eqref{eq:lb-transcript-kl}, we have that 
\begin{align*}
 \alpha^2
 \mathbb E_{\theta\sim \mathrm{Uni}(\Theta)}\mathbb{E}_{\theta}
[K] \geq& \frac{1}{|\Theta|} \sum_{\theta \in \Theta} D_{\mathrm{KL}}( \mathbb{Q}_{\theta}|| \mathbb{Q}_0)
 \geq
 \frac12\log|\Theta|-\log2\\
 =& 
 \frac12TnA\log L-\log2\geq
 \frac14TnA\log L.
\end{align*}
Consequently,
\[
 \sup_{P\in\mathcal P_L}\mathbb E_P[K]
 \geq \mathbb{E}_{\theta \sim \mathrm{Uni}(\Theta)}\mathbb E_\theta[K]
 \geq
 c\frac{TnA\log L}{\alpha^2}
 \geq
 c'\frac{SAH^3\log L}{\epsilon^2},
\]
using $n=\Theta(S)$, $T=\Theta(H)$, and
$\alpha=96\epsilon/H$.
\end{proof}

\begin{corollary}[Dependence on the number of rewards]
\label{cor:finite-known-reward-lb-M}
For a sufficiently large universal constant $C$, suppose that $
 A\geq2$, $
 S\geq C(1+\log A)$, $
 H\geq4(1+\lceil\log_2S\rceil)$,
 $0<\epsilon\leq\frac1{192}$ 
and $M\geq(CSHA^2)^2$. There exists a fixed, publicly known class
of exactly $M$ reward functions for which any algorithm achieving
simultaneous $\epsilon$-optimality with probability at least $1/2$
requires
\[
 \Omega\!\left(
   \frac{SAH^3}{\epsilon^2}\min\{S,\log M\}
 \right)
\]
expected online episodes in the worst case. In particular, if
$\log M\leq S$, the lower bound becomes
\[
 \Omega\!\left(\frac{SAH^3\log M}{\epsilon^2}\right).
\]
\end{corollary}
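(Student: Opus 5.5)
The plan is to derive Corollary~\ref{cor:finite-known-reward-lb-M} from Proposition~\ref{prop:finite-known-reward-lb} by choosing the packing parameter $L$ as a function of $M$ and $S$, then padding the reward class to exactly $M$ functions.

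\textbf{Choice of $L$.} Proposition~\ref{prop:finite-known-reward-lb} needs $L\geq16$ and $\log(AL)+1\leq c_sS$, and produces a class of size $|\mathcal R_L|\leq SHA^2L^2$. I will take
\[
L=\min\set{\lfloor\sqrt{M/(SHA^2)}\rfloor,\ \lfloor e^{c_sS/2}/A\rfloor}.
\]
The first branch gives $|\mathcal R_L|\leq M$. The second branch gives $\log(AL)\leq c_sS/2$, and $c_sS/2+1\leq c_sS$ once $S\geq 2/c_s$, which $S\geq C(1+\log A)$ guarantees.

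\textbf{Checking $L\geq16$.} The hypothesis $M\geq(CSHA^2)^2$ makes the first branch at least $\sqrt{C^2SHA^2}\geq C\geq16$. The second branch is at least $16$ when $c_sS/2\geq\log(16A)$, which again follows from $S\geq C(1+\log A)$ for large $C$.

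\textbf{Padding to exactly $M$ rewards.} I will add $M-|\mathcal R_L|$ dummy rewards, for example distinct constant rewards $r\equiv c_i$ with $c_i\in[0,1]$. Under a constant reward every policy is optimal, so $\epsilon$-optimality for these is automatic. The success event for the padded class therefore implies the success event for $\mathcal R_L$, and the lower bound $\Omega(SAH^3\log L/\epsilon^2)$ transfers unchanged.

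\textbf{Bounding $\log L$.} If the first branch is active, then
\[
\log L\geq\tfrac12\log(M/(SHA^2))-1.
\]
Since $M\geq(CSHA^2)^2$ gives $SHA^2\leq\sqrt M/C$, this is at least $\tfrac14\log M-1\gtrsim\log M$. If the second branch is active, then $\log L\geq c_sS/2-\log A-1\gtrsim S$, using $S\geq C(1+\log A)$. In either case $\log L\gtrsim\min\{S,\log M\}$. The final sentence of the corollary is the specialization to $\log M\leq S$.

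\textbf{Expected difficulties.} The only genuine obstacle is the case bookkeeping in bounding $\log L$: making sure that the floors and the $-\log A$ loss are absorbed by the constant $C$. The remaining hypotheses of Proposition~\ref{prop:finite-known-reward-lb}, namely $H\geq4(1+\lceil\log_2S\rceil)$ and $\epsilon\leq1/192$, carry over verbatim.
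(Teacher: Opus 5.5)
Your proposal is correct and follows essentially the same route as the paper. You take $L$ as the floor of the minimum of $\sqrt{M/(SHA^2)}$ and an $e^{\Theta(S)}$ cap, check the hypotheses of Proposition~\ref{prop:finite-known-reward-lb}, and show $\log L\gtrsim\min\{S,\log M\}$ according to which branch is active. The differences are minor and, if anything, in your favor: you put the factor $1/A$ into the exponential branch, whereas the paper uses $\exp(c_sS/2)$ and absorbs $\log A+1\le c_sS/2$ via $S\ge C(1+\log A)$; and you explicitly pad $\mathcal R_L$ with constant rewards to get exactly $M$ functions, a step the paper leaves implicit.
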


\begin{proof}
Choose
\[
 L=
 \left\lfloor
 \min\left\{
   \sqrt{\frac{M}{SA^2H}},
   \exp\!\left(\frac{c_sS}{2}\right)
 \right\}
 \right\rfloor.
\]
For sufficiently large $C$, the assumptions imply $L\geq16$ and
$\log A+1\leq c_sS/2$, so that $\log(AL)+1\leq c_sS$.
Moreover, $|\mathcal R_L|\leq SHA^2L^2\leq M$, and
\[
 \log L
 =
 \Omega\!\left(
   \min\left\{S,\log\frac{M}{SA^2H}\right\}
 \right)
 =
 \Omega\!\left(\min\{S,\log M\}\right),
\]
where the last equality uses $M\geq (SA^2H)^2$.
The conclusion follows by Proposition~\ref{prop:finite-known-reward-lb}.

\end{proof}


\clearpage


\end{document}